\renewcommand{\theequation}{\thesection\arabic{equation}}
\begin{document}


\renewcommand{\baselinestretch}{2}

\markright{ \hbox{\footnotesize\rm
}\hfill\\[-13pt]
\hbox{\footnotesize\rm
}\hfill }

\markboth{\hfill{\footnotesize\rm FIRSTNAME1 LASTNAME1 AND FIRSTNAME2 LASTNAME2} \hfill}
{\hfill {\footnotesize\rm FILL IN A SHORT RUNNING TITLE} \hfill}

\renewcommand{\thefootnote}{}
$\ $\par


\fontsize{12}{14pt plus.8pt minus .6pt}\selectfont \vspace{0.8pc}
\centerline{\large\bf Identifiability of Hierarchical Latent Attribute Models }
\vspace{.4cm} 
\centerline{Yuqi Gu and Gongjun Xu} 
\vspace{.4cm} 
\centerline{\it Columbia University and University of Michigan}
 \vspace{.55cm} \fontsize{9}{11.5pt plus.8pt minus.6pt}\selectfont


\begin{quotation}
\noindent {\it Abstract:}
Hierarchical Latent Attribute Models (HLAMs) are a family of discrete latent variable models that are attracting increasing attention in educational, psychological, and behavioral sciences. 
The key ingredients of an HLAM include a binary structural matrix and a directed acyclic graph specifying hierarchical constraints on the configurations of latent attributes. 
These components encode practitioners' design information and carry important scientific meanings.
Despite the popularity of HLAMs, the fundamental identifiability issue remains unaddressed.
The existence of the attribute hierarchy graph leads to degenerate parameter space, and the potentially unknown structural matrix further complicates the identifiability problem.
This paper addresses this issue of identifying the latent structure and model parameters underlying an HLAM. 
We develop sufficient and necessary identifiability conditions. 
These results directly and sharply characterize the different impacts on identifiability cast by different attribute types in the graph.
The proposed conditions not only provide insights into diagnostic test designs under the attribute hierarchy, but also serve as tools to assess the validity of an estimated HLAM.

\vspace{9pt}
\noindent {\it Key words and phrases:}
Identifiability, Attribute hierarchy graph, $\QQ$-matrix, Cognitive diagnosis.
\end{quotation}\par

\def\thefigure{\arabic{figure}}
\def\thetable{\arabic{table}}

\renewcommand{\theequation}{\thesection.\arabic{equation}}

\fontsize{12}{14pt plus.8pt minus .6pt}\selectfont

\afterpage{\cfoot{\thepage}}
\section{Introduction}
Latent attribute models are a family of discrete latent variable models popular in multiple scientific disciplines, including cognitive diagnosis in educational assessments \citep{junker2001cognitive,davier2008general,HensonTemplin09, rupp2010diagnostic,dela2011,wang2018tracking}, psychiatric diagnosis of mental disorders \citep{templin2006measurement,dela2018}, and epidemiological and medical measurement studies \citep{wu2017nested,o2019causes}.  
Based on subjects' responses (often binary) to a set of items, a latent attribute model enables fine-grained inference on subjects' statuses of an underlying set of latent traits; this further allows for clustering the population into interpretable subgroups based on the inferred attribute patterns. 
In a latent attribute model, each attribute is often assumed binary and carries specific scientific meaning. For example, in  an educational assessment, the observed responses are students' correct or wrong answers to a set of test items, and the latent attributes indicate students' binary states of mastery or deficiency of certain skills measured by the assessment \citep{junker2001cognitive,davier2008general,rupp2010diagnostic}.
On top of this, the dependence among the latent attributes can be further modeled to incorporate practitioners' prior knowledge. 
A particularly popular and powerful way of modeling attribute dependence in educational and psychological studies is to enforce hard constraints on the hierarchical configurations of the attributes. 
Specifically, educational experts often postulate some prerequisite relations exist among the binary skill attributes, such that mastering some skills serve as a prerequisite for mastering some others \citep{leighton2004attribute}.
Such a family of \textit{Hierarchical Latent Attribute Models} (HLAMs) are attacting increasing attention in {cognitive diagnostic applications} in recent years; see \cite{leighton2004attribute,Gierl,templin2014hierarchical,wang2020hier}. 
Despite the popularity, the fundamental identifiability issue of HLAMs remains unaddressed.
This paper fills this gap and provides the identifiability theory for HLAMs.

\textcolor{black}{HLAMs have close connections with many other popular statistical and machine learning models. Since each possible configuration of the discrete attributes represents a pattern defining a latent subpopulation, the HLAM can be viewed as a structured {mixture model} \citep{mclachlan2004finite} and gives rises to {model-based clustering} \citep{fraley2002} of multivariate categorical data. 
}
HLAMs are related to several multivariate discrete latent variable models in the machine learning literature, including latent tree graphical models \citep{choi2011tree,mourad2013tree}, restricted Boltzmann machines \citep{hinton2002training, larochelle2008rbm} and restricted Boltzmann forests (RBForests) \citep{rbf2010},   latent feature models  \citep{ghahramani2006ibp}, but with the following two key differences.
First, the observed variables are assumed to have certain structured dependence on the latent attributes. This dependence is summarized by a structural matrix, the so-called $\QQ$-matrix \citep{Tatsuoka1990}, to encode scientific interpretations. 
The second key feature is that HLAMs incorporate the hierarchical structure among the latent attributes. 
For instance, in educational cognitive diagnosis, the possession of certain skill attributes are often assumed to be the prerequisite for possessing some others \citep{leighton2004attribute, templin2014hierarchical}. 
Such hierarchical structures differ from the latent tree models in that, the latter use a probabilistic graphical model to model the hierarchical tree structure among latent variables, while in an HLAM the hierarchy is a directed acyclic graph (DAG) encoding hard constraints on  allowable configurations of latent attributes. This type of hierarchical constraints in HLAMs have a similar flavor as those of RBForests proposed in \cite{rbf2010}, though the DAG-structure constraints in an HLAM are more flexible than a forest-structure (i.e., group of trees) one in an RBForest (see Example \ref{exp-rbf}).

 {The real-world applications of HLAMs are challenged by the identifiability issues of the attribute hierarchy, the structural $\QQ$-matrix, and other model parameters.
First, in many applications, the attribute hierarchy and the structural $\QQ$-matrix are specified by the domain experts based on their understanding of the diagnostic tests. Such specification could be subjective and may not reflect the underlying truth. 
Second, the attribute hierarchy and the $\QQ$-matrix may even be entirely unknown in exploratory data analysis, where researchers hope to identify and estimate these quantities directly from the observed data.}
In both of the above situations, a fundamental yet open question is whether and when the attribute hierarchy and even the structural $\QQ$-matrix are identifiable. 
The identifiability of HLAMs has a close connection to the uniqueness of tensor decompositions, as the probability distribution of an HLAM can be written as a mixture of  highly constrained higher-order tensors. Particularly, HLAMs can be viewed as a special family of restricted latent class models, with the  $\QQ$-matrix imposing constraints on the model parameters. However, related works on the identifiability of latent class models and uniqueness of tensor decompositions  \cite[e.g.][]{allman2009,anand2014} cannot be directly applied to HLAMs due to the constraints induced by the  $\QQ$-matrix.

To tackle identifiability under such structural constraints,
some recent works \citep{xu2017,xu2018,id-dina,fang2019,partial,slam,chen2020scdm}  proposed identifiability conditions for latent attribute models. However, most of them 
\citep{xu2017,xu2018,id-dina,fang2019,chen2020scdm}
considered scenarios without any attribute hierarchy; \cite{partial} assumed both the true $\QQ$-matrix and true configurations of attribute patterns are known and fixed; 
\cite{slam} considered the problem of learning the set of truly existing attribute patterns but assumed the $\QQ$-matrix is correctly specified beforehand. 
All these previous works did not directly take into account the hierarchical graphical structure of the attribute hierarchy, therefore their results can not provide explicit and sharp identifiability conditions for an HLAM.
On the other hand, in the cognitive diagnostic modeling literature, researchers \citep{kc2019,cai2018,heller2019} recently studied the ``completeness'' of the $\QQ$-matrix, a relevant concept to be revisited in Section \ref{sec-main}, under  attribute hierarchy. 
But these results can not ensure identifying uniquely the model parameters that determine the probabilistic HLAM.
In summary, establishing identifiability without assuming any knowledge of the  $\QQ$-matrix and the attribute hierarchy still remains unaddressed in the literature, and it is indeed a technically challenging task.

This paper addresses this identifiability question for popular HLAMs under an arbitrary attribute hierarchy. 
We develop explicit sufficient conditions for identifying the attribute hierarchy, the $\QQ$-matrix, and all the model parameters in an HLAM. 
These sufficient conditions become also necessary when the latent pattern space is saturated with no  hierarchy. 
While for cases where there is a nonempty hierarchy, we discuss the necessity of these individual conditions and relax them in several nontrivial and interesting ways. 
 {Based on these and going further, we then establish the fully general necessary and sufficient identifiability conditions for the attribute hierarchy and all the model parameters under a fixed $\QQ$-matrix.}
Our results in this regard sharply characterize the different roles played by different types of attributes in the attribute hierarchy graph.
The theoretical developments can be used to assess the validity of an estimated HLAM  obtained from any estimation method.
They also provide insights into designing useful diagnostic tests under attribute hierarchy with minimal restrictions.

The rest of the paper is organized as follows. In Section \ref{sec-setup}, we introduce the model setup of the HLAMs. In Section \ref{sec-main}, we present sufficient conditions on identifiability of $\QQ$, attribute hierarchy, and model parameters. 
 {In Section \ref{sec-nece}, to thoroughly investigate how to close the gap between the necessity and sufficiency of the identifiability conditions, we focus on the case where $\QQ$ is fixed and derive the fully general necessary and sufficient conditions for identifying the attribute hierarchy and model parameters.
In Section \ref{sec-multiple}, we provide an extension of the identifiability result 
 to other types of HLAMs that have potentially more parameters than that studied in Sections \ref{sec-main}-\ref{sec-nece}.}
We give a brief discussion in Section \ref{sec-disc}. 
All the technical proofs are presented in the Supplementary Material.

\section{Model Setup and Examples}\label{sec-setup}

This section introduces the model setup of HLAMs. 
We first introduce some notation.  
For an integer $m$, denote $[m]=\{1,2,\ldots,m\}$.
For a set $\mca$, denote its cardinality by $|\mca|$.
Denote the $K\times K$ identity matrix by $I_K$ and
the $K$-dimensional  all-one and all-zero vectors by  $\one_K$ and $\zero_K$, respectively.

An HLAM consists of two types of subject-specific  binary variables, the observed responses $\rr=(r_1,\ldots,r_J)\in\{0,1\}^J$ to   $J$ \textit{items}; and the latent attribute pattern $\aaa=(\alpha_1,\ldots,\alpha_K)\in\{0,1\}^K$, with  $\alpha_k$ indicating  the mastery or deficiency of the $k$th attribute.
In this work, $K$ is assumed known and fixed. This assumption is well suited for the motivating applications in cognitive diagnosis, where the number and also the real-world meanings of the latent attributes are usually known in the context of the application, and it is of interest to identify and learn other quantities from data.
Next, we first describe the distribution of the latent attributes.
Attribute $k$ is said to be the prerequisite of attribute $\ell$ and denoted by $k\to \ell$, if any pattern $\aaa$ with $\alpha_k=0$ and $\alpha_\ell=1$ is ``forbidden'' to exist. This is a common assumption in applications such as cognitive diagnosis {to model subjects' learning process} \citep{leighton2004attribute, templin2014hierarchical}. 
A subject's latent pattern $\bo a$ is assumed to follow a categorical distribution of population proportion parameters $\pp=(p_{\aaa},\,\aaa\in\{0,1\}^K)$, with $p_{\aaa}\geq 0$ and $\sum_{\aaa} p_{\aaa}=1$. 
In particular, any pattern $\aaa$ not respecting the hierarchy is deemed impossible to exist with population proportion $p_{\aaa}=0$.
An attribute hierarchy is a set of prerequisite relations among the $K$ attributes, which we denote by 
{$$\mce=\{k\to\ell:\, \text{attribute }k\text{ is a prerequisite for }\ell\}.$$
{Generally, an attribute hierarchy $\mce$ implies a directed acyclic graph among the $K$ attributes with no directed cycles; this graph constrains which attribute patterns are permissible or forbidden.}
Specifically, any $\mce$ would induce a set of allowable configurations of attribute patterns out of $\{0,1\}^K$, which we denote by $\mca(\mce)$, or simply $\mca$ when it causes no confusion. 
For an arbitrary $\mce$, the all-zero and all-one attribute patterns $\zero_K$ and $\one_K$ always belong to the induced $\mca$.
This is because any prerequisite relation among attributes would not rule out the existence of the pattern possessing no attributes or the pattern possessing all attributes.
When there is no attribute hierarchy among the $K$ attributes, $\mce=\varnothing$ and $\mca=\{0,1\}^K$. } The set $\mca$ is a proper subset of $\{0,1\}^K$ if $\mce\neq\varnothing$. 
An attribute hierarchy determines the sparsity pattern of the vector of proportion parameters $\pp$, {because $p_{\aaa}>0$ if and only if $\aaa\in\mca(\mce)$, that is, if and only if $\aaa$ is permissible under $\mce$}.
In this sense, a nonempty attribute hierarchy necessarily leads to degenerate parameter space for $\pp$, as certain entries of $\pp$ will be constrained to zero.

 {In the practice of studying the attribute hierarchy in cognitive diagnosis, the case of $k \to \ell$ and $\ell \to k$ would indicate the two skill attributes $\alpha_k$ and $\alpha_\ell$ are prerequisites for each other, which is not interpretable and hence is not used in modeling. 
Similarly, the case of having any cycle in the attribute hierarchy graph in the form of $k_1\to k_2 \to \cdots \to k_m \to k_1$ is also not interpretable.
Therefore, a   directed acyclic graph (DAG) structure among the latent attributes is well suited to describe the hierarchical nature of attributes that carry these substantive meanings.}
We emphasize here that the  DAG  of attribute hierarchy in an HLAM has a different nature from that in a Bayesian network \citep{pearl1986, nielsen2009bn}. This is because the DAG of attribute hierarchy encodes hard constraints on what variable patterns are permissible/forbidden, while the DAG in a Bayesian network encodes the conditional independence relations among the variables. 
Instead, a neural network model RBForests proposed by \cite{rbf2010} shares a more similar spirit to the HLAM in this regard.
The following example illustrates this in detail.

\begin{example}
	\label{exp-rbf}
\normalfont{
Fig \ref{fig-hier} presents several hierarchies   with the size of the associated $\mca$, where a dotted arrow from $\alpha_k$ to $\alpha_\ell$  indicates $k\to \ell$ and $k$ is a direct prerequisite for $\ell$. Note that under the hierarchy in Fig \ref{fig-hier}(a), the prerequisite $1\to 3$ is an indirect prerequisite implied by $1\to 2$ (or 4) and 2 (or 4) $\to 3$.
%
In the literature, the RBForests proposed in \cite{rbf2010} also introduce hard constraints on allowable configurations of the binary hidden (latent) variables in a restricted Boltzmann machine (RBM). The modeling goal of RBForests is to make computing the probability mass function of observed variables tractable, while not having to limit the number of latent variables. Specifically, in an RBForest, latent variables are grouped in several full and complete binary trees of a certain depth, with variables in a tree respecting the following constraints: if a latent variable takes value zero with $\alpha_k=0$, then all latent variables in its left subtree must take value $d_l$; while if $\alpha_k=1$, all latent variables in its right subtree must take value $d_r$ ($d_l=d_r=0$ in \cite{rbf2010}).  
The attribute hierarchy model in an HLAM has a similar spirit to RBForests, and actually includes the RBForests as a special case. For instance, the hierarchy in Fig \ref{fig-hier}(c) is equivalent to a tree of depth 3 in an RBForest with $d_l=1-d_r=0$.
HLAMs allow for more general attribute hierarchies to encourage better interpretability (DAG instead of trees). 
Another fundamental difference between HLAMs and RBForests is the different joint model of the observed variables and the latent ones. An RBForest is an extension of an RBM, and they both use   the same energy function, while HLAMs model the distribution differently, as to be specified below.
}
\end{example}

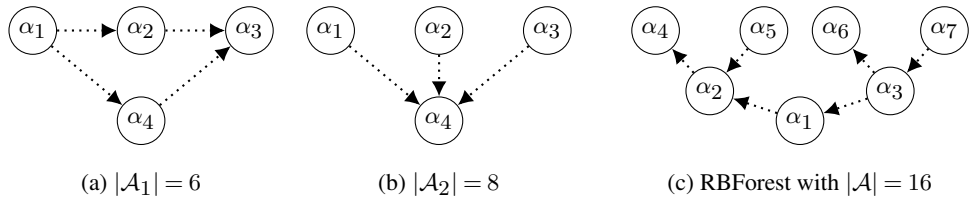
\begin{figure}[h!]
\centering
\begin{tikzpicture}[scale=1.2]

    \node (h5)[hidden] at (0, 1) {$\alpha_1$};
    \node (h6)[hidden] at (1.2, 1) {$\alpha_2$};
    \node (h7)[hidden] at (2.4, 1) {$\alpha_3$};
    \node (h8)[hidden] at (1.2, 0) {$\alpha_4$};
    
    \draw[pre] (h5) -- (h6);
    \draw[pre] (h6) -- (h7);
    \draw[pre] (h5) -- (h8);
    \draw[pre] (h8) -- (h7);

    \node (h14-1)[hidden] at (3.3, 1) {$\alpha_1$};
    \node (h14-2)[hidden] at (4.5, 1) {$\alpha_2$};
    \node (h14-3)[hidden] at (5.7, 1) {$\alpha_3$};
    \node (h14-4)[hidden] at (4.5, 0) {$\alpha_4$};
    
    \draw[pre] (h14-1) -- (h14-4);
    \draw[pre] (h14-2) -- (h14-4);
    \draw[pre] (h14-3) -- (h14-4);

%
    

    \node (rbf1)[hidden] at (8.5, 0) {$\alpha_1$};
    \node (rbf2)[hidden] at (7.5, 0.33) {$\alpha_2$};
    \node (rbf3)[hidden] at (9.5, 0.33) {$\alpha_3$};
    \node (rbf4)[hidden] at (6.9, 1) {$\alpha_4$};
    \node (rbf5)[hidden] at (8.1, 1) {$\alpha_5$};
    \node (rbf6)[hidden] at (8.9, 1) {$\alpha_6$};
    \node (rbf7)[hidden] at (10.1, 1) {$\alpha_7$};
    
    \draw[pre] (rbf1) -- (rbf2);
    \draw[pre] (rbf3) -- (rbf1);
    \draw[pre] (rbf2) -- (rbf4);
    \draw[pre] (rbf5) -- (rbf2);
    \draw[pre] (rbf3) -- (rbf6);
    \draw[pre] (rbf7) -- (rbf3);

    \node at (1.2, -0.7) {(a) $|\mca_1|=6$};
    
    \node at (4.5, -0.7) {(b) $|\mca_2|=8$}; 
    
    
    \node at (8.5, -0.7) {(c) RBForest with $|\mca|=16$};  
\end{tikzpicture}

\caption{Different attribute hierarchies among binary attributes, the first two for $K=4$ (where $|\{0,1\}^4|=16$) and the last for $K=7$ (where $|\{0,1\}^7|=128$). For example, the set of allowed attribute patterns under  hierarchy (a) is $\mca_1=\{\zero_4,\,(1000),\,(1100),\,(1001),\,(1101),\,\one_4\}$.}
\label{fig-hier}
\end{figure}
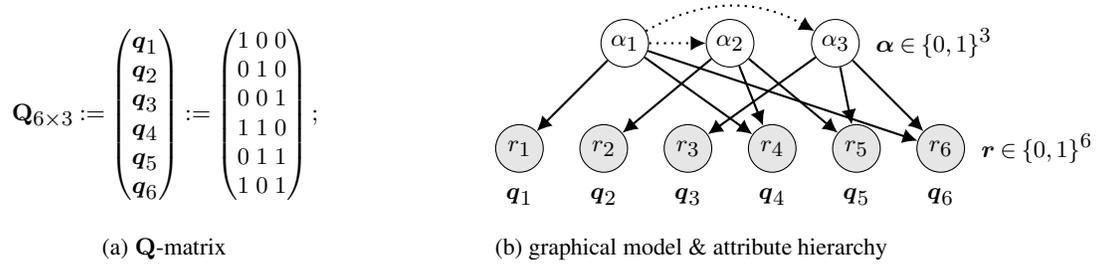
\begin{figure}[h!]
\centering
\begin{minipage}{0.35\textwidth}
$$
\QQ_{6\times 3}:=
\begin{pmatrix}
\bq_1\\
\bq_2\\
\bq_3\\
\bq_4\\
\bq_5\\
\bq_6
\end{pmatrix}:=\begin{pmatrix}
1 & 0 & 0\\
0 & 1 & 0\\
0 & 0 & 1\\
1 & 1 & 0\\
0 & 1 & 1\\
1 & 0 & 1\\
\end{pmatrix};\quad\quad\quad\quad
$$
\end{minipage}
\hfill
\begin{minipage}{0.55\textwidth}
\centering
    \begin{tikzpicture}[scale=1.4]

    \node (v1)[neuron] at (0, 0) {$r_1$};
    \node (v2)[neuron] at (0.8, 0) {$r_2$};
    \node (v3)[neuron] at (1.6, 0) {$r_3$};
    \node (v4)[neuron] at (2.4, 0) {$r_4$};
    \node (v5)[neuron] at (3.2, 0) {$r_5$};
    \node (v6)[neuron] at (4, 0)   {$r_6$};
    \node[right=0.1cm of v6] (v) {$\rr \in \{0, 1\}^6$};

    \node (h1)[hidden] at (1.0, 1) {$\alpha_1$};
    \node (h2)[hidden] at (2.0, 1) {$\alpha_2$};
    \node (h3)[hidden] at (3.0, 1) {$\alpha_3$};
    \node[right=0.1cm of h3] (h) {$\aaa \in \{0, 1\}^3$};

    \node [below=0.1cm of v1] {$\bq_1$};
    \node [below=0.1cm of v2] {$\bq_2$};
    \node [below=0.1cm of v3] {$\bq_3$};
    \node [below=0.1cm of v4] {$\bq_4$};
    \node [below=0.1cm of v5] {$\bq_5$};
    \node [below=0.1cm of v6] {$\bq_6$};    
    

    \draw[qedge] (h1) -- (v1) node [midway,above=-0.12cm,sloped] {}; 
    
    \draw[qedge] (h1) -- (v4) node [midway,above=-0.12cm,sloped] {}; 
    
    \draw[qedge] (h1) -- (v6) node [midway,above=-0.12cm,sloped] {};

    \draw[qedge] (h2) -- (v2) node [midway,above=-0.12cm,sloped] {}; 
    
    \draw[qedge] (h2) -- (v4) node [midway,above=-0.12cm,sloped] {}; 
    
    \draw[qedge] (h2) -- (v5) node [midway,above=-0.12cm,sloped] {}; 
    
    \draw[qedge] (h3) -- (v3) node [midway,above=-0.12cm,sloped] {}; 
    
    \draw[qedge] (h3) -- (v5) node [midway,above=-0.12cm,sloped] {};

    \draw[qedge] (h3) -- (v6) node [midway,above=-0.12cm,sloped] {}; 
    
    \draw[->,dotted,thick] (h1) -- (h2);
    
    \path
(h1) edge[->,dotted,bend left,thick] node [right] {} (h3);
    
\end{tikzpicture}
\end{minipage}

\vspace{3mm}
\begin{minipage}{0.35\textwidth}
\centering
	(a) $\QQ$-matrix\quad\quad\quad\quad
\end{minipage}
\hfill
\begin{minipage}{0.55\textwidth}
	(b) graphical model \& attribute hierarchy
\end{minipage}

\caption{A binary structural matrix and the corresponding graphical model with {(solid)} directed edges from the latent to the observed variables representing dependencies. Below the observed variables in (b) are the row vectors of $\QQ_{6\times 3}$, i.e., the item loading vectors.
 The dotted arrows indicate the attribute hierarchy with $\mce=\{1\to 2,\,1\to 3\}$ and $\mca=\{\zero_3,\,(100),\,(110),\,(101),\,\one_3\}$.}
\label{fig-q63}
\end{figure}


On top of the model of the latent attributes,
an HLAM uses a $J\times K$ binary matrix $\QQ=(q_{j,k})$ to encode the structural relationship between the $J$ observed response variables and the $K$ latent attributes.
In cognitive diagnostic assessments, the matrix $\QQ$ is often specified by domain experts to summarize which abilities each test item targets on \citep{Tatsuoka1990,davier2008general,rupp2010diagnostic,dela2011}.
 Specifically, $q_{j,k}=1$ if and only if the response $r_j$ to the $j$th item has statistical dependence on latent variable $\alpha_k$.
 The  distribution of   $r_j$, i.e., $\theta_{j,\aaa}:=\mathbb P(r_j=1\mid \aaa)$, only depends on  its ``parent'' latent attributes $\alpha_k$'s that are connected to $r_j$, i.e., $\{\alpha_k: q_{j,k}=1\}$.
 The structural matrix $\QQ$ naturally induces a bipartite graph connecting the latent and the observed variables, with edges  corresponding  to entries of ``1'' in   $\QQ=(q_{j,k})$. 
  Fig \ref{fig-q63} presents an example of a structural matrix $\QQ$ and its corresponding directed graphical model between the $K=3$ latent attributes and $J=6$ observed variables. The solid edges from the latent attributes to the observed variables are specified by $\QQ_{6\times 3}$.
 As also can be seen from the graphical model, the observed responses to the $J$ items are conditionally independent given the latent attribute pattern $\aaa$.

In the psychometrics literature, various HLAMs adopting the $\QQ$-matrix concept have been proposed with the goal of diagnosing targeted  attributes \citep{junker2001cognitive,templin2006measurement,davier2008general,HensonTemplin09,dela2011}. They are often called the cognitive diagnostic models. 
The general family of latent attribute models are also widely used in other scientific areas including psychiatric evaluation \citep{templin2006measurement, jaeger2006distinguishing, dela2018} with the goal of diagnosing patients' various mental disorders, and epidemiological diagnosis of disease etiology \citep{wu2016partially, wu2017nested, o2019causes}.
These applications share the common key interest in identifying the multivariate discrete latent attributes.
 
In this work, we mainly focus on a popular and fundamental type of modeling assumptions under such a framework; as to be revealed soon, this modeling assumption also has close connections to Boolean matrix factorization  \citep{rava2016boolean,ormachine2017}.
Specifically, we mainly consider the HLAMs that assume a logical \textit{ideal response} $\Gamma_{\bq_j,\aaa}$ given an attribute pattern $\aaa$ and an item loading vector $\bq_j$ in the noiseless case. Then item-level noise parameters are further introduced to account for uncertainty of observations. 
The following are two popular ways to define the ideal response.

The first is the Deterministic Input Noisy output ``And'' gate (DINA) model   \citep{junker2001cognitive,de2004higher,davier2014dina,culpepper2015bayesian}. The DINA model assumes a conjunctive  relationship among the attributes. The ideal response of attribute pattern $\aaa$ to item~$j$ is
	\begin{flalign}
		\label{eq-and}
		\text{(DINA ideal response)} && \Gamma^{\myand}_{\bq_j,\aaa} = \prod_{k=1}^K \alpha_k^{q_{j,k}}, \qquad  \qquad \qquad  \qquad &&
	\end{flalign}
	 {where the convention $0^0\equiv 1$ is adopted.}
	To interpret, $\Gamma_{\bq_j,\aaa}$ in \eqref{eq-and} indicates whether a  pattern $\aaa$ possesses all the attributes specified by the item loading vector $\bq_j$. 
	This conjunctive relationship is often assumed for diagnosis of students' mastery or deficiency of skill attributes in educational assessments, and $\Gamma_{\bq_j,\aaa}$ naturally  indicates whether a student with $\aaa$ has mastered all the  attributes required by the test item $j$.
With $\Gamma_{\bq_j,\aaa}$ in \eqref{eq-and}, the   uncertainty of the responses  is further modeled by the item-specific Bernoulli parameters 
\begin{align}
	\label{eq-item}
	 {\theta^+_j=\mathbb P(r_j=1\mid \Gamma_{\bq_j,\aaa}=1),}
	\quad 
	\theta^-_j=\mathbb P(r_j=1\mid \Gamma_{\bq_j,\aaa}=0),
\end{align}
where $\theta^+_j>\theta^-_j$ is assumed for identifiability.
For each item $j$, the  ideal response $\Gamma_{\bq_j,\bo\cdot}$, if viewed as a function of attribute patterns, divides the patterns into two latent classes $\{\aaa:\, \Gamma_{\bq_j,\aaa}=1\}$ and $\{\aaa:\,\Gamma_{\bq_j,\aaa}=0\}$; and  for these two latent classes, respectively, the {item parameters} quantify the noise levels of the response to item $j$ that deviates from the ideal response. Note that the $\theta_{j,\aaa}$ equals either $\theta_{j}^+$ or $\theta_j^-$, depending on the ideal response $\Gamma_{j,\aaa}$.
Denote the {item parameter} vectors by $\ttt^+=(\theta_1^+,\ldots,\theta_J^+)^\top$ and $\ttt^-=(\theta_1^-,\ldots,\theta_J^-)^\top$. 

	The second model is the  Deterministic Input Noisy output ``Or'' gate (DINO) model   \citep{templin2006measurement}. The DINO model assumes the following  ideal response
	\begin{flalign}\label{eq-or}
	  \text{(DINO ideal response)} 
		&&
		\Gamma^{\myor}_{\bq_j,\aaa} = I(q_{j,k}=\alpha_k=1~\text{for at least one}~k). &&
	\end{flalign}
Such a disjunctive relationship is often assumed in psychiatric measurement of mental disorders \citep{templin2006measurement,dela2018}. 
With $\Gamma_{\bq_j,\aaa}$ in \eqref{eq-or}, the uncertainty of the responses  is   modeled by the item-specific   parameters as defined in \eqref{eq-item}.
In the Boolean matrix factorization literature, a similar model was proposed  \citep{rava2016boolean,ormachine2017}. 
Adapted to the terminology here, \cite{ormachine2017}  assumes  the ideal response takes the form 
\begin{flalign}\label{eq-orequiv}
\text{(equivalent to \eqref{eq-or})} 
		&&	
	\Gamma^{\myor}_{\bq_j,\aaa}=1-\prod_{k=1}^K(1-\alpha_k q_{j,k}), &&
\end{flalign}
which is equivalent to the definition in \eqref{eq-or}, while the model in \cite{ormachine2017} constrains  all the item-level noise parameters to be the same. 

The equivalent formulation \eqref{eq-orequiv} of the DINO model shows that its ideal response is symmetric about the two vectors $\aaa$ and $\bq_j$; while for the DINA model this is not the case. 
We next first focus on the asymmetric DINA-based HLAMs, as they are very popular and fundamental models widely used in the motivating applications of educational cognitive diagnosis.  {We also study the identifiability of DINO-based HLAMs and another type of HLAMs in Section \ref{sec-multiple}.}
For notational simplicity, we next write  $\Gamma^{\myand}_{\bq_j,\aaa}$ simply as $\Gamma_{\bq_j,\aaa}$.
Denote by $\Gamma(\QQ,\mce)$ the $J\times |\mca(\mce)|$ ideal response matrix with the $(j,\aaa)$th entry being $\Gamma_{\bq_j,\aaa}$ for $\aaa\in\mca(\mce)$.
Under the introduced setup of DINA-based HLAMs, the probability mass function of the $J$-dimensional random response vector $\RR$ takes the form of
\begin{align*}
	 P(\RR=\rr\mid \QQ,\mce,\ttt^+,\ttt^-,\pp) = 
	 \sum_{\aaa\in\mca(\mce)} p_{\aaa}
	&~ \prod_{j=1}^J 
	     [\Gamma_{\bq_j,\aaa}\theta_{j}^+ + 
	     (1-\Gamma_{\bq_j,\aaa})\theta_{j}^-]^{r_j} \\ \notag
	    &\times 
	     [1-\Gamma_{\bq_j,\aaa}\theta_{j}^+ - 
	     (1-\Gamma_{\bq_j,\aaa})\theta_{j}^-]^{1-r_j},
\end{align*}
where $\rr\in\{0,1\}^J$ is an arbitrary response pattern.

\section{Identifiability of $\QQ$, Attribute Hierarchy, and Model Parameters: Establishing Sufficiency}\label{sec-main}
This section presents one main result on the sufficient conditions for identifiability of $\QQ$, $\mathcal E$, and model parameters $\ttt^+$, $\ttt^-$, and $\pp$. 
Following the definition of identifiability in the statistics literature \citep[e.g.,][]{casella2002statistical},
we say that $(\QQ,\mce,\ttt^+,\ttt^-,\pp)$ of an HLAM are identifiable if for any $(\QQ,\mce,\ttt^+,\ttt^-,\pp)$ in the parameter space constrained by $\QQ$ and $\mce$, there are no $(\bar\QQ,\bar\mce,\bar\ttt^+,\bar\ttt^-,\bar\pp)\neq (\QQ,\mce,\ttt^+,\ttt^-,\pp)$ such that
\begin{equation}
\label{eq-orig-def}
\mathbb P(\RR=\rr\mid \bar\QQ,\bar\mce,\bar\ttt^+,\bar\ttt^-,\bar\pp) =\mathbb  P(\RR=\rr\mid \QQ,\mce,\ttt^+,\ttt^-,\pp), ~\forall \rr\in \{0,1\}^J. 
\end{equation}
We point out that in the above definition of identifiability, the alternative vector of proportion parameters $\bar\pp$ is not constrained to have support on $\mca(\mce)$. Instead, the vector $\bar\pp$ should be allowed to have an arbitrary support $\bar\mca$ potentially resulting from an arbitrary $\bar\mce$; the goal of establishing identifiability is indeed to develop conditions to ensure that as long as \eqref{eq-orig-def} holds, one must have $\bar\pp=\pp$ and $\bar\mce=\mce$ from the equations in \eqref{eq-orig-def}.

We further introduce some notation and important concepts. 
Since an attribute hierarchy is a directed acyclic graph, the $K$ attributes $\{1,2,\ldots,K\}$ can be arranged in a {topological order} such that the prerequisite relation ``$\to$'' only happens in one direction; in other words, we can assume without loss of generality that $k\to\ell$ only if $k<\ell$.
Define the following \textit{reachability matrix} $\EE$ among the $K$ attributes under the attribute hierarchy. The $\EE = (e_{k,\ell})$ is a $K\times K$ binary matrix, where $e_{k,k}=1$ for all $k\in[K]$ and  {$e_{\ell,k}=1$ if attribute $k$ is a direct or indirect prerequisite for attribute $\ell$}.
In cognitive diagnosis, the concept of the reachability matrix was first considered in \cite{tatsuoka1986graph} to represent the direct and indirect relationships between attributes.
It is not hard to see that if the attributes $1,2,\ldots,K$ are in a topological order described earlier, the reachability matrix $\EE$ is a lower-triangular matrix with all the diagonal entries being one.
%
%

Under DINA-based HLAMs, any non-empty attribute hierarchy $\mce$ defines an equivalence relation on the set of all the $\QQ$-matrices. 
To see this, recall $\Gamma(\QQ,\mce)$ denotes the $J\times |\mca(\mce)|$ ideal response matrix. 
If $\Gamma(\QQ_1,\mce) = \Gamma(\QQ_2,\mce)$, then $\QQ_1$ and $\QQ_2$ are said to be in the same $\mce$-induced equivalence class and we denote this by $\QQ_1 \stackrel{\mce}{\sim} \QQ_2$.
The interpretation of this definition is as follows. If under a certain hierarchy $\mce$, two different $\QQ$-matrices lead to identical ideal responses for all the permissible latent patterns in $\mca(\mce)$, then these two $\QQ$-matrices are indistinguishable based on the response data; therefore they should be treated as equivalent.
The following example illustrates how an attribute hierarchy determines a set of equivalent $\QQ$-matrices.
\begin{example}
	\label{exp-q-equiv}
	\normalfont{
	Consider the attribute hierarchy $\mce=\{1\to  2,\,1\to 3\}$ in Fig \ref{fig-q63}, which results in $\mca(\mce)=\{\zero_3,(100),(110),(101),\one_3\}$. The identity matrix $I_3$ is equivalent to the reachability matrix $\EE$ under $\mce$ and
	}
	
	\vspace{-3mm}
	\singlespacing
	\begin{equation}\label{eq-equiv}
			I_3=\begin{pmatrix}
			1 & 0 & 0\\
			0 & 1 & 0\\
			0 & 0 & 1\\
		\end{pmatrix}
		\stackrel{\mce}{\sim}  
		\EE=
		\begin{pmatrix}
			1 & 0 & 0\\
			\textcolor{blue!70!black}{\one} & 1 & 0\\
			\textcolor{blue!70!black}{\one} & 0 & 1\\
		\end{pmatrix}
		\stackrel{\mce}{\sim}  
		\begin{pmatrix}
			1 & 0 & 0\\
			* & 1 & 0\\
			* & 0 & 1\\
		\end{pmatrix},
	\end{equation}

\doublespacing
\normalfont{where the ``$*$'''s in the third matrix above indicate unspecified values, any of which can be either 0 or 1. 
This equivalence is due to that attribute $\alpha_1$ serves as the prerequisite for both $\alpha_2$ and $\alpha_3$, and any item loading vector $\bq_j$ measuring $\alpha_2$ or $\alpha_3$ is equivalent to a modified one that also measures $\alpha_1$, in terms of classifying the  patterns in $\mca$ into two categories $\{\aaa:\,\Gamma_{\bq_j,\aaa}=1\}$ and $\{\aaa:\,\Gamma_{\bq_j,\aaa}=0\}$. 
 {Note that any $\QQ$-matrix equivalent to $I_K$ under the $\mathcal E=\{1\to 2,~1\to 3\}$ must take the form of the third $\QQ$-matrix in \eqref{eq-equiv}.
Under a DINA-based HLAM, if the true $\QQ$-matrix $\QQ^{\true}$ is not known, then any other $\QQ$ with $\QQ \stackrel{\mce}{\sim} \QQ^{\true}$ can not be distinguished from $\QQ^{\true}$ based on the observations, even if the continuous parameters $(\ttt^+, \ttt^-, \pp)$ are all known.
This is because the ideal response matrix $\Gamma(\QQ,\mce)$ is the key latent structure underlying a DINA-based HLAM, and that if $\QQ \stackrel{\mce}{\sim} \QQ^{\true}$ (equivalently, $\Gamma(\QQ,\mce) = \Gamma(\QQ^{\true},\mce)$), then $\QQ$ and $\QQ^{\true}$ are inherently not distinguishable.
} 
} 
\end{example}

Given any attribute hierarchy $\mathcal E$, the equivalence $I_K\stackrel{\mathcal E}{\sim}\EE$ is always true by definition, for which Eq.~\eqref{eq-equiv} in Example \ref{exp-q-equiv} is an example.
Before presenting the theorem on sufficient conditions for identifiability, we introduce two useful operations on a $\QQ$-matrix given an attribute hierarchy $\mathcal E$: the ``densifying'' operation $\mathcal D^{\mce}(\cdot)$ and the ``sparsifying'' operation $\mathcal S^{\mce}(\cdot)$, as follows.
\begin{definition}
	\label{def-b-dense}
	Given an attribute hierarchy $\mathcal E$ and a  matrix $\QQ$, do the following: for any $q_{j,h}=1$ and $k\to h$, set $q_{j,k}$ to ``1'' and obtain a modified matrix $\mathcal D^{\mce}(\QQ)$. This $\mathcal D^{\mce}(\QQ)$ is said to be the ``densified'' version of $\QQ$.
\end{definition}

\begin{definition}
	\label{def-b-sparse}
	Given an attribute hierarchy $\mathcal E$ and a  matrix $\QQ$, do the following: for any $q_{j,h}=1$ and  $k\to h$, set $q_{j,k}$ to ``0'' and obtain a modified matrix $\mathcal S^{\mce}(\QQ)$. This $\mathcal S^{\mce}(\QQ)$ is said to be the ``sparsified'' version of $\QQ$.
\end{definition}

Under the above two definitions, given an attribute hierarchy, there are $\mc D^{\mce}(I_K)=\EE$ and $\mc S^{\mce}(\EE) = I_K$.
{In cognitive diagnosis, the densified $\QQ$-matrix with all the row vectors respecting the attribute hierarchy $\mce$ is also said to satisfy the ``{restricted $\QQ$-matrix design}'' \citep[e.g.,][]{cai2018,tu2019ahm}; for such $\QQ$, there is $\QQ = \mc D^{\mce}(\QQ)$.}
It is worth pointing out that either the sparsifying or the densifying operation modifies $\QQ$ only within a same equivalence class. Indeed, $\mc D^{\mce}(\QQ)$ denotes the densest $\QQ$ with the largest number of ``1''s in the equivalence class, while $\mc S^{\mce}(\QQ)$ denotes the sparsest $\QQ$ with the largest number of ``0''s in the equivalence class.
In the special case with an empty attribute hierarchy, each equivalence class of $\QQ$ contains only one element which is $\QQ$ itself, so $\QQ=\mc D^{\mce}(\QQ)=\mc S^{\mce}(\QQ)$ for $\mce=\varnothing$.
As will be revealed in the following theorem, our identifiability conditions are essentially requirements on the equivalence class of $\QQ$ described using the densifying and sparsifying operations. 


\begin{theorem}\label{thm-main}
	Consider an HLAM under the DINA model an attribute hierarchy $\mce$. 
	Then $(\Gamma(\QQ,\mce),\,\ttt^+,\,\ttt^-,\,\pp)$ are jointly identifiable if the true $\QQ$ satisfies the following conditions.
\begin{enumerate}
    \item[A.] 
    The $\QQ$ contains $K\times K$ submatrix $\QQ^0$ that is equivalent to the identity matrix $I_K$ under the hierarchy $\mce$.
    
    \noindent
    (Without loss of generality,  assume the first $K$ rows of $\QQ$ form $\QQ^0$, and denote the remaining submatrix of $\QQ$ by $\QQ^\star$.)
    
    \item[B.] 
 The $\mathcal S^{\mce}(\QQ)$, sparsified version of $\QQ$, has at least three entries of ``1''s in each column.

    \item[C.] 
    The $\mathcal D^{\mce}(\QQ^\star)$, densified version of the submatrix $\QQ^\star$, contains $K$ distinct column vectors.

\end{enumerate}
Furthermore, Conditions A, B and C are   necessary and sufficient when there exists no hierarchy with $p_{\aaa}>0$ for all $\aaa\in\{0,1\}^K$. 
\end{theorem}

We make several remarks on the relationship between the proposed conditions and existing literature.
\begin{remark}
\normalfont{In the cognitive diagnostic modeling literature, a $\QQ$-matrix is said to be ``complete'' if it can distinguish all the $2^K$ latent attribute profiles \citep{chiu2009}. 
When the latent pattern space $\mca$ is saturated with $\mca = \{0,1\}^K$, the completeness of $\QQ$ is a natural necessary requirement for identifiability. 
When $\mca = \{0,1\}^K$, the $\QQ$-matrix is complete if it contains all the $K$ distinct standard basis vectors as row vectors, that is, $\QQ$ contains an $I_K$. When there exists a certain attribute hierarchy $\mathcal E$ leading to some $\mca\subsetneq \{0,1\}^K$, the requirement for the ``completeness'' of $\QQ$ will change. Recently, \cite{kc2019}, \cite{cai2018}, and \cite{heller2019} studied conditions for the completeness of $\QQ$ under the attribute hierarchy. But these conditions can not ensure the entire probabilistic model structure involving $\QQ$, $\mce$, and parameters $\pp$, $\ttt^+$ and $\ttt^-$ are identifiable and estimable from data. To our knowledge,  {Theorem \ref{thm-main} establishes the first identifiability result under the attribute hierarchy in the literature.}
Condition A in Theorem \ref{thm-main} is equivalent to requiring that the sparsified $\mathcal S^{\mce}(\QQ)$ contains an $I_K$. Therefore, Conditions A and B combined are equivalent to the following statement about $\mathcal S^{\mce}(\QQ)$: 
the $\mathcal S^{\mce}(\QQ)$ contains an $I_K$ and each column of it has at least three entries of ``1''s.
}
\end{remark}

\begin{remark}
\normalfont{As stated in the last part of Theorem \ref{thm-main}, when there is no attribute hierarchy with $\mce=\varnothing$, Conditions A, B, and C become necessary and sufficient for the identifiability of both $\QQ$ and $(\mce,\,\ttt^+,\,\ttt^-,\,\pp)$. In such a special case with $\mce=\varnothing$, \cite{id-Q} established the necessary and sufficient identifiability conditions termed as ``\textit{completeness}'' that requires the true $\QQ$ to contain an identity submatrix $I_K$, ``\textit{repeated-measurement}'' that requires $\QQ$ to have at least three entries of ``1'' in each column, and ``\textit{distinctiveness}'' requiring that in addition to containing an $I_K$, the $\QQ$ should contain distinct column vectors in the remaining submatrix; we denote these three requirements by Conditions A$^0$, B$^0$, and C$^0$, respectively.
Our current conditions A, B, and C in Theorem \ref{thm-main} can be thought of as ``$\mce$-\textit{completeness}'', ``$\mce$-\textit{repeated-measurement}'', ``$\mce$-\textit{distinctiveness}'' given an attribute hierarchy $\mce$.
When $\mce = \varnothing$, the $\mc S^{\mce}(\QQ) = \mc D^{\mce}(\QQ) = \QQ$ holds; as a result, Condition A exactly becomes requiring $\QQ$ itself to contain a submatrix $I_K$; similarly, Conditions B and C exactly reduce to the conditions B$^0$ and C$^0$ on $\QQ$ itself.
Indeed, in such cases with $\mce = \varnothing$, the current conditions of ``$\mce$-\textit{completeness}'', ``$\mce$-\textit{repeated-measurement}'', ``$\mce$-\textit{distinctiveness}'' just reduce to the  ``\textit{completeness}'', ``\textit{repeated-measurement}'', ``\textit{distinctiveness}'' conditions proposed in \cite{id-Q}.
Establishing identifiability under an arbitrary attribute hierarchy $\mce$ as done in Theorem \ref{thm-main} is technically much more challenging than the existing result for $\mce=\varnothing$.
Moreover, in the later Section \ref{sec-nece}, we will thoroughly study 
that under a fixed $\QQ$-matrix, 
how the necessity of the identifiability conditions changes when there is a nonempty hierarchy.
}
\end{remark}

Theorem \ref{thm-main} ensures the discrete ideal response structure $\Gamma(\QQ,\mce)$ and all the associated model parameters $(\ttt^+,\,\ttt^-,\,\pp)$ are identifiable. The following proposition complements this conclusion and further establishes identifiability of $\mce$ and $\QQ$ based on Theorem \ref{thm-main}.

\begin{proposition}\label{prop}
Consider a DINA-based HLAM.
	In addition to Conditions A--C in Theorem \ref{thm-main}, if the true $\QQ$ is known to contain an $I_K$, then $(\mce, \ttt^+,\,\ttt^-,\,\pp)$ are identifiable. 
On the other hand, it is indeed necessary for $\QQ$ to contain an $I_K$ to ensure an arbitrary $\mce$ is identifiable. 
\end{proposition} 

\begin{proposition}
	\label{prop-Q}
	Consider a DINA-based HLAM.
	If Conditions A--C in Theorem \ref{thm-main} are satisfied and the true $\QQ$ is known in part to contain a submatrix $I_K$ for certain $K$ items, then the equivalence class of $\QQ$ defined by the attribute hierarchy $\mce$ is identifiable. 
	That is, the specific $\QQ$ is not strictly identifiable within its equivalence class under any $\mce \neq \varnothing$, but the densified $\mc D^{\mce}(\QQ)$ and the sparsified $\mc S^{\mce}(\QQ)$ are identifiable.
\end{proposition}
The statement in Proposition \ref{prop-Q} that $\QQ$ is identifiable only up to its equivalence class is inherent to all the DINA- or DINO-type HLAMs and it is an inevitable consequence of any nonempty attribute hierarchy $\mce \neq \varnothing$; see Example \ref{exp-q-equiv}. 
But this statement will not undermine the efficacy of the identifiability conclusion, because $\mc D^{\mce}(\QQ)$ and $\mc S^{\mce}(\QQ)$ themselves are still identifiable and provide practical interpretability of the structural matrix.
We next present a toy example illustrating how to apply Theorem \ref{thm-main} to check identifiability.

\begin{example}
	\label{exp-thm1}
	\normalfont{
	Consider the attribute hierarchy $\{\alpha_1\to \alpha_2,\,\alpha_1\to\alpha_3\}$ among $K=3$ attributes as in Fig \ref{fig-q63}. The following $8\times 3$ structural matrix $\QQ$ satisfies Conditions A, B and C in Theorem \ref{thm-main}. In particular, the first 3 rows of $\QQ$ serve as $\QQ^0$ in Condition A, and the last 5 rows serve as $\QQ^\star$.
	 In the following display, the matrix entries modified by the sparsifying operation in Condition B  and the densifying operation in Condition C   are highlighted. The resulting $\mc S^{\mce}(\QQ)$ and $\mc D^{\mce}(\QQ)$ satisfy the requirements in Conditions B and C. So the HLAM associated with $\QQ$ is identifiable.
	  }
	  
	  \singlespacing
		\begin{eqnarray}\label{eq-op-sparse}
		\QQ
		=
		\begin{pmatrix}
			\QQ^0\\
			\hline
			\QQ^\star
		\end{pmatrix}
		=
		\begin{pmatrix}
            &I_3&\\		
            \hline	
			1 & 0 & 0\\
			1 & 0 & 0\\
			1 & 1 & 0\\
			0 & 0 & 1\\
			1 & 1 & 1\\
		\end{pmatrix}
		&
			\stackrel{\text{\normalfont{Sparsify}}}{\Longrightarrow}
		&
		\mc S^{\mce}(\QQ)=\begin{pmatrix}
            &I_3&\\
		    \hline
			1 & 0 & 0\\
			1 & 0 & 0\\
			\textcolor{orange!80!black}{\zero} & 1 & 0\\
			0 & 0 & 1\\
			\textcolor{orange!80!black}{\zero} & 1 & 1\\
		\end{pmatrix};
\\
	\label{eq-op-dense}
		&
			\stackrel{\text{\normalfont{Densify}}}{\Longrightarrow}
		&
		\mc D^{\mce}(\QQ)=\begin{pmatrix}
            &\EE &\\			
            \hline
			1 & 0 & 0\\
			1 & 0 & 0\\
			1 & 1 & 0\\
			\textcolor{blue!70!black}{\one} & 0 & 1\\
			1 & 1 & 1\\
		\end{pmatrix}.
	\end{eqnarray}
	
\doublespacing
\end{example}


When estimating an HLAM with the goal of recovering the ideal response structure $\Gamma(\QQ,\mce)$ and the model parameters, Theorem \ref{thm-main} guarantees that Conditions A, B and C suffice and are close to being necessary. 
 While   the goal is to uniquely determine the attribute hierarchy from the identified $\Gamma(\QQ,\mce)$, the additional condition that $\QQ$ contains an $I_K$ becomes  necessary. 
This phenomenon can be better understood if one relates it to the identification criteria for the factor loading matrix in factor analysis   \citep{anderson1958introduction,bai2012};  the   loading matrix there is  often required to include an identity submatrix or  satisfy certain rank constraints, since otherwise the loading matrix can  not be identifiable due to  rotational indeterminacy.
We point out that developing identifiability theory for HLAMs that can have arbitrarily complex hierarchies is   more difficult than the case without hierarchy, and hence Theorem \ref{thm-main} is a significant  technical advancement over previous works \citep[e.g.,][]{slam,partial}.

As stated in the end of Theorem \ref{thm-main}, Conditions A, B, and C become not only sufficient but also necessary for $\QQ$ and $(\mce,\ttt^+,\ttt^-,\pp)$ to be identifiable when there is no actual hierarchy among attributes. Interestingly, the necessity of these conditions will subtly change when a nonempty attribute hierarchy comes into play. 
Our next section thoroughly investigates these aspects.

\section{Identifiability of Attribute Hierarchy and Model Parameters: Pushing Towards Necessity}\label{sec-nece}
In order to close the gap between necessity and sufficiency, in this section we thoroughly investigate the necessity of the identifiability conditions for $(\mce,\ttt^+,\ttt^-, \pp)$ under the assumption that $\QQ$ is known and fixed.
In the following Subsection \ref{sec-sub-ind}, we first investigate the necessity of the conditions proposed in Section \ref{sec-main} individually, to gain insight into how the necessity changes as the attribute hierarchy changes. 
Then in Subsection \ref{sec-sub-bridge}, we further establish the general necessary and sufficient conditions for identifying the attribute hierarchy and other parameters under an arbitrary hierarchy graph $\mce$.

\subsection{ {Investigating the Necessity of Conditions A, B, C Individually}}\label{sec-sub-ind}
Our first result establishes the necessity of Condition A in Theorem \ref{thm-main}.

\begin{proposition}\label{prop-A}
Consider a DINA-based HLAM.
	Condition A that the sparsified $\mathcal S^{\mce}(\QQ)$ contains an $I_K$ is necessary for  identifiability of $(\Gamma(\QQ,\mce),\,\allowbreak\ttt^+,\ttt^-,\,\pp)$.
\end{proposition}

Proposition \ref{prop-A} shows that Condition A can not be relaxed under any attribute hierarchy.
On the other hand, Condition B and Condition C are more ``local'' in the sense that they regard individual attributes (equivalently, individual columns of the $\QQ$-matrix). Interestingly, it turns out that the necessity of these two conditions highly depends on the role of each attribute in the attribute hierarchy graph.
We next characterize the fine boundary between sufficiency and necessity of identifiability conditions for various types of attributes.
Given any attribute hierarchy graph $\mce$, we define the following four types of attributes.

\begin{definition}[Singleton Attribute]\label{def-anc}
	An attribute $k$ is a ``singleton attribute'' if there \textbf{neither exists} any attribute $h$ such that $k\to h$ \textbf{nor exists} any attribute $\ell$ such that $\ell\to k$.
\end{definition}

\begin{definition}[Ancestor Attribute]\label{def-anc}
	An attribute $k$ is an ``ancestor attribute'' if there \textbf{exists} some attribute $h$ such that $k\to h$ but  \textbf{does not exist} any attribute $\ell$ such that $\ell\to k$.
\end{definition}

\begin{definition}[Leaf Attribute]\label{def-leaf}
	An attribute $k$ is a ``leaf attribute'' if there \textbf{exists} some attribute $\ell$ such that $\ell\to k$ but  \textbf{does not exist} any attribute $h$ such that $k\to h$.
\end{definition}

\begin{definition}[Intermediate Attribute]\label{def-leaf}
	An attribute $k$ is an ``intermediate attribute'' if there  \textbf{exists} some attribute $\ell$ with $\ell\to k$ and  also \textbf{exists} some attribute $h$ with $k\to h$.
\end{definition}

The above four definitions together describe a full categorization of attributes given any attribute hierarchy. In other words, given any $\mce$, an attribute is either a singleton, or an ancestor, or a leaf, or an intermediate attribute.
As a special case, when the attribute pattern space $\mca=\{0,1\}^K$ is saturated, all the $K$ attributes are singleton attributes.

\begin{example}\label{exp-leighton}
\normalfont{
	Leighton \textit{et al.} \citep{leighton2004attribute} is among the first works that considered the attribute hierarchy method for the purpose of cognitive diagnosis. In particular, they presented and named the four different types of hierarchies among $K=6$ attributes, as shown in our Fig \ref{fig-leighton-hier}. 
	In our terminology, in plot (a), attribute 1 is an ancestor attribute, attribute 6 is a leaf attribute, and the remaining attributes 2, 3, 4, 5 are intermediate attributes; in plot (b), the roles of the six attributes are the same as those in plot (a); in plot (c), attribute 1 is an ancestor attribute, attribute 2 and 3 are intermediate attributes, attributes 4, 5, 6 are leaf attributes; in plot (d), attribute 1 is an ancestor attribute, and the remaining   2, 3, 4, 5, 6 are leaf attributes.
	}
	\end{example}

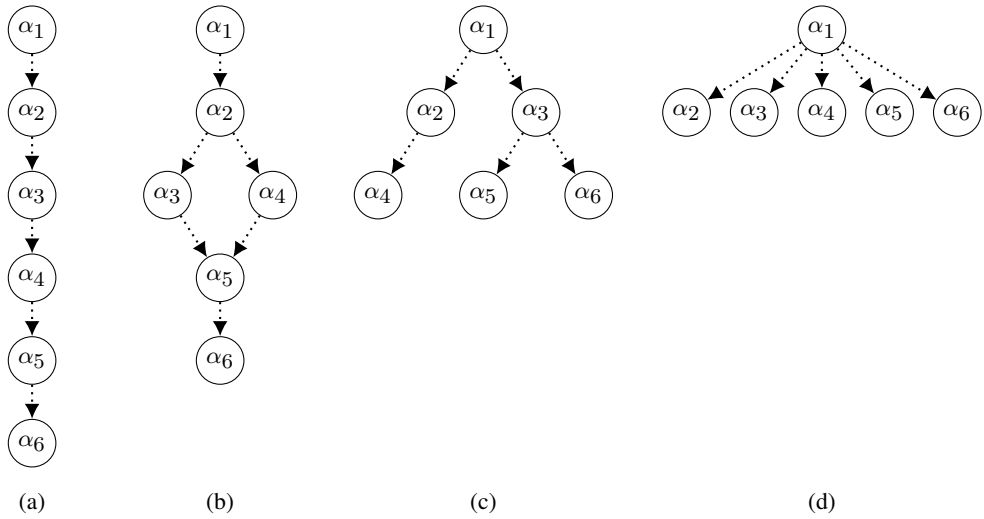
\begin{figure}[h!]
\centering
\begin{tikzpicture}[scale=1]
	\node (h1)[hidden] at (0, 0) {$\alpha_1$};
	
    \node (h2)[hidden] at (0, -1.1) {$\alpha_2$};
    \node (h3)[hidden] at (0, -2.2) {$\alpha_3$};
    
    \node (h4)[hidden] at (0, -3.3) {$\alpha_4$};
    \node (h5)[hidden] at (0, -4.4) {$\alpha_5$};
    \node (h6)[hidden] at (0, -5.5) {$\alpha_6$};

    \draw[pre] (h1) -- (h2); 
    \draw[pre] (h2) -- (h3);
    \draw[pre] (h3) -- (h4);
    \draw[pre] (h4) -- (h5);
    \draw[pre] (h5) -- (h6);
    
	\node (h21)[hidden] at (2.5, 0)  {$\alpha_1$};
	\node (h22)[hidden] at (2.5, -1.1) {$\alpha_2$};
    \node (h23)[hidden] at (1.8, -2.2) {$\alpha_3$}; 
    \node (h24)[hidden] at (3.2, -2.2) {$\alpha_4$};
    \node (h25)[hidden] at (2.5, -3.3) {$\alpha_5$};
    \node (h26)[hidden] at (2.5, -4.4) {$\alpha_6$};

    \draw[pre] (h21) -- (h22); 
    \draw[pre] (h22) -- (h23);
    \draw[pre] (h22) -- (h24);
    
    \draw[pre] (h23) -- (h25);
    \draw[pre] (h24) -- (h25);
    \draw[pre] (h25) -- (h26);
    
\node (h31)[hidden] at (6, 0)  {$\alpha_1$};
\node (h32)[hidden] at (5.3, -1.1)  {$\alpha_2$};
\node (h33)[hidden] at (6.7, -1.1)  {$\alpha_3$};
\node (h34)[hidden] at (4.6, -2.2)  {$\alpha_4$};
\node (h35)[hidden] at (6, -2.2)  {$\alpha_5$};
\node (h36)[hidden] at (7.4, -2.2)  {$\alpha_6$};

    \draw[pre] (h31) -- (h32); 
    \draw[pre] (h31) -- (h33);
    \draw[pre] (h32) -- (h34);
    \draw[pre] (h33) -- (h35);
    \draw[pre] (h33) -- (h36);
    
\node (h41)[hidden] at (10.5, 0)  {$\alpha_1$};
\node (h42)[hidden] at (8.7, -1.1)  {$\alpha_2$};
\node (h43)[hidden] at (9.6, -1.1)  {$\alpha_3$};
\node (h44)[hidden] at (10.5, -1.1)  {$\alpha_4$};
\node (h45)[hidden] at (11.4, -1.1)  {$\alpha_5$};
\node (h46)[hidden] at (12.3, -1.1)  {$\alpha_6$};

  \draw[pre] (h41) -- (h42);
  \draw[pre] (h41) -- (h43);
  \draw[pre] (h41) -- (h44);
  \draw[pre] (h41) -- (h45);
  \draw[pre] (h41) -- (h46);
  
\node(hier1)[] at (0, -6.3) {(a)};
\node(hier2)[] at (2.5, -6.3) {(b)};
\node(hier3)[] at (6, -6.3) {(c)};
\node(hier4)[] at (10.5, -6.3) {(d)};

\end{tikzpicture}    

\caption{Four attribute hierarchies presented in \cite{leighton2004attribute}, named as: (a) \textit{linear}, (b) \textit{convergent}, (c) \textit{divergent}, and (d) \textit{unstructured}. For example, in (b), $\alpha_1$ is an ancestor attribute, $\alpha_2,\ldots,\alpha_6$ are intermediate attributes, $\alpha_7$ is a leaf attribute, and there are no singleton attributes.}
\label{fig-leighton-hier}
\end{figure}

{For ease of discussion, in the following conclusions regarding necessity of the identifiability conditions, 
we shall focus on the $\QQ$-matrices that satisfy the restricted $\QQ$-matrix design.
 {Recall that a $\QQ$-matrix is said to satisfy the restricted $\QQ$-matrix design if each of its row vectors is a permissible attribute pattern under the hierarchy $\mathcal E$.}
In the literature of cognitive diagnostic modeling, the  restricted $\QQ$-matrix design  is shown empirically to be useful in improving clustering accuracy of diagnostic test takers \citep{tu2019ahm}.
Our theoretical findings in the rest of this subsection reveal that in addition to the  restricted $\QQ$-matrix design, what other requirements are necessary to ensure identifiability.
}


{Before presenting the next identifiability result, we first introduce a new notion of identifiability of the attribute hierarchy $\mathcal E$ and proportion parameters $\pp$. Under an unknown nonempty hierarchy $\mathcal E\neq\varnothing$, if all row vectors of $\QQ$ respect the attribute hierarchy, then there exists a trivial nonidentifiability issue that can be resolved by introducing an equivalence relation, similar in spirit to that in \cite{partial}. To see this, consider $K=2$ and $\mathcal E=\{1\to 2\}$, then a $\QQ$-matrix $\QQ = \EE = (1, 0;~ 1, 1)$ has both rows respecting the attribute hierarchy. Further, consider the simplest special case without any item-level noise, $1-\theta_1^+ = 1-\theta_2^+ = \theta_1^- = \theta_2^- = 0$. Now if $\mathcal E$ is unknown, then it is not hard to see that any alternative proportion parameters $\bar\pp$ satisfying the following equations will be nondistinguishable from the true parameters $\pp$:

\vspace{-3mm}
	\singlespacing
\begin{align}\label{eq-partial}
		p_{(00)} = \bar p_{(00)} + \bar p_{(01)};\quad
		p_{(10)} = \bar p_{(10)};\quad
		p_{(11)} = \bar p_{(11)}.
\end{align}
\doublespacing

Such phenomenon is closely related to the $\pp$-partial identifiability defined in \cite{partial},
which means when $\QQ$ does not contain an identity submatrix $I_K$, the proportion parameters can at best be identified up to the equivalence classes induced by $\QQ$.
In the current toy example, the attribute patterns $(00)$ and $(01)$ are equivalent under $\QQ=(1, 0;~ 1, 1)$ because $\Gamma_{\QQ, (00)} = \Gamma_{\QQ, (01)}$, and hence $\bar p_{(00)}$ and $\bar p_{(01)}$ can be identified up to their sum at best, as illustrated in \eqref{eq-partial}.
Therefore, we will say that $(\ttt^+, \ttt^-, [\mathcal E], [\pp])$ are identifiable, if $(\ttt^+, \ttt^-)$ are identifiable and the only nonidentifiability about $\pp$ is due to the equivalence relation in \eqref{eq-partial}; here $[\pp]$ denotes the equivalence class of proportion parameters satisfying \eqref{eq-partial} and the $[\mathcal E]$ denotes the associated equivalence class of  hierarchies.
We point out that such nonidentifiability is somewhat trivial and can be easily resolved, by simply defining the final $\bar{\mathcal E}^\star$ to be the \textit{hierarchy with the most directed edges among all the possible hierarchies in the equivalence class $[\bar{\mathcal E}]$}. It is easy to see that such $\bar{\mathcal E}^\star$ equals the true $\mathcal E$ in the toy example, because in order for $\bar{\mathcal E}$ to have the most directed edges, one needs to set $\bar p_{(01)} = 0$ under \eqref{eq-partial} and that exactly makes the resulting $\bar\pp=\pp$ and $\bar{\mathcal E}^\star = {\mathcal E} = \{1\to 2\}$. By a similar reasoning, this procedure also works more generally for any hierarchy $\mathcal E$.
Therefore, when a fixed $\QQ$-matrix has all rows respecting the hierarchy, it is still very meaningful and useful to study the identifiability of $(\ttt^+, \ttt^-, [\mathcal E], [\pp])$ and to investigate the minimal identifiability conditions. 
Our results in this section will establish the necessary and sufficient identifiability conditions in this regard.}

%
%
%
%
%
%
%
%
%
%
%
%
%

In the following Propositions \ref{prop-nece-sing}--\ref{prop-nece-int}, we show how Condition B can be generally relaxed, depending on whether the attribute is ancestor, leaf, or intermediate.

\begin{proposition}[Necessary Condition for Singleton Attribute]\label{prop-nece-sing}
Consider a DINA-based HLAM.
The following hold for a singleton attribute $k$ in {any} attribute hierarchy.
\begin{itemize}
	\item[(a)] $\sum_{j=1}^J q_{j,k}\geq 3$ is necessary for the identifiability of $(\mce,\ttt^+,\ttt^-,\pp)$.
	\item[(b)] There exists scenarios where the equality in part (a) is achieved with $\sum_{j=1}^J q_{j,k} = 3$ and the identifiability of $(\ttt^+,\ttt^-, [\mce], [\pp])$ is guaranteed.
\end{itemize}
\end{proposition}

\begin{proposition}[Necessary Condition for Ancestor or Leaf Attribute]
\label{prop-nece-anc} 
Consider a DINA-based HLAM with a fixed $\QQ$-matrix whose row vectors respect the hierarchy $\mce$. Denote the $(j,k)$th entry of $\mathcal S^{\mce}(\QQ)$ by $q^{\sparse}_{j,k}$. The following conclusions hold for  $k$ if attribute $k$ is either an ancestor attribute or a leaf attribute.
\begin{itemize}
	\item[(a)] $\sum_{j=1}^J q^{\sparse}_{j,k}\geq 2$ is necessary for  the identifiability of $(\mce,\ttt^+,\ttt^-,\pp)$.
	\item[(b)] There exist scenarios where the equality in part (a) is achieved with $\sum_{j=1}^J q^{\sparse}_{j,k} = 2$ and the identifiability of $(\ttt^+,\ttt^-, [\mce], [\pp])$ is guaranteed.
\end{itemize}
\end{proposition}

\begin{proposition}[Necessary Condition for Intermediate Attribute]\label{prop-nece-int}
Consider a DINA-based HLAM with a fixed $\QQ$-matrix whose row vectors respect the hierarchy $\mce$. Denote the $(j,k)$th entry of $\mathcal S^{\mce}(\QQ)$ by $q^{\sparse}_{j,k}$.
The following statements hold for an intermediate attribute $k$. 
\begin{itemize}
	\item[(a)]  $\sum_{j=1}^J q^{\sparse}_{j,k}\geq 1$ is necessary for the identifiability of $(\mce,\ttt^+,\ttt^-,\pp)$.
	\item[(b)] There exist scenarios where the equality in part (a) is achieved with $\sum_{j=1}^J q^{\sparse}_{j,k} = 1$ and the identifiability of $(\ttt^+,\ttt^-, [\mce], [\pp])$ is guaranteed.
\end{itemize}
\end{proposition}

Propositions \ref{prop-nece-sing}--\ref{prop-nece-int} together characterize the different identifiability phenomena caused by different types of attributes in the attribute hierarchy graph.
An intuitive explanation behind these conclusions is as follows. For a singleton attribute $k$ that is not connected to any other attribute in the attribute hierarchy graph, no additional information is provided by the other attributes. Therefore the requirement of $k$ being measured by $\geq 3$ items in the $\QQ$-matrix is necessary. This aligns well with the conclusion for a latent attribute model without any hierarchy established in \cite{xu2016} and \cite{id-dina}, where all the attributes are singletons and each needs to be measured by $\geq 3$ items.
However, this requirement can be relaxed for any other type of attribute which is somewhat connected in the attribute hierarchy graph. In particular, fewer measurements are needed for $k$ in the $\QQ$-matrix as more information is available for this attribute in the attribute hierarchy graph. 
For a ancestor attribute $k$ with some ``child'' or a leaf attribute with some ``parent'' as one-sided information, the requirement is relaxed to $k$ being measured by $\geq 2$ items in {$\mathcal S^{\mce}(\QQ)$}; while for an intermediate attribute $k$ with both some child and some parent as two-sided information, the requirement is further relaxed to $k$ being measured by $\geq 1$ items in {$\mathcal S^{\mce}(\QQ)$}.

We next discuss the necessity of Condition C. 
Given a $\QQ$, we denote by $\QQ_{1:K,\bcolon}$ the submatrix consisting of its first $K$ rows and by $\QQ_{(K+1):J,\bcolon}$ the submatrix consisting of its last $J-K$ rows.
For a $\QQ$ with rows respecting the attribute hierarchy, Condition C requires $\QQ_{(K+1):J,\,k}\neq\QQ_{(K+1):J,\,\ell}$ for any $k\neq \ell$ when $\QQ_{1:K,\bcolon}=\EE$. 
We have the following result.


\begin{proposition}[Discussing Necessity of Condition C]\label{thm-nec-dist}
Consider a DINA-based HLAM with a fixed $\QQ$ whose row vectors respect the hierarchy $\mce$.
The condition that $\QQ_{(K+1):J,\,k}\allowbreak\neq\QQ_{(K+1):J,\,\ell}$  (when $\QQ_{1:K,\bcolon}=\EE$)  is \textbf{necessary} for identifiability if both $\alpha_k$ and $\alpha_{\ell}$ are singleton attributes.
\end{proposition}

\subsection{Bridging the Necessity and Sufficiency of the Identifiability Conditions}\label{sec-sub-bridge}
Still under a fixed and known $\QQ$-matrix as in Section \ref{sec-sub-ind}, we next investigate how the sufficient identifiability conditions for $(\ttt^+,\ttt^-,\pp)$ can meet the necessary identifiability conditions proposed earlier in Propositions \ref{prop-nece-anc}--\ref{thm-nec-dist}.
In the next theorem, we establish that the individual necessary conditions established in Section \ref{sec-sub-ind} combined are actually sufficient to guarantee the identifiability in fully general scenarios.
This result therefore establishes the general {necessary and sufficient condition} on the $\QQ$-matrix for identifiability under an arbitrary attribute structure.

\begin{theorem}[Necessary and Sufficient Conditions under a Fully General $\mce$]\label{thm-general}
Consider a DINA-based HLAM with a fixed $\QQ$-matrix whose row vectors respect the hierarchy $\mce$.
Then Condition A and the following Condition B$^\star$ and C$^\star$ are {necessary and sufficient} for the identifiability of $(\ttt^+,\ttt^-, [\mce], [\pp])$.

\begin{itemize}
\item[B$^\star$.]  In $\mathcal S^{\mce}(\QQ)$,  any intermediate attribute is each measured by $\geq 1$ items, any ancestor attribute and any leaf attribute is each measured by $\geq 2$ items, and any singleton attribute is each measured by $\geq 3$ items.

\item[C$^\star$.] For any two singleton attributes $\alpha_k$ and $\alpha_\ell$, there is $\QQ_{(K+1):J,\, k} \neq \QQ_{(K+1):J,\, \ell}$. (Assume $\QQ_{1:K,\bcolon} = \EE$ under Condition A.)
\end{itemize}
\end{theorem}

Theorem \ref{thm-general} covers any type of attribute structure and allows for any type of attributes in the attribute hierarchy graph.
In the special case where there are no singleton attributes in the attribute hierarchy graph, the necessary and sufficient identifiability conditions in Theorem \ref{thm-general} can be simplified.
We term such a family of hierarchies without any singleton attributes the \textit{connected-graph hierarchy}.

\begin{corollary}[Necessary and Sufficient Condition under  a Connected Graph Hierarchy]\label{thm-connected}
Consider a DINA-based HLAM with fixed $\QQ$-matrix whose row vectors respect the hierarchy $\mce$.
Suppose the $K$ attributes form a connected graph. Then Condition A and the following Condition D are {necessary and sufficient} for the identifiability { of $(\mce, \ttt^+, \ttt^-, [\pp])$}.
\begin{itemize}
\item[D.] In $\mathcal S^{\mce}(\QQ)$, any ancestor attribute and any leaf attribute is each measured by $\geq 2$ items, and any intermediate attribute is each measured by $\geq 1$ items.
\end{itemize}
\end{corollary}

\begin{remark}
	\label{rmk-general}
	\normalfont{
	In the first extreme case, if $\mce=\varnothing$ without any true hierarchy among attributes, then Conditions A, B$^\star$, and C$^\star$ in Theorem \ref{thm-general} exactly become Conditions A, B, C in Theorem \ref{thm-main} in Section \ref{sec-main}.
	In the second extreme case, if there does not exist any singleton attribute in the attribute hierarchy graph, then  Condition B$^\star$ in Theorem \ref{thm-general} reduces to Condition D in the above Corollary \ref{thm-connected}; and Condition C$^\star$ in Theorem \ref{thm-general} should be understood as always satisfied and hence can be omitted.
	Namely, under a connected-graph hierarchy without any singleton attributes, the Conditions A, B$^\star$, and C$^\star$ in Theorem \ref{thm-general} exactly reduce to Conditions A and D in Corollary \ref{thm-connected}.
	Therefore, Theorem \ref{thm-general} covers Corollary \ref{thm-connected} as a special case and is indeed fully general.
	We state these two results separately to highlight both the most general form of the result, and also how the necessary and sufficient conditions simplify under the popular family of connected-graph hierarchy as depicted in Corollary \ref{thm-connected}.
	}
\end{remark}

%
%
The following example illustrates the minimal requirements on $\QQ$ under those attribute hierarchies considered in \cite{leighton2004attribute}.

\begin{example}
\normalfont{
Under the linear hierarchy $\mathcal E = \mce^{\text{linear}}$ in Fig \ref{fig-multiple}(b), the $8\times 6$ matrix $\QQ^{\text{linear}}_{8\times 6}$ shown in Fig \ref{fig-multiple}(a) encodes the minimal requirement to ensure an identifiable model. 
Fig \ref{fig-multiple}(b) visualizes the sparsified version of $\QQ^{\text{linear}}_{8\times 6}$ as the directed solid edges from the latent attributes to the observed item responses. 
Under the so-called convergent hierarchy and divergent hierarchy presented earlier in Fig \ref{fig-leighton-hier}, the minimal requirement on $\QQ$ for model identifiability are presented in parts (c)-(d) and parts (e)-(f)  of Figure \ref{fig-multiple}, repectively.
For the divergent hierarchy $\mce = \mce^{\text{div}}$ in Fig \ref{fig-multiple}(f), the $\QQ^{\text{div}}_{10\times 6}$ in Fig \ref{fig-multiple}(c) gives an identifiable model under minimal conditions. 
}
\end{example}

\color{black}
\begin{figure}[h!]
\begin{minipage}[c]{0.35\textwidth}
\centering
$$
\QQ^{\text{linear}}_{8\times 6} = \begin{pmatrix}
	1 & 0 & 0 & 0 & 0 & 0 \\
	1 & 1 & 0 & 0 & 0 & 0 \\
	1 & 1 & 1 & 0 & 0 & 0 \\
	1 & 1 & 1 & 1 & 0 & 0 \\
	1 & 1 & 1 & 1 & 1 & 0 \\
	1 & 1 & 1 & 1 & 1 & 1 \\
	\hline
	1 & 0 & 0 & 0 & 0 & 0 \\
	1 & 1 & 1 & 1 & 1 & 1 \\
\end{pmatrix}
$$
\end{minipage}
\hfill
\begin{minipage}[c]{0.6\textwidth}
 \centering
 	\begin{tikzpicture}[scale=1]
    \node (h1)[hidden] at (0,0) {$\alpha_1$};
    \node (h2)[hidden] at (1,0) {$\alpha_2$};
    \node (h3)[hidden] at (2,0) {$\alpha_3$};
    \node (h4)[hidden] at (3,0) {$\alpha_4$};
    \node (h5)[hidden] at (4,0) {$\alpha_5$};
    \node (h6)[hidden] at (5,0) {$\alpha_6$};
    
    \draw[pre] (h1) -- (h2); 
    \draw[pre] (h2) -- (h3);
    \draw[pre] (h3) -- (h4);
    \draw[pre] (h4) -- (h5);
    \draw[pre] (h5) -- (h6);
    
    \node (v1)[neuron] at (-1,-1.2) {$r_7$};
    \node (v2)[neuron] at (0,-1.2) {$r_1$};
    \node (v3)[neuron] at (1,-1.2) {$r_2$};
    \node (v4)[neuron] at (2,-1.2) {$r_3$};
    \node (v5)[neuron] at (3,-1.2) {$r_4$};
    \node (v6)[neuron] at (4,-1.2) {$r_5$};
    \node (v7)[neuron] at (5,-1.2) {$r_6$};
    \node (v8)[neuron] at (6, -1.2) {$r_8$};
    
    \draw[qedge] (h1) -- (v1);
    \draw[qedge] (h1) -- (v2);
    \draw[qedge] (h2) -- (v3);
    \draw[qedge] (h3) -- (v4);
    \draw[qedge] (h4) -- (v5);
    \draw[qedge] (h5) -- (v6);
    \draw[qedge] (h6) -- (v7);
    \draw[qedge] (h6) -- (v8);
 	\end{tikzpicture}
\end{minipage}

\vspace{2mm}
\begin{minipage}[c]{0.35\textwidth}\centering
	(a) $\QQ^{\text{linear}}_{8\times 6}$
\end{minipage}
\hfill
\begin{minipage}[c]{0.6\textwidth}\centering
	(b) visualization of the sparsified $\mathcal S^{\mce}(\QQ^{\text{linear}}_{8\times 6})$
\end{minipage}

\bigskip
\begin{minipage}[c]{0.38\textwidth}
\centering
$$
\QQ^{\text{conv}}_{8\times 6}
=
\begin{pmatrix}
	1 & 0 & 0 & 0 & 0 & 0 \\
	1 & 1 & 0 & 0 & 0 & 0 \\
	1 & 1 & 1 & 0 & 0 & 0 \\
	1 & 1 & 0 & 1 & 0 & 0 \\
	1 & 1 & 1 & 1 & 1 & 0 \\
	1 & 1 & 1 & 1 & 1 & 1 \\
	\hline
	1 & 0 & 0 & 0 & 0 & 0 \\
	1 & 1 & 1 & 1 & 1 & 1 
\end{pmatrix}
$$
\end{minipage}
\hfill
\begin{minipage}[c]{0.6\textwidth}
\centering
\begin{tikzpicture}[scale=1]
    \node (h21)[hidden] at (-4.4, 2.5) {$\alpha_1$};
    \node (h22)[hidden] at (-3.3, 2.5) {$\alpha_2$};
    \node (h23)[hidden] at (-2.2, 3.2) {$\alpha_3$};
    \node (h24)[hidden] at (-2.2, 1.8) {$\alpha_4$}; 
    \node (h25)[hidden] at (-1.1, 2.5) {$\alpha_5$};
    \node (h26)[hidden] at (0, 2.5)  {$\alpha_6$};
    
    \draw[pre] (h21) -- (h22); 
    \draw[pre] (h22) -- (h23);
    \draw[pre] (h22) -- (h24);
    
    \draw[pre] (h23) -- (h25);
    \draw[pre] (h24) -- (h25);
    \draw[pre] (h25) -- (h26);
    
    \node (v1)[neuron] at (-4.4, 1.3) {$r_1$};
    \node (v7)[neuron] at (-5.4, 1.3) {$r_7$};
    \draw[qedge] (h21) -- (v1);
    \draw[qedge] (h21) -- (v7);
    
    \node (v2)[neuron] at (-3.3, 1.3) {$r_2$};
    \draw[qedge] (h22) -- (v2);
    
    \node (v3)[neuron] at (-2.2, 4.4) {$r_3$};
    \draw[qedge] (h23) -- (v3);
    
    \node (v4)[neuron] at (-2.2, 0.6) {$r_4$};
    \draw[qedge] (h24) -- (v4);
    
    \node (v5)[neuron] at (-1.1, 1.3) {$r_5$};
    \draw[qedge] (h25) -- (v5);
    
    \node (v6)[neuron] at (0, 1.3) {$r_6$};
    \node (v8)[neuron] at (1, 1.3) {$r_8$};
    \draw[qedge] (h26) -- (v6);
    \draw[qedge] (h26) -- (v8);
        
\end{tikzpicture}  
\end{minipage}  

\vspace{2mm}
\begin{minipage}[c]{0.38\textwidth}\centering
	{(c) $\QQ^{\text{conv}}_{8\times 6}$}
\end{minipage}
\hfill
\begin{minipage}[c]{0.6\textwidth}\centering
	{(d) visualization of the sparsified $\mathcal S^{\mce}(\QQ^{\text{conv}}_{8\times 6})$}
\end{minipage}

\bigskip
\begin{minipage}[c]{0.38\textwidth}
\centering
$$
\QQ^{\text{div}}_{10\times 6}
=
\begin{pmatrix}
	1 & 0 & 0 & 0 & 0 & 0 \\
	1 & 1 & 0 & 0 & 0 & 0 \\
	1 & 0 & 1 & 0 & 0 & 0 \\
	1 & 1 & 0 & 1 & 0 & 0 \\
	1 & 0 & 1 & 0 & 1 & 0 \\
	1 & 0 & 1 & 0 & 0 & 1 \\
	\hline
	1 & 0 & 0 & 0 & 0 & 0 \\
	1 & 1 & 0 & 1 & 0 & 0 \\
	1 & 0 & 1 & 0 & 1 & 0 \\
	1 & 0 & 1 & 0 & 0 & 1 \\
\end{pmatrix}
$$
\end{minipage}
\hfill
\begin{minipage}[c]{0.6\textwidth}
\centering
\begin{tikzpicture}[scale=1]

\node (h31)[hidden] at (0.7, 0)  {$\alpha_1$};
\node (h32)[hidden] at (0, -1)  {$\alpha_2$};
\node (h33)[hidden] at (1.4, -1)  {$\alpha_3$};
\node (h34)[hidden] at (-0.7, -2)  {$\alpha_4$};
\node (h35)[hidden] at (0.7, -2)  {$\alpha_5$};
\node (h36)[hidden] at (2.1, -2)  {$\alpha_6$};
    
    \draw[pre] (h31) -- (h32); 
    \draw[pre] (h31) -- (h33);
    \draw[pre] (h32) -- (h34);
    \draw[pre] (h33) -- (h35);
    \draw[pre] (h33) -- (h36);

    \node (v31)[neuron] at (-0.6,0) {$r_1$};
    \node (v32)[neuron] at (2,0) {$r_7$};
    \node (v34)[neuron] at (-1.3,-1) {$r_2$};
    \node (v36)[neuron] at (2.7,-1) {$r_3$};
    \node (v37)[neuron] at (-2, -2) {$r_4$};
    \node (v38)[neuron] at (-1.4,-3) {$r_{8}$};
    \node (v39)[neuron] at (0, -3) {$r_5$};
    \node (v310)[neuron] at (1.4,-3) {$r_{9}$};
    \node (v311)[neuron] at (3.4, -2) {$r_{6}$};
    \node (v312)[neuron] at (2.8, -3) {$r_{10}$}; 

    \draw[qedge] (h31) -- (v31);
    \draw[qedge] (h31) -- (v32);
    \draw[qedge] (h32) -- (v34);
    \draw[qedge] (h33) -- (v36);
    \draw[qedge] (h34) -- (v37);
    \draw[qedge] (h34) -- (v38);
    \draw[qedge] (h35) -- (v39);
    \draw[qedge] (h35) -- (v310);
    \draw[qedge] (h36) -- (v311);
    \draw[qedge] (h36) -- (v312); 
  
\end{tikzpicture}  
\end{minipage}  

\vspace{2mm}
\begin{minipage}[c]{0.38\textwidth}\centering
	 {(e) $\QQ^{\text{div}}_{10\times 6}$}
\end{minipage}
\hfill
\begin{minipage}[c]{0.6\textwidth}\centering
	 {(f) visualization of the sparsified $\mathcal S^{\mce}(\QQ^{\text{div}}_{10\times 6})$}
\end{minipage}

\caption{ {Minimally sufficient requirements on $\QQ$ for identifiability under the \textit{linear} hierarchy, \textit{convergent} hierarchy, and \textit{divergent} hierarchy proposed in \cite{leighton2004attribute}, respectively.}}
\label{fig-multiple}
\end{figure}


\section{Identifiability of other HLAMs different from the DINA-based HLAMs}\label{sec-multiple}

We also study identifiability of some other HLAMs in addition to the DINA-based HLAMs.

\subsection{DINO-based HLAMs}
As introduced earlier in Section \ref{sec-setup}, the DINO model is also a popular type of latent attribute model often used for psychiatric and clinical measurement of mental disorders \citep{templin2006measurement, dela2018}.
A careful examination of the definitions of ideal responses $\Gamma^{\myand}$ and $\Gamma^{\myor}$ in \eqref{eq-and} and \eqref{eq-orequiv} reveals the following relationship, 
\begin{equation}\label{eq-dual}
	\Gamma^{\myor}_{\bq_j, \aaa} = 1 - \Gamma^{\myand}_{\bq_j, \one_K - \aaa},
\end{equation}
where $\one_K - \aaa = (1-\alpha_1,\ldots,1-\alpha_K)^\top$ also denotes an attribute pattern.
Building upon such duality between DINA and DINO, the following proposition characterizes how the identifiability results obtained under a DINA-based HLAM can be translated into those under a DINO-based HLAM.

\begin{proposition}
	\label{prop-dino}
	Consider a DINO-based HLAM with a fixed $\QQ$-matrix and an unknown attribute hierarchy $\mce$. Define the reversed attribute hierarchy $\mce^{\rev}$ as
	\begin{equation}
		\label{eq-reverse}
		\mce^{\rev} = \{\ell \to k:\, \text{if }k\to \ell\text{ under the original hierarchy }\mce\}.
	\end{equation}
	\begin{itemize}
    \item[(a)] For any $\aaa \in\{0,1\}^K$,  $\aaa\in\mca(\mce)$ if and only if $\one_K - \aaa\in\mca(\mce^{\rev})$. That is, any attribute pattern  $\aaa$ that is allowable under the original hierarchy $\mce$ if and only if another attribute pattern $\aaa' = \one - \aaa$ is allowable under the reversed hierarchy $\mce^{\rev}$.
    
    \vspace{2mm}
    \item[(b)] The attribute hierarchy $\mce$ and model parameters under the DINO-based HLAM are identifiable \textbf{if and only if} the reversed attribute hierarchy $\mce^{\rev}$ and model parameters are identifiable under a DINA-based HLAM with the same $\QQ$-matrix.
	\end{itemize}
	
\end{proposition}

For any attribute hierarchy graph $\mce$, the reversed hierarchy $\mce^{\rev}$ in \eqref{eq-reverse} is another directed graph among attributes, where the direction of each arrow in $\mce$ is reversed.
Therefore, for the same set of $K$ attributes, any ancestor attribute in $\mce$ becomes a leaf attribute in $\mce^{\rev}$, and any leaf in $\mce$ in turn becomes an ancestor in $\mce^{\rev}$.
Any intermediate attribute or singleton attribute remain the same type when $\mce$ is reversed to be $\mce^{\rev}$.
Proposition \ref{prop-dino} provides guidelines on how to check identifiability for a DINO-based HLAM using the identifiability results established earlier for DINA-based HLAMs. 
In particular, we have the following necessary and sufficient conditions for identifiability of $(\mce, \ttt^+, \ttt^-, \pp)$ under a DINO-based HLAM with a fixed $\QQ$-matrix.

\begin{corollary}[Necessary and Sufficient Conditions under a General $\mce$ for a DINO-based HLAM]\label{cor-dino}
Consider a DINO-based HLAM with an attribute hierarchy $\mce$ and a fixed $\QQ$-matrix whose rows respect the reversed hierarchy $\mce^{\rev}$. Consider the following condition.
\begin{itemize}
\item[A$^\star$.] The $\mce^{\rev}$-densified matrix $\mc D^{\mce^{\rev}}(\QQ)$ contains a submatrix which is the reachability matrix under the reversed hierarchy $\mce^{\rev}$. 
\end{itemize}
Then this Condition A$^\star$, and the earlier Conditions B$^\star$--C$^\star$ given in Theorem \ref{thm-general} are {necessary and sufficient} for the identifiability { of $(\mce,\ttt^+,\ttt^-,\pp)$}.
%
\end{corollary}

\subsection{Main-effect-based HLAMs}
Another family of HLAMs in the literature \citep[e.g.,][]{dibello1995unified, davier2008general, HensonTemplin09} incorporate the main effects of latent attributes into the model.
We next review these main-effect-based HLAMs in the following Example \ref{exp-other} and then provide the identifiability result for them.

\begin{example}[HLAMs which Model the Main Effects of Attributes]\label{exp-other}
{\normalfont
The \textit{main-effect HLAMs} assume the main effects of the attributes measured by each item indicated by $\bq_j$ play a role in distinguishing the item parameters. Under a main-effect HLAM the Bernoulli parameter $\theta_{j,\aaa}$ can be written as
\begin{equation}\label{eq-maineff}
\theta^{\text{main-eff}}_{j,\aaa}   =
f\Big(\beta_{j,0}+ {\sum}_{k=1}^K\beta_{j,k} q_{j,k} \alpha_k \Big),
\end{equation}
where $f(\cdot)$ is a link function.
Note not all the $\beta$-coefficients in the above display are needed in the model specification; instead, only when $q_{j,k} = 1$ will $\beta_{j,k}$ be needed and truly incorporated in the model.
Different link functions $f(\cdot)$ in \eqref{eq-maineff} lead to different models, including 
the Linear Logistic Model \citep[LLM;][]{maris1999estimating} with $f(\cdot)$ being the sigmoid function, and the Additive Cognitive Diagnosis Model \citep[ACDM;][]{dela2011} with $f(\cdot)$ being the identity.
When $f(\cdot)$ is a monotonically increasing function, it is usually assumed in practice that each $\beta_{j,k} > 0$ wherever $q_{j,k} = 1$ for interpretability. 
}

\normalfont{There are also \textit{all-effect HLAMs} that model not only the main effects but also all the interaction effects of attributes. The  Bernoulli parameter $\theta_{j, \aaa}$ of an all-effect model is
\begin{align}\label{eq-alleff}
\theta^{\text{all-eff}}_{j,\aaa} = 
	f\Big( & \beta_{j,0} + {\sum}_{k=1}^K \beta_{j,k} (q_{j,k} \alpha_{k}) + {\sum}_{1\leq k_1 < k_2\leq K} \beta_{j, k_1 k_2} (q_{j,k_1}\alpha_{k_1}) (q_{j,k_2}  \alpha_{k_2}) + 
	\\ \notag
	& 
	\cdots + \beta_{j,12\cdots K} {\prod}_{k=1}^K (q_{j,k} \alpha_{k})
	\Big).
\end{align}
Similarly as in \eqref{eq-maineff}, not all the $\beta$-coefficients above are needed in the model specification.
When $f(\cdot)$ in \eqref{eq-alleff} is the identity function, \eqref{eq-alleff} gives the Generalized DINA (GDINA) model in \cite{dela2011}; and when $f(\cdot)$ is the sigmoid function, \eqref{eq-alleff} gives the Log-linear Cognitive Diagnosis Models (LCDMs) in \cite{HensonTemplin09}; see also the General Diagnostic Models (GDMs) in \cite{davier2008general}. 
We generally call the main-effect HLAMs in \eqref{eq-maineff} and the all-effect HLAMs in \eqref{eq-alleff} the main-effect-based HLAMs, because they both incorporate the main effects of the latent attributes in to the model.
}
\end{example}

Under the main-effect-based HLAMs, the probability mass function of the $J$-dimensional random response vector $\RR$ can be generally written as
\begin{align*}
	 P(\RR=\rr\mid \QQ,\mce,\ttt^+,\ttt^-,\pp) = 
	 \sum_{\aaa\in\mca(\mce)} p_{\aaa} \prod_{j=1}^J 
	     \theta_{j,\aaa}^{r_j}
	    \times 
	     (1-\theta_{j,\aaa})^{1-r_j},
\end{align*}
where $\rr\in\{0,1\}^J$ is an arbitrary response pattern.
Notably, these main-effect-based HLAMs generally have quite different algebraic structures from the family of two-parameter HLAMs, the DINA and the DINO models.
The key structure of any two-parameter HLAM is captured by the ideal response $\Gamma_{\bq_j,\aaa}$ in \eqref{eq-and} or \eqref{eq-or}, under the ``AND'' or ``OR'' operations, respectively.
Intuitively, the two-parameter HLAMs are characterized by a probabilistic version of the \textit{Boolean product} of two groups of binary vectors, the group of $\bq_j$'s and the group of $\aaa$'s; 
however, this is not the case for any HLAM in Example \ref{exp-other} due to the incorporation of the main effects of attributes.
Indeed, incorporating main effects in the form of ${\sum}_{k=1}^K\beta_{j,k} q_{j,k} \alpha_k$ in \eqref{eq-maineff} or \eqref{eq-alleff} is taking a \textit{inner product} of vectors $\bq_j$, $\aaa$ and an additional $\beta$-coefficient vector, rather than the Boolean product.
Because of such distinction, the necessary and sufficient identifiability conditions derived carefully for the two-parameter HLAMs in Sections \ref{sec-main}-\ref{sec-nece} are not applicable to main-effect-based HLAMs.

Next we give a set of sufficient conditions for the identifiability of main-effect-based HLAMs.
The technical concept of $\Gamma(\QQ, \mce)$ (specifically, with $\Gamma = \Gamma^{\myand}$ defined in \eqref{eq-and}) introduced earlier in Section \ref{sec-main} is still useful here.
Denote the collection of all the per-item Bernoulli parameters by $\bo\Theta = (\theta_{j,\aaa})$.
We have the following theorem.

\begin{theorem}[Identifiability of HLAMs which Model the Main Effects of Attributes]\label{thm-mult}
Consider an HLAM that incorporates the main effects of the attributes with $\QQ$ and $\mce$ both unknown. 
Suppose $\bo\Theta$ satisfies a natural inequality constraint $\theta_{j,\aaa} \neq \theta_{j,\aaa'}$ if $\Gamma_{\bq_j, \aaa} \neq \Gamma_{\bq_j, \aaa'}$.
If $\Gamma(\QQ, \mce)$ satisfies the following conditions with the number of columns known, then the $(\bo\Theta, \pp)$ and $\Gamma(\QQ, \mce)$ are identifiable.
\begin{enumerate}
\item[E.] There exist two disjoint sets of items $S_1$, $S_2\subseteq[J]$, such that $\Gamma(\QQ_{S_1,\bcolon}\, , \, \mce)$ and $\Gamma(\QQ_{S_2,\bcolon}\, , \, \mce)$ each has distinct column vectors.

\item[F.] For any $\aaa \neq \aaa' \in\mca(\mce)$, there exists some item $j\not\in S_1\cup S_2$ such that $\Gamma_{\bq_j,\aaa} \neq \Gamma_{\bq_j,\aaa'}$.

\item[G.] For any $\aaa\in\mca(\mce)$ and $\aaa' \in \{0,1\}\setminus\mca(\mce)$, there exists some item $j\in[J]$ such that $\Gamma_{\bq_j, \aaa} \neq \Gamma_{\bq_j, \aaa'}$.
\end{enumerate}
In addition to the above three conditions, if $\QQ$ is known in part to contain an identity submatrix $I_K$, then the attribute hierarchy $\mce$ is identifiable from $\Gamma(\QQ, \mce)$.
\end{theorem}

For the main-effect-based HLAMs, the ideal response matrix $\Gamma(\QQ, \mce)$ may not sharply characterize  the entire latent structure due to the incorporation of the main effects, which is in contrary to the DINA-based HLAMs. To see this, considering two latent patterns $\aaa$ and $\aaa'$ with $\Gamma_{\bq_j, \aaa} = \Gamma_{\bq_j, \aaa'} = 0$, then the specification in \eqref{eq-maineff} or \eqref{eq-alleff} implies that there is potentially $\theta_{j,\aaa} \neq \theta_{j,\aaa'}$.
Therefore it is hard, if at all possible, to explicitly characterize the necessary identifiability conditions in terms of $\Gamma(\QQ, \mce)$ for main-effect-based HLAMs.
However, the $\Gamma(\QQ, \mce)$ is still useful to derive sufficient conditions for identifiability, as revealed in the above Theorem \ref{thm-mult}.
This is because if $\Gamma_{\bq_j, \aaa} = \Gamma_{\bq_j, \aaa'} = 1$, the two attribute patterns $\aaa$ and $\aaa'$ both satisfy $\aaa\succeq \bq_j$ and $\aaa'\succeq \bq_j$ by the definition in \eqref{eq-and}. 
This implies both patterns $\aaa$ and $\aaa'$ possess all the attributes measured by the vector $\bq_j$.
As a result, the definition of main-effect-based models in \eqref{eq-maineff} or in \eqref{eq-alleff} shows that there must be $\theta_{j,\aaa} = \theta_{j,\aaa'}$ for these two patterns.
This intuitively explains why $\Gamma(\QQ, \mce)$ can be used to describe a set of sufficient identifiability conditions for the main-effect-based HLAMs.

We make a remark on the relationship between the main-effect-based HLAMs and the DINA-based HLAMs studied in the previous Sections \ref{sec-main}--\ref{sec-nece}.
On the one hand, the main-effect-based HLAMs are more general than DINA-based HLAMs in the sense that the formulation of $\theta^{\text{main-eff}}_{j,\aaa}$ in \eqref{eq-maineff} or $\theta^{\text{all-eff}}_{j,\aaa}$ in \eqref{eq-alleff} can generally allow for more than two Bernoulli parameters for each $j$, while DINA-based HLAMs always have two parameters $\theta_j^+$ and $\theta_j^-$ for each $j$.
On the other hand, however, we would like to point out that in this work we still put the main focus on the DINA-based two-parameter HLAMs, 
which are widely used in the motivating applications of cognitive diagnosis in educational settings.
Indeed, these educational settings are where the attribute hierarchy receives the most attention in modeling the sequential acquisition of skill attributes \citep[e.g.,][]{leighton2004attribute, Gierl, wang2020hier}.
On the practical side, assuming the conjunctive relationship among the attributes as in DINA is often believed to be suitable for modeling the response mechanism of diagnostic test items in such settings
 \citep[e.g.,][]{junker2001cognitive,de2004higher,culpepper2015bayesian}.
On the theoretical side, the identifiability of two-parameter DINA-based HLAMs is also more intriguing to study because of the Boolean product involved. The rich combinatorial nature of the DINA-based HLAMs gives the opportunity to close the gap between the necessity and sufficiency of identifiability requirements; interestingly, these minimal requirements are explicit conditions on the discrete structure: the $\QQ$-matrix and attribute types, as depicted in Section \ref{sec-nece}. 
Therefore, we believe that closely examining the DINA-based two-parameter HLAMs and establishing the necessary and sufficient identifiability conditions for them (as done in Sections \ref{sec-main}--\ref{sec-nece}) are highly desirable, due to their theoretical interest and practical relevance.

\color{black}

\section{Discussion}\label{sec-disc}
In this paper, we provide a first study on identifiability of the hierarchical latent attribute model, a complex-structured latent variable model popular in modeling modern assessment data.
We propose sufficient identifiability conditions that explicitly depend on the attribute hierarchy graph and the structural $\QQ$-matrix. 
We also discuss the necessity of the identifiability conditions and sharply characterize the different impacts on identifiability cast by different types of attributes in the attribute hierarchy graph.
In this paper we mainly focus on the basic and popular HLAMs, the DINA-based HLAMs, where each item is modeled using two parameters. 
{We also extend the theory to other types of HLAMs in Section \ref{sec-multiple}.


One nice implication of identifiability is the estimability of both the latent structure and the parameters that define the probabilistic model. When the proposed conditions are satisfied, all the components of the HLAM can be uniquely and consistently estimated from data based on maximum likelihood.
 {In practical data analysis under the HLAM framework, if the $\QQ$ and $\mce$ are specified by domain experts or applied researchers, then before seeing any data, one can check whether $\QQ$ and $\mce$ satisfy our proposed conditions to assess model identifiability.
On the other hand, if $\QQ$ and $\mce$ are not known and one hopes to estimate them exploratorily from data, our identifiability results can also be useful. In such scenarios, one can check whether the estimated $\widehat{\QQ}$ and $\widehat{\mce}$ satisfy necessary identifiability conditions; if not, then more careful investigation of the diagnostic test design may be needed.}
Therefore, this study provides useful insights into designing valid diagnostic tests and drawing valid scientific conclusions from assessment data under a potentially complicated attribute hierarchy.


%
\singlespacing


%

\bigskip
\bibliographystyle{apalike}
\bibliography{ref}

\section*{Supplementary Materials}
The supplementary material contains some illustrative examples and proofs of the theory.
\par
\section*{Acknowledgements}
This work was  supported by NSF CAREER SES-1846747, DMS-1712717, SES-1659328; NIH NIEHS R01ES027498 and R01ES028804; and funding from the European Research Council under the European Union's Horizon 2020 research and innovation program No 856506.


\newpage
\appendix{}
\label{sec-supp}
\centerline{\large\bf Supplement to ``Identifiability of Hierarchical Latent Attribute Models''}

\bigskip\bigskip
\doublespacing

In this Supplementary Material, we give several illustrative examples in Section \ref{sec-illus}. We provide the proof of Theorem \ref{thm-main} in Section \ref{pf-thm1}, the proofs of propositions in Section \ref{pf-prop14}, the proofs of Propositions \ref{prop-nece-sing}--\ref{thm-nec-dist} in Section \ref{pf-prop58},  the proofs of Theorem \ref{thm-general} and Corollary \ref{thm-connected} in Section \ref{pf-bridge},  the proofs of Proposition \ref{prop-dino}, Corollary \ref{cor-dino}, Theorem \ref{thm-mult} in Section \ref{pf-multiple}, and the proofs of the identifiability statement in Examples \ref{exp-num-attr} and \ref{exp-diff} and three technical lemmas in Section \ref{pf-example}.
\par

\setcounter{section}{0}
\setcounter{equation}{0}
\def\theequation{S\arabic{section}.\arabic{equation}}
\def\thesection{S\arabic{section}}

\fontsize{12}{14pt plus.8pt minus .6pt}\selectfont

\section{Illustrative Examples}\label{sec-illus}

\begin{example}
\label{exp-num-attr}
\normalfont{
Consider an attribute hierarchy among $K=4$ attributes: $\mathcal E=\{2\to 3,\, 3\to 4\}$. Then attribute 1, 2, 3, 4 are singleton attribute, ancestor attribute, intermediate attribute, and leaf attribute, respectively. Consider the $\QQ^{\text{id.}}$ in Fig \ref{fig-id}(a). The necessary conditions established in Propositions \ref{prop-nece-sing}--\ref{prop-nece-int} indicate that removing any solid edges from the graphical model illustration in part (c) of the figure results in nonidentifiability.
On the other hand, the hierarchical model under this $\QQ^{\text{id.}}$ is identifiable, as shown later in this document.
}
\end{example}

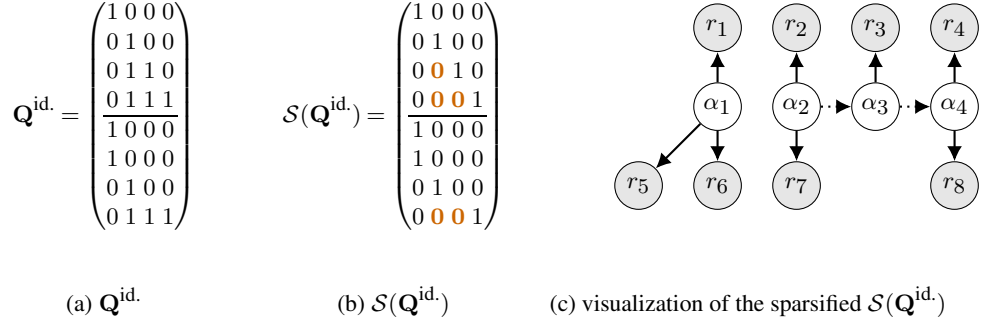
\begin{figure}[h!]
 \centering
 \begin{minipage}[c]{0.25\textwidth}
 	$$
 	\QQ^{\text{id.}}=\begin{pmatrix}
 		1 & 0 & 0 & 0\\
 		0 & 1 & 0 & 0\\
 		0 & 1 & 1 & 0\\
 		0 & 1 & 1 & 1\\
 		\hline
 		1 & 0 & 0 & 0\\
 		1 & 0 & 0 & 0\\
 		0 & 1 & 0 & 0\\
 		0 & 1 & 1 & 1\\
 	\end{pmatrix}
 	$$
 \end{minipage}
 \hfill
 \begin{minipage}[c]{0.25\textwidth}
 	$$
 	\mathcal S(\QQ^{\text{id.}})=\begin{pmatrix}
 		1 & 0 & 0 & 0\\
 		0 & 1 & 0 & 0\\
 		0 & \textcolor{orange!80!black}{\zero} & 1 & 0\\
 		0 & \textcolor{orange!80!black}{\zero} & \textcolor{orange!80!black}{\zero} & 1\\
 		\hline
 		1 & 0 & 0 & 0\\
 		1 & 0 & 0 & 0\\
 		0 & 1 & 0 & 0\\
 		0 & \textcolor{orange!80!black}{\zero} & \textcolor{orange!80!black}{\zero} & 1\\
 	\end{pmatrix}
 	$$
 \end{minipage}
 \hfill
\begin{minipage}[c]{0.46\textwidth}\centering
	\begin{tikzpicture}[scale=1.05]
    \node (h1)[hidden] at (0,0) {$\alpha_1$};
    \node (h2)[hidden] at (1,0) {$\alpha_2$};
    \node (h3)[hidden] at (2,0) {$\alpha_3$};
    \node (h4)[hidden] at (3,0) {$\alpha_4$};
    
    \draw[pre] (h2) -- (h3);
    \draw[pre] (h3) -- (h4);
    
    \node (v1)[neuron] at (0,1) {$r_1$};
    \node (v2)[neuron] at (1,1) {$r_2$};
    \node (v3)[neuron] at (2,1) {$r_3$};
    \node (v4)[neuron] at (3,1) {$r_4$};
    
    \node (v5)[neuron] at (-1,-1) {$r_5$};
    \node (v6)[neuron] at (0,-1) {$r_6$};
    \node (v7)[neuron] at (1,-1) {$r_7$};
    \node (v8)[neuron] at (3,-1) {$r_8$};
    
    \draw[qedge] (h1) -- (v1);
    \draw[qedge] (h2) -- (v2);
    \draw[qedge] (h3) -- (v3);
    \draw[qedge] (h4) -- (v4);
    
    \draw[qedge] (h1) -- (v5);
    \draw[qedge] (h1) -- (v6);
    
    \draw[qedge] (h2) -- (v7);
    \draw[qedge] (h4) -- (v8);
\end{tikzpicture}
\end{minipage}

\bigskip
\begin{minipage}[c]{0.25\textwidth}\centering
	(a) $\mathcal \QQ^{\text{id.}}$
\end{minipage}\hfill
\begin{minipage}[c]{0.25\textwidth}\centering
	(b) $\mathcal S(\QQ^{\text{id.}})$
\end{minipage}\hfill
\begin{minipage}[c]{0.46\textwidth}
	(c) visualization of the sparsified $\mathcal S(\QQ^{\text{id.}})$
\end{minipage}
\caption{A $\QQ$-matrix that gives an identifiable model under the hierarchy $\mathcal E=\{2\to 3,\, 3\to 4\}$. 
Removing any solid edge or $r$-node (observed variable) in (c) renders nonidentifiability.}
\label{fig-id}
\end{figure}

We use the following Example \ref{exp-diff} to illustrate the different conclusions in different scenarios established in Proposition \ref{thm-nec-dist}.

\begin{example}\label{exp-diff}
\normalfont{
In Fig \ref{fig-diff}, we give two examples illustrating the conclusions of Proposition \ref{thm-nec-dist}.
In Fig \ref{fig-diff}(a), the edges in blue emanating from $\alpha_k=\alpha_1$ and the edges in red emanating from $\alpha_{\ell}=\alpha_2$ point to the same set of items {$\{r_5,r_6,r_7\}$}; in Fig \ref{fig-diff}(b), the blue (dashed) edges from $\alpha_k=\alpha_3$ and the red edges from $\alpha_{\ell}=\alpha_4$ point to the same set of items $\{r_8, r_9\}$. 
Therefore, in each of the two plots in Figure \ref{fig-diff},  attributes $k$ and $\ell$ share the same set of ``children'' in the set of items in $\{K+1,\ldots,J\}=\{5,\,6,\,7,\,8,\,9\}$; in other words, $\QQ_{(K+1):J,\,k}=\QQ_{(K+1):J,\,\ell}$.
In particular, the blue dashed edges in Fig \ref{fig-diff}(b) correspond to entries in $\QQ$ that become zero in its sparsified version.
So by Proposition \ref{thm-nec-dist}, the $\QQ$-matrix in (a) leads to a nonidentifiable model; while the $\QQ$-matrix in (b) gives an identifiable model since it satisfies the sufficient identifiability conditions in the later Theorem \ref{thm-general}.
Note that when there is no attribute hierarchy with $\mathcal E'=\varnothing$, both of the two $\QQ$-matrices visualized in Fig \ref{fig-diff}(a) and (b) would lead to a nonidentifiable model. This is because the existence of the pair of red and blue edges (including dashed ones) violates Condition C that $\mc D^{\mce}(\QQ) (=\QQ)$ should contain $K$ distinct columns in addition to an $\EE (=I_K)$; this condition is necessary for identifiability when $\mathcal E'=\varnothing$. 
}
	\begin{figure}[h!]
	\begin{minipage}[c]{0.46\textwidth}\centering
	\begin{tikzpicture}[scale=1.05]
    \node (h1)[hidden] at (0,0) {$\alpha_1$};
    \node (h2)[hidden] at (1.1,0) {$\alpha_2$};
    \node (h3)[hidden] at (2.2,0) {$\alpha_3$};
    \node (h4)[hidden] at (3.3,0) {$\alpha_4$};
    
    \draw[pre] (h3) -- (h4);
    
    \node (v1)[neuron] at (0,1.1) {$r_1$};
    \node (v2)[neuron] at (1.1,1.1) {$r_2$};
    \node (v3)[neuron] at (2.2,1.1) {$r_3$};
    \node (v4)[neuron] at (3.3,1.1) {$r_4$};
    
    \node (v5)[neuron] at (-1.1,-1.1) {$r_5$};
    \node (v6)[neuron] at (0,-1.1) {$r_6$};
    \node (v7)[neuron] at (1.1,-1.1) {$r_7$};
    \node (v8)[neuron] at (2.2,-1.1) {$r_8$};
    \node (v9)[neuron] at (3.3,-1.1) {$r_9$};
    
    \draw[qedge] (h1) -- (v1);
    \draw[qedge] (h2) -- (v2);
    \draw[qedge] (h3) -- (v3);
    \draw[qedge] (h4) -- (v4);
    
    \draw[qedge, blue!70!black] (h1) -- (v5);
    \draw[qedge, blue!70!black] (h1) -- (v6);
    \draw[qedge, blue!70!black] (h1) -- (v7);

    \draw[qedge] (h3) -- (v8);
    \draw[qedge] (h4) -- (v9);
    
    \draw[qedge, red!70!black] (h2) -- (v5);
    \draw[qedge, red!70!black] (h2) -- (v6);
    \draw[qedge, red!70!black] (h2) -- (v7);
    
\end{tikzpicture}
\end{minipage}
\hfill
\begin{minipage}[c]{0.46\textwidth}\centering
	\begin{tikzpicture}[scale=1.05]
    \node (h1)[hidden] at (0,0) {$\alpha_1$};
    \node (h2)[hidden] at (1.1,0) {$\alpha_2$};
    \node (h3)[hidden] at (2.2,0) {$\alpha_3$};
    \node (h4)[hidden] at (3.3,0) {$\alpha_4$};
    
    \draw[pre] (h2) -- (h3);
    \draw[pre] (h3) -- (h4);
    
    \node (v1)[neuron] at (0,1.1) {$r_1$};
    \node (v2)[neuron] at (1.1,1.1) {$r_2$};
    \node (v3)[neuron] at (2.2,1.1) {$r_3$};
    \node (v4)[neuron] at (3.3,1.1) {$r_4$};
    
    \node (v5)[neuron] at (-1.1,-1.1) {$r_5$};
    \node (v6)[neuron] at (0,-1.1) {$r_6$};
    \node (v7)[neuron] at (1.1,-1.1) {$r_7$};
    \node (v8)[neuron] at (2.2,-1.1) {$r_8$};
    \node (v9)[neuron] at (3.3,-1.1) {$r_9$};

    \draw[qedge] (h1) -- (v1);
    \draw[qedge] (h2) -- (v2);
    \draw[qedge] (h3) -- (v3);
    \draw[qedge] (h4) -- (v4);
    \draw[qedge] (h1) -- (v5);
    \draw[qedge] (h1) -- (v6);
    \draw[qedge] (h2) -- (v7);  
    
    \draw[qedge, blue!70!black, dashed] (h3) -- (v8);
    
    \draw[qedge, blue!70!black, dashed] (h3) -- (v9);
    
    \draw[qedge, red!70!black] (h4) -- (v8);
    \draw[qedge, red!70!black] (h4) -- (v9);

\end{tikzpicture}
\end{minipage}

\medskip
\begin{minipage}[c]{0.4\textwidth}\centering
	~~~~~~(a) identifiable, $\alpha_3$ and $\alpha_4$ are not singletons
\end{minipage}\hfill
\begin{minipage}[c]{0.4\textwidth}\centering
	(b) not identifiable as $\alpha_2\to \alpha_3$ and $\alpha_2$ is an ancestor 
\end{minipage}\hfill

\caption{Examples illustrating Proposition \ref{thm-nec-dist}. 
The dashed blue edges in (b) correspond to $\QQ$-matrix entries that become zero in the sparsified version.}
\label{fig-diff}
\end{figure}
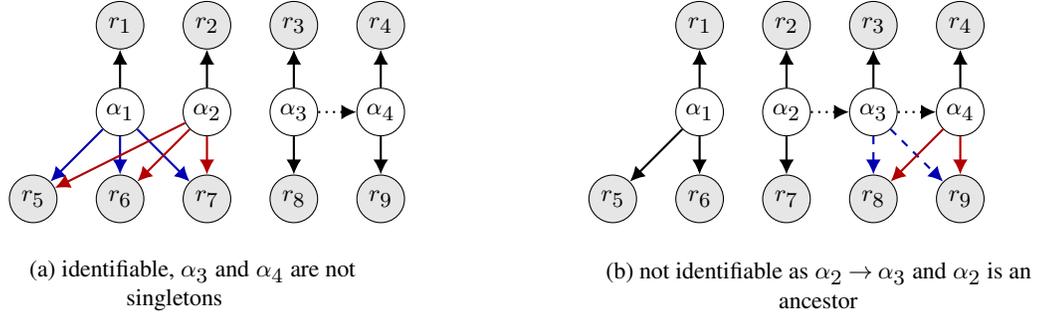
\end{example}

\section{Proof of Theorem 1}\label{pf-thm1}


We introduce some notation and technical preparations before presenting the proof. Denote an arbitrary response vector by $\rr=(r_1,\ldots,r_J)$, and  write $\ee_k$ as a standard basis vector, whose $k$th element is one and the rest are zero.
For two vectors $\bo a=(a_1,\ldots,a_m)$ and $\bo b=(b_1,\ldots,b_m)$ of the same length, denote $\bo a\succeq \bo b$ if $a_i\geq b_i$ for all $i\in[m]$, and denote $\bo a\nsucceq\bo b$ otherwise. Define operations ``$\preceq$'' and ``$\npreceq$'' similarly.
For an item index $j\in[J]$, denote by $\QQ_{-j,\bcolon}$ the $(J-1)\times K$ submatrix of $\QQ$ after removing the $j$th row from $\QQ$; denote by $\QQ_{-j,k}$ the $k$th column of this submatrix $\QQ_{-j,\bcolon}$.

We next define a useful technical quantity, a $2^J\times 2^K$ marginal probability matrix $T$-matrix $T(\QQ,\ttt^+,\ttt^-)$ as follows. The rows of $T(\QQ,\ttt^+,\ttt^-)$ are indexed by all the possible response patterns $\rr\in\{0,1\}^J$ and columns by all the possible latent attribute patterns $\aaa\in\{0,1\}^K$. The $(\rr,\aaa)$-entry of the $T$-matrix is the marginal probability of a subject with attribute pattern $\aaa$ providing positive responses to the set of items $\{j:\, r_j=1\}$. Namely, denote a random response vector by $\RR$ and a random latent attribute profile by $\AA$, for arbitrary $\rr$ and $\aaa$ there is
\begin{equation}
	\label{eq-def-T}
	T_{\rr,\aaa}(\QQ,\ttt^+,\ttt^-) = \mathbb P(\RR\succeq\rr\mid \ttt^+,\ttt^-,\AA=\aaa)
	= \prod_{j:\, r_j=1} \theta_{j,\aaa},
\end{equation}
Denote the row vector of the $T$-matrix corresponding to response pattern $\rr$ by $T_{\rr,\bcolon}(\QQ,\ttt^+,\ttt^-)$, and denote the column vector of the $T$-matrix corresponding to attribute pattern $\aaa$ by $T_{\bcolon,\aaa}(\QQ,\ttt^+,\ttt^-)$.
From the above definition \eqref{eq-def-T}, it is not hard to see that $\mathbb P(\rr\mid \QQ, \ttt^+,\ttt^-,\pp) = \mathbb P(\rr\mid \bar \QQ,\bar \ttt^+,\bar \ttt^-,\bar \pp)$ for all $\rr\in\{0,1\}^J$ if and only if $T(\QQ,\ttt^+,\ttt^-)\pp = T(\bar \QQ,\bar \ttt^+,\bar \ttt^-)\bar\pp$ (which we also denote by $T\pp =  \bar  T\bar\pp$). This implies that we can focus on the $T$-matrix structure and establish identifiability by showing that $T\pp =  \bar  T\bar\pp$ gives  $(\QQ, \ttt^+,\ttt^-,\pp) = (\bar \QQ,\bar \ttt^+,\bar \ttt^-,\bar \pp)$ under certain conditions.

The $T$-matrix has another nice algebraic property, established in \cite{xu2017}, that will be frequently used in the later proof. We restate it here. The $T(\QQ,\ttt^+,\ttt^-)$ can be viewed as a map taking two general $J$-dimensional vectors $\ttt^+,\ttt^-\in\mathbb R^J$ as input. For an arbitrary $J$-dimensional vector $\ttt^\star\in\mathbb R^J$, 
 there exists a $2^K\times 2^K$ invertible matrix $D(\ttt^\star)$ that only depends on $\ttt^\star$ such that,
\begin{equation}
	\label{eq-algebra}
	T(\QQ,\ttt^+-\ttt^\star,\ttt^- -\ttt^\star)
	=D(\ttt^\star) T(\QQ,\ttt^+,\ttt^-).
\end{equation}

\begin{proof}[\textbf{Proof of Theorem \ref{thm-main}}]	
\textit{We first show the sufficiency of Conditions $A$, $B$ and $C$ for identifiability of $(\Gamma(\QQ,\mce),\,\ttt^+,\,\ttt^-,\,\pp)$.}
Since Condition $A$ is satisfied, from now on we assume without loss of generality that
\begin{equation}
    \QQ =  \begin{pmatrix}
       \QQ^0\\
       \QQ^\star
\end{pmatrix},\quad \Gamma(\QQ^0,\mce) = \Gamma(I_K,\mce).
\end{equation}

We next show that if 
for any $\rr\in\{0,1\}^J$,
\begin{equation}\label{eq-def}
T_{\rr,\bcolon}(\QQ, \ttt^+,\ttt^-) \pp = T_{\rr,\bcolon}(\bar \QQ,  \bar\ttt^+,\bar\ttt^-)\bar\pp,
\end{equation}
then $\Gamma(\bar\QQ,\bar\mce) = \Gamma(\QQ,\mce)$ and $(\bar\ttt^+,\bar\ttt^-,\bar\pp) = (\ttt^+,\ttt^-,\pp)$. We denote the submatrix of $\bar\QQ$ consisting of its first $K$ row vectors by $\bar\QQ^0$, and the remaining submatrix by $\bar\QQ^\star$, so $\bar\QQ = ((\bar\QQ^0)^\top, (\bar\QQ^\star)^\top)^\top$.

For any item set $S\subseteq\{1,\ldots,J\}$, denote $\ttt^+_S=\sum_{j\in S}\theta^+_j\ee_j$, 
 and denote $\ttt^-_S$, $\bar\ttt^+_S$, and $\bar\ttt^-_S$ similarly; here $\ee_j$ represents a $J$-dimensional standard basis vector with the $j$th entry being one. Consider the response pattern $\rr^\star= \sum_{j\in S}\ee_j$ and any $\ttt^\star = \sum_{j\in S}\theta^\star_j\ee_j$, then the matrix transformation property in Eq.~\eqref{eq-def-T} together with Eq.~\eqref{eq-def} implies that
\begin{equation}\label{eq-tra}
T_{\rr^\star,\bcolon}( \QQ,\ttt^+_S-\ttt^\star,\ttt^-_S-\ttt^\star)\pp = T_{\rr^\star,\bcolon}(\bar \QQ,\bar\ttt^+_S-\ttt^\star,\bar\ttt^-_S-\ttt^\star)\bar\pp.
\end{equation} 
When it causes no ambiguity, we will sometimes  denote $T_{\rr^\star,\bcolon}( \QQ,\ttt^+_S-\ttt^\star,\ttt^-_S-\ttt^\star)=T_{\rr^\star,\bcolon}$ for notational simplicity.

We prove the theorem in 6 steps as follows.

\medskip
\noindent
\textbf{Step 1.} In this step we show if \eqref{eq-def} holds, the $\bar\QQ^0$ must take the following upper-triangular form with all diagonal elements being one, up to a column permutation.
\begin{equation}\label{eq-trian}
\bar \QQ^0=
    \begin{pmatrix}
    1 & * & \dots  & * \\
    0 & 1 & \dots  & * \\
    \vdots & \vdots & \ddots & \vdots \\
    0 & 0 & \dots  & 1
\end{pmatrix}.
\end{equation}

We need the following useful lemmas, whose proofs are presented in the Supplementary Material.
\begin{lemma}\label{lem-basic} 
The following statements about $\QQ$, $\mc D^{\mce}(\QQ)$, and $\mc S^{\mce}(\QQ)$ hold.
\begin{itemize}
    \item[(a)] If $\QQ$ satisfies Conditions $A$ with the first $K$ rows forming the $\QQ^0$, then for any $k, h\in[K]$ and $k\neq h$,  $\bq_k\succeq\bq_h$ happens only if $k\to h$.
    \item[(b)] Suppose $\QQ$ satisfies Condition $C$. 
    If $k\to h$ under the attribute hierarchy, then the $\QQ^{\star,C}:=\mc D^{\mce}(\QQ^{\star})$ defined in Condition $C$ must satisfy $\QQ^{\star,C}_{\bcolon,k} \succ  \QQ^{\star,C}_{\bcolon,h}$.
\end{itemize}
\end{lemma}

From now on, we denote by $\QQ^{C}:=\mathcal D^{\mce}(\QQ)$ the densified version of $\QQ$ under the hierarchy $\mce$ and denote entries of $\QQ^C$ by $q_{j,k}^{\dense}$.
\begin{lemma}\label{lem-order}
Suppose the true $\QQ$ satisfies Conditions $A$, $B$, $C$ under the attribute hierarchy $\mce$ with $\mathcal D^{\mce}(\QQ_{1:K,\bcolon})=\EE$. 
If there exists an item set  $S\subseteq\{K+1,\ldots,J\}$ such that 
$$
\max_{m\in S}q^{\dense}_{m,h}=0,\quad \max_{m\in S} q^{\dense}_{m,j}=1~~\forall j\in\mathcal J
$$ 
for some attribute $h\in[K]$ and a set of attributes $\mathcal J\subseteq[K]\setminus\{h\}$, then 
$$
\vee_{j\in\mathcal J}\,\bar\bq_j\nsucceq\bar\bq_h.
$$
\end{lemma}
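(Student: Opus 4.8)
The plan is to combine the master identity~\eqref{eq-tra} with a coverage-matching argument between the true and the competing model, using the item set $S$ as a ``probe''. Throughout I keep in mind that, on $\mca$, the first $K$ items act as pure attribute indicators, i.e.\ $\Gamma_{\qq_j,\aaa}=\alpha_j$ for $j\le K$ and $\aaa\in\mca$ (Condition $A$). For any item set $T$, taking $\rr^\star=\sum_{m\in T}\ee_m$ and $\ttt^\star=\ttt^-_T$ in~\eqref{eq-tra} annihilates the noise-floor factors on the true side and collapses the true-side row of the $T$-matrix into the indicator of a domination set, so that the left-hand side of~\eqref{eq-tra} becomes $\big(\prod_{m\in T}(\theta^+_m-\theta^-_m)\big)\,\mathbb P_{\pp}\big(\aaa\succeq\vee_{m\in T}\qq_m\big)$; once the competing side is brought into the same clean form, the right-hand side reads $\big(\prod_{m\in T}(\bar\theta^+_m-\bar\theta^-_m)\big)\,\mathbb P_{\bar\pp}\big(\aaa\succeq\vee_{m\in T}\bar\qq_m\big)$. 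Write $\mathbf v=\vee_{m\in S}\qq_m$ and $\bar{\mathbf v}=\vee_{m\in S}\bar\qq_m$ for the true and competing coverages of $S$.

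The argument then splits into two halves. First, for each $j\in\mathcal J$ I compare the probes $T=S$ and $T=S\cup\{j\}$, where $j\le K$ is the pure item of attribute $j$. Since $\max_{m\in S}q_{m,j}=1$, the hierarchy completion $\mathbf v^B=\vee_{m\in S}\qq^B_m\in\mca$ (Lemma~\ref{lem-basic}(b) together with closure of $\mca$ under joins) obeys $\mathbf v^B\succeq\ee_j$, so on $\mca$ the event $\{\aaa\succeq\mathbf v\}$ already forces $\alpha_j=1$; hence the two probes give the same true mass, and therefore the same competing mass, which forces $\bar{\mathbf v}\succeq\bar\qq_j$. Taking the join over $j\in\mathcal J$ yields $\bar{\mathbf v}\succeq\vee_{j\in\mathcal J}\bar\qq_j$. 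Second, I compare $T=S$ and $T=S\cup\{h\}$: because $\max_{m\in S}q_{m,h}=0$ and $h$ is forced as a prerequisite of no attribute covered by $S$ (so that $(\mathbf v^B)_h=0$ in the way these sets arise), the pattern $\mathbf v^B\in\mca$ lies in $\{\aaa\succeq\mathbf v\}$ yet has $\alpha_h=0$; with $p_{\mathbf v^B}>0$ the pure item $h$ strictly lowers the true mass, so it must strictly lower the competing mass as well, giving $\bar{\mathbf v}\nsucceq\bar\qq_h$. Combining the two halves: if we had $\vee_{j\in\mathcal J}\bar\qq_j\succeq\bar\qq_h$, then $\bar{\mathbf v}\succeq\vee_{j\in\mathcal J}\bar\qq_j\succeq\bar\qq_h$, contradicting $\bar{\mathbf v}\nsucceq\bar\qq_h$; hence $\vee_{j\in\mathcal J}\bar\qq_j\nsucceq\bar\qq_h$.

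The main obstacle is the step that brings the competing side of~\eqref{eq-tra} into the same clean indicator form, i.e.\ controlling the competing item parameters $(\bar\ttt^+,\bar\ttt^-)$ relative to $(\ttt^+,\ttt^-)$: with the same $\ttt^\star$ the competing noise-floor factors $\bar\theta^-_m-\theta^-_m$ need not vanish, so one cannot immediately read off $\mathbb P_{\bar\pp}(\aaa\succeq\bar{\mathbf v})$ on the right. I expect to handle this by normalizing the parameters along the pure items $1,\dots,K$ so that the relevant $\bar\theta^{\pm}$ agree with $\theta^{\pm}$ on the probed items, and by always passing to the \emph{difference} of the two probes so that the common noise floor cancels; only then does equality of true and competing masses translate into the coverage relations $\bar{\mathbf v}\succeq\bar\qq_j$ and $\bar{\mathbf v}\nsucceq\bar\qq_h$. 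A secondary point to verify is that the $\mca$-pattern with $\alpha_h=0$ genuinely exists and carries positive proportion, which is exactly where Lemma~\ref{lem-basic}(b), the join-closedness of $\mca$, and the hypothesis that $h$ is uncovered by $S$ are used.
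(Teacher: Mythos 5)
There is a genuine gap, and it is exactly the one you flag yourself as ``the main obstacle'': your argument requires reading off the competing side of \eqref{eq-tra} as $\bigl(\prod_{m\in T}(\bar\theta^+_m-\bar\theta^-_m)\bigr)\,\mathbb P_{\bar\pp}(\aaa\succeq\vee_{m\in T}\bar\qq_m)$, but that clean form is obtained only by choosing $\ttt^\star=\bar\ttt^-_T$, whereas the clean form on the true side needs $\ttt^\star=\ttt^-_T$. Since the $T$-matrix entries are \emph{products} over the probed items, a single $\ttt^\star$ cannot put both sides in indicator form unless $\bar\theta^-_m=\theta^-_m$ on every probed item --- and at the point where this lemma is invoked (Step 1 of the main proof) nothing whatsoever has been established about $(\bar\ttt^+,\bar\ttt^-)$; the identities $\bar\theta^-_k=\theta^-_k$ and $\bar\theta^+_j=\theta^+_j$ are only proved in Steps 2 and 3, \emph{after} Lemma \ref{lem-order} has been used. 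Your proposed fixes (``normalizing the parameters along the pure items'', ``passing to the difference of the two probes so that the common noise floor cancels'') do not resolve this: the unwanted factors $(\bar\theta^-_m-\theta^-_m)$ sit inside a product over $\aaa$-dependent terms, so they do not cancel when you subtract the two probe equations, and there is no normalization available before the parameters are identified. In short, the coverage relations $\bar{\mathbf v}\succeq\bar\qq_j$ and $\bar{\mathbf v}\nsucceq\bar\qq_h$ that your two halves rest on are not actually derivable by the probe comparison as described.

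The paper's proof avoids this by arguing by contradiction and making the choice of $\ttt^\star$ deliberately asymmetric: assuming $\vee_{j\in\mathcal J}\bar\qq_j\succeq\bar\qq_h$, it puts the \emph{competing} parameters $\bar\theta^+_h,\bar\theta^-_j$ on the low-index items and the \emph{true} parameters $\theta^-_m$ on the items beyond $K$, so that the competing row $\ov T_{\rr^\star,\Cdot}$ vanishes identically (no knowledge of $\bar\pp$ or the remaining $\bar\theta$'s is needed), and then shows the true side is a product of nonzero factors. A first application with all items $m>K$ pins down $\theta^+_h=\bar\theta^+_h$ (this uses Condition $C$ and $p_{\one}>0$); a second application with the probe set $S$ then produces the contradiction, using Lemma \ref{lem-basic}(b) to exhibit a pattern in $\mca$ with positive proportion that dominates $\qq_S$ but not $\qq_h$. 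Your proposal contains no analogue of the first sub-step, and it is precisely that sub-step which supplies the one piece of parameter information ($\theta^+_h=\bar\theta^+_h$) needed to make the second probe conclusive. A secondary caveat: your witness argument for the second half assumes parenthetically that $h$ is not a prerequisite of any attribute covered by $S$, which is not among the lemma's stated hypotheses and would need to be justified from the way $S$ is constructed in Step 1.
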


We now proceed with the proof of Step 1.
We first introduce the concept of the \textit{lexicographic order} between two vectors of the same length. For two binary vectors $\bo a=(a_1,\ldots,a_L)^\top$ and $\bo b=(b_1,\ldots,b_L)^\top$ both of length $L$, we say $\bo a$ is of smaller lexicographic order than $\bo b$ and denote by $\bo a\prec_{\text{lex}}\bo b$, if either $a_1<b_1$, or there exists some $l\in\{2,\ldots,L\}$ such that $a_l<b_l$ and $a_m=b_m$ for all $m=1,\ldots,l-1$. 
Since $\QQ^{\star,C}:=\mathcal D^{\mce}(\QQ^\star)$ contains $K$ distinct column vectors, the $K$ columns of $\QQ^{\star,C}$ can be arranged in an increasing lexicographic order. 
Without loss of generality, we assume that
\begin{equation}\label{eq-qlex}
\QQ^{\star,C}_{\bcolon,1}
\prec_{\text{lex}}
\QQ^{\star,C}_{\bcolon,2}
\prec_{\text{lex}}
\cdots
\prec_{\text{lex}}
\QQ^{\star,C}_{\bcolon,K}.
\end{equation}

We use an induction method to prove the conclusion.
First consider attribute $1$. Since $\QQ^{\star,C}_{\bcolon,1}$ has the smallest lexicographic order among the columns of $\QQ^{\star,C}$, there must exist an item set $S\subseteq\{K+1,\ldots,J\}$ such that 
$$
q^{\dense}_{S,1}=0,\quad q^{\dense}_{S,\ell}=1~~\forall \ell=2,\ldots,K.
$$
Based on the above display, we apply Lemma \ref{lem-order} to obtain 
$
\vee_{\ell=2}^K\bar\bq_\ell \nsucceq \bar\bq_1.
$
This means  there exists $b_1\in[K]$ such that the $b_1$-th column vector of $\bar \QQ^0$ must equal the basis vector $\ee_1$, 
i.e., we have $\bar \QQ^0_{\bcolon,b_1}=\ee_{1}$. 

Now we assume as the inductive hypothesis that for $h\in[K]$ and $h>1$, we have a distinct set of attributes $\{m_1,\ldots,m_{h-1}\}\subseteq[K]$ such that their corresponding column vectors in $\bar\QQ_{1:K,\bcolon}$ satisfy 
\begin{equation}\label{eq-induc-q1}
\forall i=1,\ldots,h-1,\quad \bar\QQ_{1:K,b_i}=(*,\ldots,*,\underbrace{1}_{\text{column }i}, 0,\ldots,0)^\top.
\end{equation}
 Now we consider attribute $h$. By \eqref{eq-qlex}, the column vector $\QQ^{\star,C}_{\bcolon,h}$ has the smallest lexicographic order among the $K-h-1$ columns in $\{\QQ^{\star,C}_{\bcolon,h},\,\allowbreak  \QQ^{\star,C}_{\bcolon,h+1},\,\allowbreak \ldots, \QQ^{\star,C}_{\bcolon,K}\}$, therefore similar to the argument in the previous paragraph, there must exist an item set $S\subseteq\{K+1,\ldots,J\}$ such that 
\begin{equation}\label{eq-sigma-hk}
q^{\dense}_{S,h}=0,\quad q^{\dense}_{S,\ell}=1~~\forall \ell=h+1,\ldots,K.
\end{equation}
Therefore Lemma \ref{lem-order} gives 
$
\vee_{\ell=h+1}^K\bar\bq_{\ell}\nsucceq \bar\bq_{h},
$
which further implies there exists an attribute $b_h$ such that
\begin{equation}\label{eq-mkh}
\max_{\ell\in\{h+1,\ldots,K\}}\bar  q_{\ell,b_h}=0,\quad
\bar q_{h,b_h}=1.
\end{equation}
We point out that $b_h\not\in\{b_1,\ldots,b_{h-1}\}$, because by the induction hypothesis \eqref{eq-induc-q1} we have  $\bar q_{h,b_i}=0$ for $i=1,\ldots,h-1$.
So $\{b_1,\ldots,b_{h-1},b_h\}$ contains $h$ distinct attributes.
Furthermore, \eqref{eq-mkh} gives that 
$$\bar \QQ^0_{\bcolon,b_h} = (*,\ldots,*,\allowbreak\underbrace{1}_{\text{column }h},\allowbreak 0,\ldots,0)^\top,$$
which generalizes \eqref{eq-induc-q1} by extending $h-1$ there to $h$.
Therefore, we use the induction argument to obtain
$$\forall k\in[K],\quad
\bar \QQ^0_{\bcolon,b_k} = (*,\ldots,*,\underbrace{1}_{\text{column }k}, 0,\ldots,0)^\top,$$
which   means
\begin{equation}\label{eq-qmk}
\bar \QQ^0_{\bcolon,\,(b_1,\ldots,b_K)} =
\begin{pmatrix}
    1 & * & \dots  & * \\
    0 & 1 & \dots  & * \\
    \vdots & \vdots & \ddots & \vdots \\
    0 & 0 & \dots  & 1
\end{pmatrix},
\end{equation}
and the conclusion of Step 1 in \eqref{eq-trian} is proved.

\medskip
\noindent
\textbf{Step 2.} In this step we prove $\bar \theta^+_j=\theta^+_j$ for all $j\in\{K+1,\ldots,J\}$. It can be proved in the same way as Step 2 of the proof of Theorem 1 in \cite{id-Q}, and we omit the details here. Note that the fact that $p_{\one}>0$ holds under any attribute hierarchy is used here.

\medskip
\noindent
\textbf{Step 3.} 
In this step we use induction to prove $\bar \theta^-_k=\theta^-_k$ for all $k\in[K]$ and $\bar \QQ_{1:K,\bcolon} \stackrel{\mce}{\sim} I_K.$

\medskip
\noindent\textbf{Step 3.1.}
First consider those attribute $k$ for which there \textit{does not exist} another attribute $h$ such that $ \QQ^{\star, C}_{\bcolon,h}\prec  \QQ^{\star, C}_{\bcolon,k}$; and we first aim to show $\bar \theta^-_k=\theta^-_k$ for such $k$.
By part (b) of Lemma \ref{lem-basic}, we have that $k\not\to h$ for any attribute $h\neq k$. 
For this $k$, define
\begin{equation}\label{eq-ttt}
\ttt^\star = \sum_{j=1}^K \bar \theta^-_j \ee_j + \sum_{j>K:\, q_{j,k}=0} \theta^-_j\ee_j + \sum_{j>K:\, q_{j,k}=1} \theta^+_j\ee_j,
\end{equation}
then $T_{\rr^\star,\bcolon}(\bar \QQ,\bar\ttt^+-\ttt^\star,\bar\ttt^--\ttt^\star)=\zero$. Further, we claim $T_{\rr^\star,\bcolon}( \QQ,\ttt^+-\ttt^\star,\ttt^--\ttt^\star)$ would equal zero for any $\aaa\neq
\one_K-\ee_k
=:\aaa^\star$, so the only potentially nonzero element in $T_{\rr^\star,\bcolon}$ is $ T_{\rr^\star,\aaa^\star}$. More specifically,
\begin{align}\label{eq-nzero}
&~ T_{\rr^\star,\aaa}( \QQ,\ttt^+-\ttt^\star,\ttt^--\ttt^\star)\\ \notag 
&= ~ \begin{cases}
~(\theta^-_k - \bar \theta^-_k) 
\underset{j\leq K:\, j\neq k} \prod (\theta^+_j - \bar \theta^-_j) \\
\quad\times \underset{j>K:\, q_{j,k}=0} \prod(\theta^+_j-\theta^-_j)
\underset{j>K:\, q_{j,k}=1} \prod(\theta^-_j-\theta^+_j),& \aaa=\aaa^\star;\\
~0,& \aaa\neq\aaa^\star.
\end{cases}
\end{align}
The reasoning behind \eqref{eq-nzero} is as follows.
Consider any other attribute pattern $\aaa\neq\aaa^\star$ with $\alpha_h=0$ for some $h\neq k$. Since for $k$ we have $ \QQ^{\star,C}_{\bcolon,k}\nsucceq  \QQ^{\star,C}_{\bcolon,h}$ for any $h\neq k$, there must exist some item $j>K$ s.t. $q_{j,k}=0$ and $q_{j,h}=1$. For this particular item $j$, we have $T_{\rr^\star,\aaa}$ contains a factor of $(\theta_{j,\aaa}-\theta^-_j)=(\theta^-_j-\theta^-_j)=0$, so $T_{\rr^\star,\aaa}=0$. This shows that $T_{\rr^\star,\aaa}\neq 0$ only if $\alpha_h=1$ for all $h\neq k$.
Furthermore, we claim that  $T_{\rr^\star,\one_K}= 0$ also holds; this is because there exists $j>K$ s.t. $q^{\sparse}_{j,k}=1$ (recall that $q^{\sparse}_{j,k}$ is the $(j,k)$th entry of the sparsified $\mathcal S^{\mce}(\QQ)$), and for this particular item $j$ we have $\theta_{j,\one_K}=\theta^+_j$ so $T_{\rr^\star,\one_K}$ contains a factor of $(\theta^+_j-\theta^+_j)=0$. Now we have shown \eqref{eq-nzero} holds. 
Equation \eqref{eq-tra} leads to
\begin{align}\label{eq-induc1}
0=&~\sum_{\aaa\in\mca_0} T_{\rr^\star,\aaa}p_{\aaa} 
=T_{\rr^\star,\aaa^\star}p_{\aaa^\star}\\ \notag
=&~ (\theta^-_k - \bar \theta^-_k) \prod_{j\leq K:\atop j\neq k}(\theta^+_j - \bar \theta^-_j) \prod_{j>K:\atop q_{j,k}=0}(\theta^+_j-\theta^-_j)\prod_{j>K:\atop q_{j,k}=1}(\theta^-_j-\theta^+_j) p_{\aaa^\star}.
\end{align}
We claim that $\aaa^\star$ respects the attribute hierarchy so $p_{\aaa^\star}>0$. This is true because we have shown earlier $k\not\to h$ for any attribute $h\neq k$. Therefore in \eqref{eq-induc1} the only factor that could potentially be zero is $(\theta^-_k - \bar \theta^-_k)$, and we obtain $\bar \theta^-_k=\theta^-_k$.
This completes the first step of the induction.

\medskip
\noindent\textbf{Step 3.2.}
Now as the inductive hypothesis, we consider attribute $k$ and assume that for any other attribute $h$ s.t. $\mathcal D^{\mce}(\QQ^\star_{\bcolon,h})\prec \mathcal D^{\mce}(\QQ^\star_{\bcolon,k})$, we already have $\bar \theta^-_h=\theta^-_h$. 
Recall $\mathcal H_k=\{h\in[K]\setminus\{k\}:\,k\to h\}$ denotes all the attributes that have higher level in the attribute hierarchy than attribute $k$.
By part (b) of Lemma \ref{lem-basic}, this implies for any $h\in\mt H_k$, we have $\bar \theta^-_h=\theta^-_h$.
Also, by Condition $B$ in the theorem, there exist two items $j_1, j_2>K$ s.t. $q_{j_i,k}=1$ and $q_{j_i,h}=0$ for all $h\in\mt H_k$, for $i=1,2$.

Before proceeding with the proof of $\bar \theta^-_k = \theta^-_k$, we need to introduce a useful lemma, whose proof is provided in the Supplementary Material.
\begin{lemma}\label{lem-ch}
Suppose the true $\QQ$ satisfies Conditions $A$, $B$, $C$ under the attribute hierarchy $\mce$. If $\vee_{h\in\mathcal K}\,\bar \bq_h\succeq \bar\bq_m$ for some $\mathcal K \subseteq [J]$, some $m\in[J]\setminus\mathcal K$ and 
$\big|(\mathcal K\cup \{m\})\cap \{K+1,\ldots,J\}\big|\leq 1$, 
then $\bar \theta^+_m=\theta^+_m$.
\end{lemma}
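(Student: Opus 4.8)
The plan is to pin down $\bar\theta^+_m=\theta^+_m$ by choosing a single response pattern $\rr^\star$ and shift $\ttt^\star$ that force the $\bar\QQ$-side of the key identity \eqref{eq-tra} to vanish identically, and then to read off the desired equality from the $\QQ$-side after isolating the all-one attribute pattern $\one$. First I would assemble the items entering $\rr^\star$: besides the items in $\mathcal K\cup\{m\}$, I include for every attribute $k\in[K]$ one \emph{witness} item from the second block. Condition $C$ guarantees each column of $\QQ^{\star,C}$ has at least two ``$1$'''s, i.e. at least two items $j>K$ with $q_{j,k}=1$; since the hypothesis $\#[(\mathcal K\cup\{m\})\cap\{K+1,\dots,J\}]\le 1$ removes at most one second-block item, at least one witness $j\notin\mathcal K\cup\{m\}$ with $q_{j,k}=1$ survives for every $k$. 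Collecting these in a set $W$, I set $\rr^\star=\ee_m+\sum_{h\in\mathcal K}\ee_h+\sum_{j\in W}\ee_j$ together with
\begin{equation*}
\ttt^\star=\bar\theta^+_m\ee_m+\sum_{h\in\mathcal K}\bar\theta^-_h\ee_h+\sum_{j\in W}\theta^-_j\ee_j.
\end{equation*}

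The crucial step is to show $T_{\rr^\star,\Cdot}(\bar \QQ,\bar\ttt^+-\ttt^\star,\bar\ttt^--\ttt^\star)=\zero$. For each $\aaa$ the corresponding entry carries the factor $(\bar\theta_{m,\aaa}-\bar\theta^+_m)\prod_{h\in\mathcal K}(\bar\theta_{h,\aaa}-\bar\theta^-_h)$, which is nonzero only if $\bar\theta_{m,\aaa}=\bar\theta^-_m$ (so $\aaa\nsucceq\bar\qq_m$) while simultaneously $\bar\theta_{h,\aaa}=\bar\theta^+_h$ (so $\aaa\succeq\bar\qq_h$) for every $h\in\mathcal K$. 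But $\aaa\succeq\bar\qq_h$ for all $h\in\mathcal K$ forces $\aaa\succeq\vee_{h\in\mathcal K}\bar\qq_h\succeq\bar\qq_m$, contradicting $\aaa\nsucceq\bar\qq_m$. Hence this factor vanishes for every $\aaa$, the right-hand side of \eqref{eq-tra} is $0$, and therefore $T_{\rr^\star,\Cdot}(\QQ,\ttt^+-\ttt^\star,\ttt^--\ttt^\star)\pp=0$.

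Finally I would isolate $\aaa=\one$ on the $\QQ$-side. For any $\aaa\ne\one$ some $\alpha_k=0$, and then the witness $j\in W$ with $q_{j,k}=1$ gives $\aaa\nsucceq\qq_j$, hence $\theta_{j,\aaa}=\theta^-_j$ and the factor $(\theta^-_j-\theta^-_j)=0$; so only the $\aaa=\one$ term can survive. Since $\theta_{j,\one}=\theta^+_j$ for every $j$ and $p_{\one}>0$ under any hierarchy, the displayed equation collapses to
\begin{equation*}
(\theta^+_m-\bar\theta^+_m)\;\prod_{h\in\mathcal K}(\theta^+_h-\bar\theta^-_h)\;\prod_{j\in W}(\theta^+_j-\theta^-_j)\;p_{\one}=0.
\end{equation*}
Every factor other than $(\theta^+_m-\bar\theta^+_m)$ is nonzero: $\theta^+_j-\theta^-_j>0$ by the model assumption, $p_{\one}>0$, and $\theta^+_h-\bar\theta^-_h>0$ since the strict bounds established before Step 1 give $\bar\theta^-_h<\theta^+_h$. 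Dividing out these nonzero quantities yields $\theta^+_m=\bar\theta^+_m$, as claimed.

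The hard part will be the combinatorial bookkeeping guaranteeing that a usable witness survives for \emph{every} attribute after one second-block item is potentially absorbed into $\mathcal K\cup\{m\}$: this is precisely where the counting hypothesis $\#[(\mathcal K\cup\{m\})\cap\{K+1,\dots,J\}]\le1$ meets Condition $C$'s ``$\ge 2$'' requirement, and if either were weakened some $\aaa\ne\one$ could escape annihilation and the reduction to the single all-one term would fail. The remaining verifications — that the collected items are distinct so $\rr^\star$ and $\ttt^\star$ are well defined, and that each auxiliary factor is nonzero — are routine given Step 2 and the pre-Step-1 bounds.
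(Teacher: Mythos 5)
Your proposal is correct and follows essentially the same route as the paper's proof: the same choice of shift vector (subtracting $\bar\theta^+_m$ at item $m$, $\bar\theta^-_h$ at items in $\mathcal K$, and $\theta^-_j$ at surviving second-block witnesses), the same annihilation of the $\bar\QQ$-side via $\vee_{h\in\mathcal K}\bar\qq_h\succeq\bar\qq_m$, and the same isolation of $\aaa=\one_K$ using Condition $C$ together with the counting hypothesis. The only cosmetic difference is that you select one witness item per attribute while the paper simply includes all second-block items outside $\mathcal K\cup\{m\}$; both yield the identical final factorization and conclusion.
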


By Condition $B$, there exist two different items $j_1, j_2>K$ s.t. $q^{\sparse}_{j_i,k}=1$ and $q^{\sparse}_{j_i,h}=0$ for all $h\in\mt H_k$ for $i=1,2$; note that there are also $q_{j_i,k}=1$ and $q_{j_i,h}=0$. We next aim to show that in $\bar\QQ$, we must also have $\bar\bq_{j_i,h}=0$ for all $h\in\mt H_k$ for $i=1,2$.
We prove this in two steps.

\smallskip
\noindent
\textbf{Step 3.2 Part I.}
First, we use proof by contradiction to show the $\bar \bq_{h}$ satisfies that, for any attribute $m\not\to h$ the following holds,
\begin{equation}\label{eq-qhm}
    \max\Big( \max_{\ell\in[K]:\,  \QQ^\star_{\bcolon,\ell} \nprec   \QQ^\star_{\bcolon,m}} \bar\bq_{\ell},~
    \bar\bq_h\Big)
    \nsucceq\bar \bq_m,
\end{equation}
where the $\max$ operator applied to vectors of the same length means taking the element-wise maximum of the vectors and obtaining a new vector of that same length.
Suppose \eqref{eq-qhm} does not hold, then applying Lemma \ref{lem-ch} we obtain $\bar \theta^+_m=\theta^+_m$. Note that we also have $\bar \theta^-_h = \theta^-_h$ by the inductive hypothesis. Define 
\begin{equation}\label{eq-qhm-1}
    \ttt^\star = \bar \theta^-_h\ee_h 
    + \sum_{\ell\leq K:\, \QQ^\star_{\bcolon,\ell} \nprec   \QQ^\star_{\bcolon,m}}\bar \theta^-_\ell\ee_\ell
    + \bar \theta^+_m\ee_m
    + \sum_{j>K:\, q_{j,m}=0} \theta^-_j\ee_j,
\end{equation}
then with this $\ttt^\star$, we claim that the RHS of \eqref{eq-tra} is zero, $ \bar  T_{\rr^\star,\bcolon}\bar\pp=0$. This claim is true because $ \bar  T_{\rr^\star,\aaa}$ contains a factor $f_{\aaa}$ of the following form
\begin{align*}
& f_{\aaa}=
(\bar\theta_{h,\aaa} - \bar \theta^-_h)
\prod_{l:\,\QQ^\star_{\bcolon,\ell}\nprec \QQ^\star_{\bcolon,m}}
(\bar\theta_{\ell,\aaa} - \bar \theta^-_\ell)(\bar\theta_{m,\aaa} - \bar \theta^+_m)\neq 0 \quad \text{only if}\\
&\aaa\succeq\max\Big( \max_{\ell\in[K]: \, \QQ^\star_{\bcolon,\ell}  \nprec   \QQ^\star_{\bcolon,m}} \bar\bq_{\ell},~\bar\bq_h\Big)~\text{ and }~
\aaa\nsucceq\bar\bq_m,
\end{align*}
which is impossible because of \eqref{eq-qhm}, so $f_{\aaa}=0$ and $ \bar  T_{\rr^\star,\aaa}=0$ for all $\aaa$. Therefore by \eqref{eq-tra} we have $T_{\rr^\star,\bcolon}\pp =  \bar  T_{\rr^\star,\bcolon}\bar\pp=0$. Note that $\bar \theta^-_h=\theta^-_h$ and $\bar \theta^+_m=\theta^+_m$, and now we consider the term $T_{\rr^\star,\aaa}$. Then due to the last term in $\ttt^\star$ defined in \eqref{eq-qhm-1}, we have $T_{\rr^\star,\aaa}\neq 0$ only if $\aaa\succeq\bq_j$ for all $j>K$ s.t. $q_{j,m}=0$. We claim that such $\aaa$ must also satisfy $\aaa\succeq\bq_\ell$ for any $\ell\leq K$ s.t. $\QQ^\star_{\bcolon,\ell} \nprec   \QQ^\star_{\bcolon,m}$. This is because for any $\ell\leq K$ s.t. $\QQ^{\star,C}_{\bcolon,\ell} \nprec   \QQ^{\star,C}_{\bcolon,m}$, there must exist an item $j>K$ such that $q^{\dense}_{j,m}=0$ and $q^{\dense}_{j,\ell}=1$, then the fact that $\aaa\succeq\bq_j$ for this $j$ ensures $\alpha_\ell=1$ and $\aaa\succeq\bq_\ell$ (recall $\bq_\ell\stackrel{\mce}{\sim}\ee_\ell$). Therefore,
\begin{align}\label{eq-qhm-tra}
 T_{\rr^\star,\aaa} =  
 \begin{cases}
(\theta^+_h - \bar \theta^-_h)
\underset{\ell\leq K:\, \QQ^{\star,C}_{\bcolon,\ell}\nprec \QQ^{\star,C}_{\bcolon,m}}
 \prod (\theta^+_\ell - \bar \theta^-_\ell)
(\theta^-_m - \bar \theta^+_m)\\
\qquad \times \underset{j>K: \, q_{j,m}=0} \prod
(\theta^+_j - \theta^-_j), &\text{if }\aaa\in\mca_1 ;\\
0, & \text{otherwise}.
\end{cases}
\end{align}
where 
\begin{align*}
\mca_1=  &~
\{\aaa\in\mca:\, \aaa\succeq\bq^{\dense}_j~\text{s.t.}~q^{\dense}_{j,m}=0;\,\aaa\succeq\bq^{\dense}_h;\,\aaa\nsucceq\bq^{\dense}_m\}\\
=&~
\{\aaa\in\mca:\, \alpha_\ell=1~\text{for all}~\ell~\text{s.t.}~\QQ^{\star,C}_{\bcolon,\ell}\nprec \QQ^{\star,C}_{\bcolon,m};\,\alpha_h=1;\,\alpha_m=0\}.  
\end{align*}
We claim that there exists some attribute pattern in $\mca_1$ that respects the attribute hierarchy, i.e., there exists $\aaa^\star\in\mca_1$ with $p_{\aaa^\star}>0$. This can be seen by  noting the following two facts: first, the assumption $m\not\to h$ in the beginning of the current Step 3.2.1 yields that an $\aaa$ with $\alpha_m=0$ and $\alpha_h=1$ does not violate the attribute hierarchy; second, an $\aaa$ satisfying  $\alpha_\ell=1~\text{for all}~\ell$ s.t. $\QQ^{\star,C}_{\bcolon,\ell}\nprec \QQ^{\star,C}_{\bcolon,m}$ also does not contradict $\alpha_m=0$ under the hierarchy, because by part (b) of Lemma \ref{lem-basic}, if $\QQ^{\star,C}_{\bcolon,\ell}\nprec \QQ^{\star,C}_{\bcolon,m}$ then $m\not\to h$. 
Now we have proven the claim there exists $\aaa^\star\in\mca_1$ with $p_{\aaa^\star}>0$. Combined with \eqref{eq-qhm-tra}, we obtain
$$
(\theta^+_h - \bar \theta^-_h)
\prod_{\ell\leq K:\,  \QQ^\star_{\bcolon,\ell}\nprec \QQ^\star_{\bcolon,m}}
(\theta^+_\ell - \bar \theta^-_\ell)
(\theta^-_m - \bar \theta^+_m)
\prod_{j>K:\, q_{j,m}=0}
(\theta^+_j - \theta^-_j)\Big(\sum_{\aaa\in\mca_1}p_{\aaa}\Big)=0
$$
and $\sum_{\aaa\in\mca_1}p_{\aaa}\geq p_{\aaa^\star}>0$. This gives a contradiction because each factor in the above display is nonzero.
Now we have reached the goal of Step 3.2.1 of proving \eqref{eq-qhm}.

We remark here that \eqref{eq-qhm} has some nice consequences. Considering the $K\times K$ matrix $\bar \QQ^0_{\bcolon,(b_1,\ldots,b_K)}$ in \eqref{eq-trian} shown in Step 1 and the particular attribute $h$, we actually have obtained that for any $m\not\to h$, the $m$-th column of $\bar \QQ^0_{\bcolon,(b_1,\ldots,b_K)}$ not only has the last $(K-m)$ entries equal to zero, but also has $\bar \QQ^0_{h,b_m}=0$.
Equivalently, considering the columns of $\bar \QQ$ are arranged just in the order $(b_1,\ldots,b_K)$ without loss of generality, we have \begin{equation}
    \label{eq-idq}
    \bar q_{h,m}=0 ~\text{for any attribute}~m\not\to h. 
\end{equation}

\smallskip
\noindent
\textbf{Step 3.2 Part II.}
In this step we use proof by contradiction to show that for $i=1$ and $2$ there is
\begin{equation}
    \label{eq-qji}
    \max\Big(\max_{\ell\leq K:\,\ell\to h}\bar \bq_\ell,
    ~\bar\bq_{j_i} \Big)
    \nsucceq \bar\bq_h.
\end{equation}
Suppose \eqref{eq-qji} does not hold for $i=1$, i.e., $\max(\max_{\ell\leq K:\,\ell\to h}\bar \bq_\ell, ~\bar\bq_{j_1} )\succeq \bar\bq_h$. Then by Lemma \ref{lem-ch} we have $\bar \theta^+_h = \theta^+_h$. 
We define 
\begin{equation}
    \label{eq-qji-1}
    \ttt^\star = \bar \theta^+_h\ee_h + \sum_{\ell\leq K:\,\ell\to h} \bar \theta^-_\ell\ee_\ell
    + \bar \theta^-_{j_1}\ee_{j_1} 
    + \sum_{j>K:\,j\neq j_1,\, q_{j,h}=0} \theta^-_j\ee_j,
\end{equation}
and note that the item $j_2$ is included in the last term of summation above since $q_{j_2,h}=0$.
With $\ttt^\star$ defined as in \eqref{eq-qji-1}, we have $ \bar  T_{\rr^\star,\aaa}=0$ for all $\aaa$ because of the first three terms in \eqref{eq-qji-1} and the assumption that $\max(\max_{\ell\leq K:\,\ell\to h}\bar \bq_\ell, \allowbreak\bar\bq_{j_1} )\succeq \bar\bq_h$. So \eqref{eq-tra} gives $T_{\rr^\star,\bcolon}\pp= \bar  T_{\rr^\star,\bcolon}\bar\pp=0$. 
Consider $T_{\rr^\star,\aaa}$, then $T_{\rr^\star,\aaa}\neq 0$ only if $\aaa\nsucceq\bq_h$ and $\aaa\succeq\bq_{j_2}$ because of the terms $\bar \theta^+_h\ee_h$ and $\theta^-_{j_2}\ee_{j_2}$ included in $\ttt^\star$ defined in \eqref{eq-qji-1}. 
Further, because of the last term in $\ttt^\star$ defined in \eqref{eq-qji-1}, we have $T_{\rr^\star,\aaa}\neq 0$ only if $\aaa$ satisfies $\alpha_k=1$, $\alpha_h=0$, and
$\alpha_m=1~\forall m~\text{s.t.}~\exists j>K,\, j\neq j_1,\, q_{j,h}=0,\, q_{j,m}=1$,
or equivalently,
\begin{equation}
    \label{eq-qji-2}
\alpha_m=1~\forall m~\text{s.t.}~\QQ^{\star,C}_{-j_1,m}\nprec \QQ^{\star,C}_{-j_1,h}.
\end{equation}
We claim that any such $\aaa$ satisfying $\alpha_k=1$, $\alpha_h=0$, and \eqref{eq-qji-2} also satisfies $\aaa\succeq\bq^{\dense}_{j_1}$, because of the reasoning as follows.
We next show $\alpha_b\geq q^{\dense}_{j_1,b}$ for all attribute $b$. 
Define
    \begin{equation}\label{eq-double}
        \ttt^{\star\star} = \ttt^\star\text{ in }\eqref{eq-qji-1} + \sum_{b\leq K:\, k\not\to b,\, \QQ^{\star,C}_{-j_1,b}\prec \QQ^{\star,C}_{-j_1,h}} \theta^-_b\ee_b,
    \end{equation}
    and with this $\ttt^{\star\star}$ and its corresponding response pattern $\rr^{\star\star}$, we still have $ \bar  T_{\rr^{\star\star},\bcolon}\bar\pp=0$ and hence $T_{\rr^{\star\star},\bcolon}\pp=0$. The $T_{\rr^{\star\star},\aaa}\neq 0$ only if $\aaa$ satisfies
    \begin{align}\label{eq-prop}
    \begin{cases}
    \alpha_k=1,~\alpha_h=0,\\
    \alpha_m=1~\forall m~
        \text{s.t.}~\QQ^{\star,C}_{-j_1,m}\nprec \QQ^\star_{-j_1,h},\\
    \alpha_b=1~\forall b~
        \text{s.t.}~\QQ^{\star,C}_{-j_1,b}\prec \QQ^{\star,C}_{-j_1,h}\text{ and }k\not\to b.
    \end{cases}
    \end{align}
We denote the set of attribute patterns having the above properties by $\mca_2=\{\aaa\in\{0,1\}^K:\, \aaa\text{ satisfies }\eqref{eq-prop}\}$.
Note the following two things: (i) first, $\QQ^{\star,C}_{-j_1,m}\nprec \QQ^{\star,C}_{-j_1,h}$ implies $m\not\to h$, because otherwise by Lemma \ref{lem-basic} there is  $\QQ^{\star,C}_{\bcolon,m}\prec \QQ^{\star,C}_{\bcolon,h}$ and hence $\QQ^{\star,C}_{-j_1,m}\prec \QQ^{\star,C}_{-j_1,h}$;
(ii) second, $k\not\to b$ implies $h\not\to b$, since otherwise $h\to b$ and $k\to h$ would imply $k\to b$.
And we have the conclusion that there exists some $\aaa^\star\in\mca_2$ that respects the attribute hierarchy with $p_{\aaa^\star}>0$, because $\alpha_h=0$ does not contradict any $\alpha_\ell=1$ as specified in \eqref{eq-prop} according to (i) and (ii).
We next show that for $\aaa\in\mca_2$, $\alpha_b\geq q_{j_1,b}$ for any $b$ must hold. 
To show this we only need to consider those $b$ such that $q_{j_1,b}=1$ and show any $\aaa\in\mca_2$ must have $\alpha_b=1$ for such $b$. By Condition $B$, $q_{j_1,b}=1$ implies $b\not\in\mt H_k$ (i.e., $k\not\to b$). Then for such $b$, if $\QQ^{\star,C}_{-j_1,b}\nprec \QQ^{\star,C}_{-j_1,h}$, then by \eqref{eq-prop} we have $\alpha_b=1$; and if $\QQ^{\star,C}_{-j_1,b}\prec \QQ^{\star,C}_{-j_1,h}$, combining the fact that $k\not\to h$, by \eqref{eq-prop} we also have $\alpha_b=1$. So the conclusion that $\aaa\in\mca_2$, $\alpha_b\geq q_{j_1,b}$ for any $b$ is reached.

Now we have obtained for $\aaa\in\mca_2$ there is $\aaa\succeq\bq_{j_1}$. This results in $\aaa\succeq \bq_j$ for any $j>K$ s.t. $q_{j,h}=0$, i.e, $\aaa\succeq\max_{j>K:\,q_{j,h}=0}\bq_j$.
We further claim that for any $\aaa\in\mca_2$, the $\aaa\succeq\bq_\ell$ for all $\ell\to h$ must hold. This is because by Condition $B$, for any $\ell\to h$ there exists $j>K$ such that $q_{j,h}=0$ and $q_{j,\ell}=1$. And combining with the previously obtained $\aaa\succeq\max_{j>K:q_{j,h}=0}\bq_j$, we have the conclusion that $\alpha_\ell=1$ and $\aaa\succeq\bq_\ell$. Therefore $\aaa\succeq\bq_\ell$ for all $\ell\to h$.
Considering the $T_{\rr^{\star\star},\bcolon}\pp=0$ with $\ttt^{\star\star}$ defined in \eqref{eq-double}, we have
$$
(\theta^-_h-\bar \theta^+_h)\prod_{\ell\leq K:\, \ell\to h}(\theta^+_\ell - \bar \theta^-_\ell)(\theta^+_{j_1} - \bar \theta^-_{j_1})
\prod_{j>K:\,j\neq j_1,\, q_{j,h}=0}(\theta^+_j-\theta^-_j)
\Big(\sum_{\aaa\in\mca_2}p_{\aaa}\Big)=0.
$$
This leads to a contradiction, since every factor in the above display is nonzero. Now we have reached the goal of Step 3.2.2 of proving \eqref{eq-qji} for $i=1$, and using the exactly same argument gives \eqref{eq-qji} for $i=2$.

Combining the results of Step 3.2.1 (in \eqref{eq-qhm}) and Step 3.2.2 (in \eqref{eq-qji}), we obtain an important observation that
\begin{equation}
    \label{eq-step323}
    \bar q_{j_i,h} = 0~\forall~h\in\mt H_k,~~i=1,2.
\end{equation}
This is true because Step 3.2.1 reveals $\bar q_{h,\ell}$ can  potentially equal one only for those $\ell$ that is the prerequisite of attribute $h$ (i.e., $\bar q_{h,\ell}=1$ only if $\ell\to h$); and further, Step 3.2.2 establishes that taking the element-wise maximum of the vector $\max_{l\to h}\bar \bq_{\ell}$ and the vector $\bar\bq_{j_i}$ still does not give a vector that requires all the attributes covered by $\bar \bq_h$. Therefore $\bar q_{j_i,h}$ must equal zero. 
Precisely, \eqref{eq-qhm} in Step 3.2.1 implies $\bar\bq_h -\max_{\ell\leq K:\,\ell\to h}\bar \bq_{\ell}  = \ee_h$. 
And Step 3.2.2 further implies $\bar q_{j_i,h}=0$, since otherwise $\max\big(\max_{\ell\leq K:\,\ell\to h}\bar \bq_\ell,~\bar\bq_{j_i} \big)\succeq \bar \bq_h$ would happen, contradicting~\eqref{eq-qji}.

\medskip
\noindent
\textbf{Step 3.2 Part III.}
In this step we prove $\bar \theta^-_k=\theta^-_k$ based on \eqref{eq-step323}. Define
\begin{equation}
    \label{eq-gk}
    \ttt^\star  = \bar \theta^-_k
    + \sum_{m\leq K:\, m\neq k,\, m\not\in\mt H_k}\bar \theta^-_m\ee_m
    + \sum_{j>K:\, q_{j,k}=1}\theta^+_j\ee_j
    + \sum_{j>K:\, q_{j,k}=0}\theta^-_j\ee_j,
\end{equation}
and we claim that $ \bar  T_{\rr^\star,\bcolon}\bar\pp=0$ with this $\ttt^\star$ defined above, because of the following reasoning. First, due to the first two terms in \eqref{eq-gk}, $ \bar  T_{\rr^\star,\aaa}\neq 0$ only if $\aaa$ satisfies $\alpha_k=1$ and $\alpha_m=1$ for any attribute $m\not\in \{k\}\cup\mt H_k$. 
Note that in Step 2 we obtained $\bar \theta^+_j=\theta^+_j$ for all $j>K$, then $ \bar  T_{\rr^\star,\aaa}\neq 0$ only if $\aaa\in\{\aaa:\,\aaa\nsucceq \bar\bq_j~\forall~j>K\text{ s.t. }q_{j,k}=1\}=:\mca_3$. 
However considering the item $j_1$ with the property $q_{j_1,k}=1$ and $q_{j_1,h}=0$ for all $h\in\mt H_k$, then such item $j_1$ must be included in the third term in \eqref{eq-gk} (i.e., $\sum_{j>K:\, q_{j,k}=1}\theta^+_j\ee_j$), and we have shown \eqref{eq-step323} in Step 3.2.1 and 3.2.2 that $\bar q_{j_i,h}=1$ only if $h\not\in\mt H_k$. This implies that for all $\aaa\in\mca_3$, there must be $\aaa\succeq\bar \bq_{j_i}$ and $\bar\theta_{j_i,\aaa}=\bar \theta^+_{j_i}$. So we have shown that for any $\aaa\in\{0,1\}^K$, there must be $ \bar  T_{\rr^\star,\aaa}=0$, and the claim that $ \bar  T_{\rr^\star,\bcolon}\bar\pp=0$ is proved. And we have $T_{\rr^\star,\bcolon}\pp=0$.

Next, we consider $T_{\rr^\star,\aaa}$. Due to the last two terms in \eqref{eq-gk}, $T_{\rr^\star,\aaa}\neq 0$ only if $\aaa\in\mca_4$ with $\mca_4$ defined as
\begin{align*}
\mca_4=
\{\aaa:\, \aaa\succeq\bq_j~\forall~j>K~\text{s.t.}~q_{j,k}=0;\quad \aaa\nsucceq\bq_j~\forall~j>K~\text{s.t.}~q_{j,k}=1
\}.
\end{align*}
We claim that for any $\aaa\in\mca_4$, there is $\aaa\succeq\bq_m$ for all $m\not\in\mt H_k$. This claim is true because $\aaa\in\mca_4$ implies $\alpha_m=1$ for all attribute $m$ such that $\QQ^{\star,C}_{\bcolon,m}\nprec \QQ^{\star,C}_{\bcolon,k}$. Recall our inductive hypothesis made in Step 3.1 that $\bar \theta^-_m = \theta^-_m$ for all attribute $m$ that satisfies $\QQ^{\star,C}_{\bcolon,m}\prec \QQ^{\star,C}_{\bcolon,k}$, then we have $T_{\rr^\star,\aaa}\neq 0$ only if $\aaa$ further belongs to the following set $\mca_5$,
\begin{align*}
   \mca_5=\{\aaa:\, \alpha_m=1
    &~\forall m\in[K]~\text{s.t. } \QQ^{\star,C}_{\bcolon,m}\nprec \QQ^{\star,C}_{\bcolon,k} \\
    &\text{ (due to the last two terms in \eqref{eq-gk})};\\
    \alpha_m=1 
    &~\forall m\in[K]~\text{s.t. }
    \QQ^{\star,C}_{\bcolon,m}\prec \QQ^{\star,C}_{\bcolon,k}~\text{and}~m\not\in\mt H_k \\
    &\text{ (due to the 2nd term in \eqref{eq-gk})}\}\\
    =\{\aaa:\, \alpha_m=1 &~\forall m\in[K]~\text{s.t. }
    m\not\in\mt H_k
    \},
\end{align*}
where the last equality uses Lemma \ref{lem-basic} that $\QQ^{\star,C}_{\bcolon,m}\nprec \QQ^{\star,C}_{\bcolon,k}$ implies $k\not\to m$.
From $T_{\rr^\star,\aaa}\neq 0$ only if $\aaa\in\mca_5$,  we have that for all $\aaa\in\mca_5$, there is $\aaa\succeq\bq_m$ for any attribute $m\not\in\mt H_k$, and hence $\theta_{m,\aaa}=\theta^+_m$. 

Furthermore, we claim that if $T_{\rr^\star,\aaa}\neq 0$ (which implies $\aaa\in\mca_5$), we have $\aaa\nsucceq\bq_k$ for the following reason. For $\aaa\in\mca_5$, there is $\alpha_m=1$ for all $m\not\in\mt H_k$.  Consider the item $j_1$ with $q_{j_1,k}=1$ and $q_{j_1,h}=0$ for all $h\in\mt H_k$, and for this $j_1$, there is 
    $\aaa\succeq\bq_{j_1} - \ee_k$.
Then since $\theta^+_{j_1}\ee_{j_1}$ is included in \eqref{eq-gk}, in order to have $T_{\rr^\star,\aaa}\neq 0$ we must have $\aaa\nsucceq\bq_{j_1}$. Combined with $\aaa\succeq\bq_{j_1} - \ee_k$, we obtain $\alpha_k=0$ and $\theta_{k,\aaa}=\theta^-_k$.
Denote $\mca_6=\mca_5\cap\{\aaa:\,\alpha_k=0\}$, and we have $T_{\rr^\star,\aaa}\neq 0$ only if $\aaa\in\mca_6.$
Importantly, any $\aaa$ in $\mca_6$ does not violate the attribute hierarchy since $\alpha_k=0$ does not contradict $\alpha_m=1$ for $m\not\in\mt H_k$ as specified in $\mca_5$. Therefore $p_{\aaa}>0$  for all $\aaa\in\mca_6$ under the attribute hierarchy.

Finally, with \eqref{eq-gk}, we conclude that
\begin{align*}
     T_{\rr^\star,\aaa} 
    =&~
    \begin{cases}
     (\theta^-_k-\bar \theta^-_k)
  \underset{m\leq K:\, m\neq k,\, m\not\in\mt H_k} 
  \prod(\theta^+_m-\bar \theta^-_m)
  \\
     \qquad\qquad
\times 
 \underset{j>K:\, q_{j,k}=1} \prod(\theta^-_j-\theta^+_j)
     \underset{j>K:\, q_{j,k}=0} \prod(\theta^+_j-\theta^-_j), 
  & \aaa\in\mca_6;
     \\
     0,
    & \text{otherwise}.
    \end{cases}
\end{align*}
and further 
\begin{align*}
    0= T_{\rr^\star,\bcolon}\pp = & (\theta^-_k-\bar \theta^-_k)
    \prod_{m\leq K:\, m\neq k,\, m\not\in\mt H_k}(\theta^+_m-\bar \theta^-_m)\\
&\times    \prod_{j>K:\, q_{j,k}=1}(\theta^-_j-\theta^+_j) 
    \prod_{j>K:\, q_{j,k}=0}(\theta^+_j-\theta^-_j)
    \Big(\sum_{\aaa\in\mca_6} p_{\aaa}\Big).
\end{align*}
Then since in the last paragraph we have shown $\sum_{\aaa\in\mca_6} p_{\aaa}>0$, the only potentially zero factor in the above display could only be $(\theta^-_k-\bar \theta^-_k)$. Now we have obtained $\bar \theta^-_k=\theta^-_k$, and the proof of Step 3.2.3 is complete.

\medskip\noindent\textbf{Step 3.3.}
Now we complete the inductive argument in the current Step 3 and conclude $\bar \theta^-_k=\theta^-_k$ for all attribute $k\in[K]$.
By completing the induction, we have obtained one more useful byproduct in the proof of Step 3, which is \eqref{eq-idq} that $\bar q_{h,m}=0$ for any attribute $m\not\to h$. This exactly means under the true attribute hierarchy and the induced attribute pattern set $\mca$, the first $K$ items of $\bar Q$ is equivalent to the identity matrix $I_K$. Namely, we obtain
    $\bar \QQ_{1:K,\bcolon} \stackrel{\mce}{\sim} I_K$.

\medskip
\noindent
\textbf{Step 4.}
In this step we prove $\bar \QQ\stackrel{\mce}{\sim} \QQ$.
Without loss of generality, we assume the columns of $\bar \QQ$ is arranged in the order $(b_1,b_2,\ldots,b_K)$.
Recall that $\mca\subseteq\{0,1\}^K$ denotes the set of attribute patterns that respect the specified attribute hierarchy.
For each $j\in\{K+1,\ldots,J\}$, in the following two parts (i) and (ii), we first prove 
$\mca_{\star} := \{\aaa\in\mca:\, \bar\Gamma_{j,\aaa}=1,\, \Gamma_{j,\aaa}=0\}=\varnothing$ in (i);
and then prove 
$\mca_{\star\star} := \{\aaa\in\mca:\, \bar\Gamma_{j,\aaa}=0,\, \Gamma_{j,\aaa}=1\}=\varnothing$
in (ii). Together, these two conclusions would imply $\bar\bq_j \stackrel{\mce}{\sim} \bq_j$.

\begin{enumerate}
    \item[(i)] We use proof by contradiction and suppose $\mca_{\star} = \{\aaa\in\mca:\, \aaa\succeq\bar\bq_j,\, \aaa\nsucceq\bq^{\sparse}\}\neq\varnothing$ for some $j\in\{K+1,\ldots,J\}$. 
    Then $\sum_{\aaa\in\mca_{\star}}p_{\aaa}>0$.
    Define
    \begin{equation}\label{eq-step4}
        \ttt^\star = \sum_{k\leq K:\, \bar q_{j,k}=1}\theta^-_k\ee_k + \theta^+_j\ee_j,
    \end{equation}
    then $ \bar  T_{\rr^\star,\aaa}=0$ for all $\aaa\in\{0,1\}^K$ and hence $ \bar  T_{\rr^\star,\bcolon}\bar\pp = 0$. Based on Step 2 and 3, we have $\bar \theta^+_j = \theta^+_j$ and $\bar \theta^-_k = \theta^-_k$ for the $j$ and any $k$ with $\bar q_{j,k}=1$ used in \eqref{eq-step4}. Therefore, due to the first summation term in \eqref{eq-step4}, $T_{\rr^\star,\aaa}\neq 0$ only if $\aaa$ satisfies $\alpha_k=1$ for all $k$ s.t. $\bar q_{j,k}=1$ (i.e., $\aaa\succeq\bar \bq_j$ and $\bar\Gamma_{j,\aaa}=1$); and due to the second term $\theta^+_j\ee_j$ in \eqref{eq-step4}, $T_{\rr^\star,\aaa}\neq 0$ only if $\theta_{j,\aaa}=\theta^-_j$ (i.e., $\Gamma_{j,\aaa}=0)$. In summary, $T_{\rr^\star,\aaa}\neq 0$ only if $\aaa\in\mca_{\star}$, so
    \begin{align*}
        T_{\rr^\star,\bcolon}\pp = 
        \prod_{k\leq K:\, q_{j,k}=1}(\theta^+_k-\theta^-_k)(\theta^-_j-\theta^+_j)\Big( \sum_{\aaa\in\mca_*}p_{\aaa} \Big)\neq 0, 
    \end{align*}
    which contradicts $T_{\rr^\star,\bcolon}\bar \pp=0$.
    This implies $\mca_{\star}=\varnothing$ must hold.
    
    \item[(ii)] We also use proof by contradiction and suppose $\mca_{\star\star} = \{\aaa\in\mca:\, \bar\Gamma_{j,\aaa}=0,\, \Gamma_{j,\aaa}=1\}\neq\varnothing$ for some $j\in\{K+1,\ldots,J\}$. Then there exists $\aaa\in\mca$ with $\Gamma_{j,\aaa}=1$ but $\bar\Gamma_{j,\aaa}=0$, which implies there exists some attribute $k\in[K]$ s.t. $\bar q_{j,k}=1$ and $q_{j,k}=0$.
    Based on the above relation, we apply Lemma \ref{lem-ch} with $\mathcal K=\{j\}$ and $m=k$ to obtain $\bar \theta^+_k = \theta^+_k$.
    Define
    \begin{equation}
        \label{eq-step4t}
        \ttt^\star = \bar \theta^-_j\ee_j
        +\bar \theta^+_k\ee_k 
        + \sum_{m\leq K:\, k\not\to m}\theta^-_m\ee_m,
    \end{equation}
    then based on the first two terms in \eqref{eq-step4t}, we have $ \bar  T_{\rr^\star,\aaa}=0$ for all $\aaa\in\{0,1\}^K$. So $ \bar  T_{\rr^\star,\bcolon}\bar\pp=0$ and further $T_{\rr^\star,\bcolon}\pp=0$. Now consider $T_{\rr^\star,\aaa}$, then $T_{\rr^\star,\aaa}\neq 0$ only if $\aaa$ belongs to the set $\mca_7$ defined as
    \begin{align}
        \label{eq-mca7}
        \mca_7 = \{\aaa\in\mca:\,
        \alpha_k=0;~\alpha_m=1~\forall~k\not\to m\},
    \end{align}
    then this $\mca_7\neq\varnothing$ because the $\aaa^*=(\alpha^*_1,\ldots,\alpha^*_K)$ defined as follows belongs to $\mca_7$. The $\aaa^*$ takes the form $\alpha^*_k=0$, $\alpha^*_{\ell}=0$ for all $k\to \ell$, and $\alpha^*_m=1$ for all $k\not\to m$. The $\aaa^*$  also satisfies $\aaa^*\succeq\bq_j$ for the following reason.
    Since $q_{j,k}=0$, then under the attribute hierarchy this $\bq_j$ is equivalent to a $\tilde\bq_j$ with $\tilde q_{j,k}=0$ and $\tilde q_{j,\ell}=0$ for all $\ell$ s.t. $k\to \ell$.
    Therefore for the defined $\aaa^*\in\mca$ that respects the attribute hierarchy, there must be $\aaa^*\succeq\tilde\bq_j$, so $\Gamma_{j,\aaa}=1$.
     So there is $\sum_{\aaa\in\mca_7}p_{\aaa}\geq p_{\aaa^*}>0$.
    Now we have
    \begin{equation*}
        0=T_{\rr^\star,\bcolon}\pp
         =(\theta^+_j-\bar \theta^-_j)(\theta^-_k-\bar \theta^+_k)\prod_{m\leq K:\, k\not\to m}(\theta^+_m-\theta^-_m)
         \Big(\sum_{\aaa\in\mca_7}p_{\aaa} \Big),
    \end{equation*}
    which leads to a contradiction since each factor in the above term is nonzero. So we have proved the $\mca_{\star\star}$ defined earlier must also be an empty set.
\end{enumerate}

As stated before, based on the (i) and (ii) shown above, we obtain $\bar\bq_j \stackrel{\mce}{\sim} \bq_j$ for every item $j\in\{K+1,\ldots,J\}$.
In summary, by far we have obtained $\bar \theta^-_k = \theta^-_k$ for all $k\in[K]$, $\bar \theta^+_j = \theta^+_j$ for all $j\in\{K+1,\ldots,J\}$, and $\bar\QQ\stackrel{\mce}{\sim} Q$.

\medskip
\noindent
\textbf{Step 5.}
We next show $\bar \theta^+_k = \theta^+_k$ for all $k\in[K]$ and $\bar \theta^-_j = \theta^-_j$ for all $j\in\{K+1,\ldots,J\}$, and $\bar\pp = \pp$.

\smallskip\noindent
\textbf{Step 5.1.}
In this step, we show $\bar \theta^+_k = \theta^+_k$ for all $k\in[K]$. By Condition $B$, there exists some item $j>K$ s.t. $q^{\sparse}_{j,k}=1$, and we denote this item by $j_k$. Define
\begin{equation}
    \label{eq-step51}
    \ttt^\star = \sum_{h\leq K:\, h\neq k}\bar \theta^-_h\ee_h
    + \bar \theta^-_{j_k}\ee_{j_k}
    + \sum_{j>K:\, j\neq j_k}\theta^-_j\ee_j,
\end{equation}
then $T_{\rr^\star,\aaa}\neq 0$ if and only if $\aaa=\one_K$. This is because considering the the last term of summation in \eqref{eq-step51}, we have $T_{\rr^\star,\aaa}\neq 0$ only if $\aaa\succeq\bq_{\mathcal J}$ where $\mathcal J:=\{K+1,\ldots,J\}\setminus\{j_k\}$; and  by Condition $B$ there is $\bq^{\sparse}_{\mathcal J}=\one_K$ so $\bq_{\mathcal J}=\one_K$ as well. Specifically, 
$$
T_{\rr^\star,\one_K} = \prod_{h\leq K:\, h\neq k}(\theta^+_h-\bar \theta^-_h)
(\theta^+_{j_k} - \bar\theta^-_{j_k})
\prod_{j>K:\, j\neq j_k}(\theta^+_j-\theta^-_j),
$$
and there is $T_{\rr^\star,\bcolon}\pp = T_{\rr^\star,\one_K}p_{\one_K}\neq 0$. So by \eqref{eq-tra} we have $ \bar  T_{\rr^\star,\bcolon}\bar\pp \neq 0$.
Further,  the element $\bar T_{\rr^\star,\aaa}$ could be potentially nonzero only if $\aaa=\one_K$. This is because considering the first two terms ($\sum_{h\leq K:\, h\neq k}\bar \theta^-_h\ee_h$ and $\bar \theta^-_{j_k}\ee_{j_k}$) in $\ttt^\star$ defined in \eqref{eq-step51}, there is $ \bar  T_{\rr^\star,\aaa}\neq 0$ only if $\aaa\succeq\max(\max_{h\leq K:\, h\neq k}\bar\bq_h,\bar\bq_{j_k})$; and since $\bar q_{j_k,k}=1$ there must be  $\max(\max_{h\leq K:\atop h\neq k}\bar\bq_h,\bar\bq_{j_k})=\one_K$.
Therefore,
\begin{equation*}
    \bar \theta^+_k = 
    \frac{ \bar  T_{\rr^\star+\ee_k,\bcolon}\bar\pp}{ \bar  T_{\rr^\star,\bcolon}\bar\pp}=
    \frac{T_{\rr^\star+\ee_k,\bcolon}\pp}{T_{\rr^\star,\bcolon}\pp}
    =\theta^+_k.
\end{equation*}

\smallskip\noindent
\textbf{Step 5.2.}
In this step we show $\bar \theta^-_j = \theta^-_j$ for all $j\in\{K+1,\ldots,J\}$.
Consider an arbitrary $j>K$, then there exists an attribute $k$ such that $q_{j,k}=1$. Define $\ttt^\star=\theta^+_k\ee_k$, and note that in Step 5.1 we  obtained $\bar \theta^+_k = \theta^+_k$ and in Step 3 we obtained $\bar \theta^-_k =\theta^-_k$. Then with this $\ttt^\star$, there is 
$$
 0\neq(\theta^-_k-\theta^+_k)\Big(\sum_{\aaa\in\mca:\, \aaa\nsucceq\bq_k} p_{\aaa}\Big)
 =T_{\rr^\star,\bcolon}\pp = \bar  T_{\rr^\star,\bcolon}\bar\pp
 =(\theta^-_k-\theta^+_k)\Big(\sum_{\aaa\in\mca:\, \aaa\nsucceq\bq_k}\bar p_{\aaa}\Big),
$$
and note that for any $\aaa\neq\bq_k$, there must be $\aaa\neq\bq_j$ since $q_{j,k}=1$.
Now consider the item $j$, we have
$$
\bar \theta^-_j = \frac{ \bar  T_{\rr^\star+\ee_j,\bcolon}\bar\pp}{ \bar  T_{\rr^\star,\bcolon}\bar\pp}
=\frac{ \bar  T_{\rr^\star+\ee_j,\bcolon}\pp}{ \bar  T_{\rr^\star,\bcolon}\pp}
=\theta^-_j.
$$
Since $j$ is arbitrary from $\{K+1,\ldots,J\}$, we have obtained $\bar \theta^-_j = \theta^-_j$ for all $j\in\{K+1,\ldots,J\}$.

\medskip
\noindent
\textbf{Step 6.}
In this step we show that for $\Gamma(\QQ,\mce)$ and the alternative $\Gamma$-matrix $\bar \Gamma$ (also denoted by $ \Gamma(\bar\QQ,\bar\mce)$ where $\mca(\bar\mce)$ is the set corresponding to those columns in $\bar \Gamma$ with nonzero proportion parameters in $\bar\pp$), the column vectors in $\Gamma(\bar\QQ,\bar\mce)$ that correspond to $\bar p_{\aaa}>0$ are identical to $\Gamma(\QQ,\mce)$; furthermore, $\bar p_{\pi(\aaa)} = p_{\aaa}$ for $\aaa\in\mca(\mce)$, where $\pi:\mca(\mce) \to \mca(\bar\mce)$ is a one-to-one map.
For an arbitrary $\aaa\in\mca(\mce)$, define
\begin{equation}
    \label{eq-final}
    \ttt^\star = \sum_{k\leq K:\, \aaa\succeq\bq_k}\theta^-_k\ee_k
    + \sum_{m\leq K:\, \aaa\nsucceq\bq_m}\theta^+_m\ee_m.
\end{equation}
Then for any $\aaa^*\in\mca$, the $T_{\rr^\star,\aaa^*}\neq 0$ (equivalently, $ \bar  T_{\rr^\star,\aaa^*}\neq 0$) if and only if $\aaa^*=\aaa$, because $\QQ^0 = Q_{1:K,\bcolon}\stackrel{\mce}{\sim} I_K$.
Then $T_{\rr^\star,\bcolon}\pp = \bar  T_{\rr^\star,\bcolon}\bar\pp$ gives
\begin{eqnarray*}
&& \prod_{k\leq K:\, \alpha_k=1}(\theta^+_k-\theta^-_k)
\prod_{m\leq K:\, \alpha_m=0}(\theta^-_m-\theta^+_m)\, p_{\aaa}\\
&=& \prod_{k\leq K:\, \alpha_k=1}(\theta^+_k-\theta^-_k)
\prod_{m\leq K:\, \alpha_m=0}(\theta^-_m-\theta^+_m)\, \bar p_{\pi(\aaa)},		
\end{eqnarray*}
and we obtain $\bar p_{\pi(\aaa)} = p_{\aaa}$.
Since $\sum_{\aaa\in\{0,1\}^K}\bar p_{\pi(\aaa)}=\sum_{\aaa\in\mca}p_{\aaa}=1$, the equality $\bar p_{\pi(\aaa)} = p_{\aaa}$ for any $\aaa\in\mca$ also implies $\bar p_{\pi(\aaa)}=0$ for all $\aaa\in\{0,1\}^K\setminus\mca$. 
So $\Gamma(\bar\QQ,\bar\mce)=\Gamma(\QQ,\mce)$ also holds.
This completes the proof of Step 6.
Now we have shown $\Gamma(\QQ,\mce)=\Gamma(\bar\QQ,\bar\mce)$, $\bar \ttt^+=\ttt^+$, $\bar \ttt^-=\ttt^-$,  $\bar\pp=\pp$. 
This completes the proof of the sufficiency of Conditions $A$, $B$ and $C$.

\textit{As for the last claim in the theorem} that Conditions $A$, $B$ and $C$ are necessary and sufficient for identifiability of $(\QQ,\,\pp,\,\ttt^+,\,\ttt^-)$ where there is no hierarchy, it directly follows from the result in Theorem 1 in \cite{id-Q}. 
\end{proof}

\section{Proofs of Propositions 1--4}\label{pf-prop14}

\begin{proof}[Proof of Proposition \ref{prop}]
\textit{We first show that if $\QQ$ contains a submatrix $I_K$ in addition to satisfying $A$, $B$ and $C$, then $(\mce,\ttt^+,\ttt^-,\pp)$ are jointly identifiable.} Based on the conclusion of part (i), it suffices to show that if $\QQ$ contains an $I_K$, then $\mce$ is identifiable from $\Gamma(\QQ,\mce)$. That is, we will show that if $\Gamma(\QQ,\mce) = \Gamma(\bar \QQ,\bar\mce)$ with both $\QQ$ and $\bar\QQ$ containing a submatrix $I_K$, then $\mce =\bar\mce$. 
Given a $\QQ$-matrix, denote the ideal response matrix corresponding to an empty attribute hierarchy $\mce=\varnothing$ (which leads to a saturated latent pattern space $\mca(\varnothing)=\{0,1\}^K$) by $\Gamma(\QQ,\varnothing)$.
Note that when $\QQ$ contains an $I_K$, the $J\times 2^K$ matrix $\Gamma(\QQ,\varnothing)$ has $2^K$ distinct column vectors \cite{partial}. Without loss of generality, suppose the first $K$ rows of $\QQ$ and $\bar\QQ$ are both $I_K$. Then due to this distinctiveness of the $2^K$ ideal response vectors of the $2^K$ latent patterns under an identity matrix, $\Gamma_{1:K, \bcolon}(\QQ,\mce) = \Gamma(I_K,\mce) = \Gamma(I_K,\bar\mce)= \Gamma_{1:K,\bcolon}(\bar \QQ,\bar\mce)$ exactly implies $\mca(\mce) = \mca(\bar\mce)$, which further gives $\bar\mce=\mce$.

\textit{We next show that in order to identify an arbitrary $\mce$, it is necessary for $\QQ$ to contain an $I_K$.} Suppose $\QQ$ does not contain an $I_K$, then based on the concept of $\pp$-partial identifiability in \cite{partial}, certain patterns would become equivalent in that they lead to the same column vectors in $\Gamma(\QQ,\varnothing)$, hence there must exist some $\mca$ that is not identifiable.
This completes the proof of Proposition \ref{prop}.
\end{proof}

\begin{proof}[Proof of Proposition \ref{prop-Q}]
Theorem \ref{thm-main} already shows under Conditions A, B, and C, there is $\Gamma(\QQ,\mce) = \Gamma(\bar \QQ,\bar\mce)$.
Proposition \ref{prop} shows that under the additional condition that $\QQ$ (and $\bar\QQ$) contains a submatrix $I_K$, there is $\mce = \bar\mce$.
Therefore, we obtain $\Gamma(\QQ, \mce) = \Gamma(\bar \QQ, \mce)$, which exactly means that $\QQ$ is identifiable up to the equivalence class defined by the attribute hierarchy graph $\mce$. Therefore $\mc D^{\mce}(\QQ)$ and $\mc S^{\mce}(\QQ)$ are identifiable. 
By the definition of the $\mce$-equivalence class of $\QQ$, the specific $\QQ$ is not identifiable for any nonempty attribute hierarchy $\mce \neq \varnothing$.
This  proves the conclusion of Proposition \ref{prop-Q}.
\end{proof}

\begin{proof}[Proof of Proposition \ref{prop-A}]
\textit{We next show that Condition $A$ is necessary for identifying} $(\Gamma(\QQ,\mce),\,\allowbreak\ttt^+,\,\ttt^-,\,\pp)$. We use proof by contradiction and assume that Condition $A$ does not hold. Recall that the type of modification of $\QQ$ described in Condition $A$ is the sparsifying operation, which sets every $q_{j,k}$ to zero if $q_{j,h}=1$ and $k\to h$. The resulting matrix is denoted by $\mathcal S^{\mce}(\QQ)$. If Condition $A$ fails to hold, then $\mathcal S^{\mce}(\QQ)$ lacks an identity submatrix $I_K$. Without loss of generality, suppose $\mathcal S^{\mce}(\QQ)$ does not contain any row vector in the form $\ee_h$ for some $h\in[K]$. Combined with the definition of the sparsifying operation, this means for any $\bq$-vector with $q_{j,h}=1$, in the original $\QQ$ there must be $q_{j,\ell}=1$ for some $\ell\not\to h$.
Then the following two attribute patterns in $\mca$ will lead to the same column vectors in $\Gamma(\QQ,\mce)$: $\aaa_1:=\zero_K$ and $\aaa_2:=(\alpha_{2,1},\ldots,\alpha_{2,K})$ where $\alpha_{2,h}=1$, $\alpha_{2,k}=1$ for all $k\to h$, and $\alpha_{2,\ell}=0$ for all $\ell\not\to h$. Then $\aaa_1,\aaa_2\in\mca(\mce)$. Under the current assumption on the structure of $\QQ$, there is $\Gamma_{:,\aaa_1}(\QQ,\mce)=\Gamma_{:,\aaa_2}(\QQ,\mce)=\zero$, which directly results in that $p_{\aaa_1}$ and $p_{\aaa_2}$ can be at best identified up to their sum, even if all the item parameters $\ttt^+$ and $\ttt^-$ are identified and known. In other words, the two separate proportions $p_{\aaa_1}$ and $p_{\aaa_2}$ are not identifiable. This proves the necessity of Condition $A$.
\end{proof}

We first introduce a new definition and a new proposition, the proof of which paves the way for later proofs.
\begin{definition}[Common-Ancestor Hierarchy]
An attribute hierarchy $\mce$ is said to be a {common-ancestor} hierarchy if there exists some latent attribute $k$ that serves as a direct or indirect prerequisite for all the other attributes.
\end{definition}
\noindent This family of the {common-ancestor} attribute hierarchy is quite general and includes many specific attribute structures. 
Indeed, the \textit{linear} hierarchy, \textit{convergent} hierarchy, \textit{divergent} hierarchy and the so-called \textit{unstructured} hierarchy presented in \cite{templin2014hierarchical} (shown in our Fig \ref{fig-leighton-hier} in Example \ref{exp-leighton}) all belong to the common-ancestor hierarchy family.

\begin{proposition}[Identifiability for Common-Ancestor Hierarchy]\label{prop-two-e}
Consider a DINA-based HLAM with a fixed $\QQ$-matrix.
Under a common-ancestor hierarchy, if $\QQ$ contains two copies of the reachability matrix $\EE$ as a submatrix, then $(\ttt^+, \ttt^-, [\mathcal E],  [\pp])$ are identifiable.
\end{proposition}

\begin{proof}[Proof of Proposition \ref{prop-two-e}]
	Without loss of generality, assume attribute $k=1$ serves as a direct or indirect prerequisite for all the other attributes $h=2,\ldots, K$ and that $\QQ$ takes the following form,
	\begin{align}\label{eq-twoe-pf}
		\QQ = \begin{pmatrix}
			\EE \\
			\EE \\
			\QQ^\star
		\end{pmatrix}
		=
		\begin{pmatrix}
			1 & 0 & \cdots & 0\\
			1 & 1 & \cdots & 0\\
			\vdots & \vdots & \ddots & \vdots \\
			1 & * & \cdots & 1\\
			\hline
			1 & 0 & \cdots & 0\\
			1 & 1 & \cdots & 0\\
			\vdots & \vdots & \ddots & \vdots \\
			1 & * & \cdots & 1\\
			\hline
			  &  &\QQ^\star~~ & 
		\end{pmatrix}.
	\end{align}
	First, let 
	$$
	\ttt^\star = \bar \theta^-_1 \ee_1 + \bar \theta^+_{K+1} \ee_{K+1} + \sum_{k=2}^K \theta^-_k\ee_k,
	$$
	then $\bar T_{\rr^\star,\bcolon}=\zero$ and hence $\bar T_{\rr^\star,\bcolon} \bar \pp = 0 = T_{\rr^\star,\bcolon} \pp$. Thanks to the third group of terms in the defined $\ttt^\star$, using the fact that under the true model parameters we have $1\to h$ for all attribute $h=2,\ldots,K$, we obtain that $T_{\rr^\star,\aaa}$ is potentially nonzero only for the all-one attribute pattern $\aaa=\one_K$, therefore
	$$
	\bar T_{\rr^\star,\bcolon} \bar \pp = 0 = T_{\rr^\star,\bcolon} \pp = p_{\one_K}(\theta^+_k - \theta^-_k)\prod_{k=1}^K(\theta^+_k - \theta^-_K) (\theta^+_{K+1} - \bar \theta^+_{K+1}),
	$$
	which implies $\theta^+_{K+1}=\bar \theta^+_{K+1}$. By symmetry we also obtain $\theta^+_1 = \bar \theta^+_1$.  
	Second, given $\theta^+_{K+1} = \bar \theta^+_{K+1}$, let
	$$
	\ttt^\star =  \bar \theta^-_1 \ee_1 + \bar \theta^+_{K+1} \ee_{K+1},
	$$
	then  $\bar T_{\rr^\star,\bcolon}=\zero$ and hence
	$$
	\bar T_{\rr^\star,\bcolon} \bar \pp = 0 = T_{\rr^\star,\bcolon} \pp = p_{\zero_K}(\theta^-_1 - \bar \theta^-_1)(\theta^-_{K+1} - \bar \theta^+_{K+1}),
	$$
	which implies $\theta^-_1 = \bar \theta^-_1$. By symmetry we also obtain $\theta^-_{K+1} = \bar \theta^-_{K+1}$.  
	Third, for some $k\in\{2,\ldots, K\}$ we define
    $$
	\ttt^\star =  \bar \theta^+_1 \ee_1 + \bar \theta^-_k\ee_k,
	$$
	then $\bar T_{\rr^\star,\aaa}=0$ for any $\aaa\in\{0,1\}^K$. This is because $\bar T_{\rr^\star,\aaa}\neq 0$ only potentially for those $\aaa$ such that $\aaa\nsucceq\bq_1$ and $\aaa\succeq\bq_k$; but $\bq_j\succeq\bq_1$ so such $\aaa$ does not exist. Based on $\bar T_{\rr^\star,\bcolon}=\zero$, we have
	$$
	\bar T_{\rr^\star,\bcolon} \bar \pp = 0 = T_{\rr^\star,\bcolon} \pp = p_{\zero_K}(\theta^-_1 - \bar \theta^+_1)(\theta^-_k - \bar \theta^-_k),
	$$
	which gives $\theta^-_k = \bar \theta^-_k$ for $k\in\{2,\ldots, K\}$. By symmetry we can obtain $\theta^-_k = \bar \theta^-_k$ for $k\in\{K+2,\ldots, 2K\}$.
	Last, given that $\theta^-_1 = \bar \theta^-_1$, for some $k\in\{2,\ldots, K\}$ we define 
	$$
	\ttt^\star = \bar \theta^+_k 
	     + \sum_{h\in[K],\, h\neq k} \theta^-_h\ee_h + \theta^-_{K+k}\ee_{K+k},
	$$
	then $\bar T_{\rr^\star,\aaa}=0$ for any $\aaa\in\{0,1\}^K$. 
	Therefore,
	$$
	\bar T_{\rr^\star,\bcolon} \bar \pp = 0 = T_{\rr^\star,\bcolon} \pp = p_{\one_K} (\theta^+_k - \bar \theta^+_k) \prod_{h\in[K],\, h\neq k} (\theta^+_h-\theta^-_h) (\theta^+_{K+k}-\theta^-_{K+k}),
	$$
	therefore we obtain $\theta^+_k=\bar \theta^+_k$ for $k\in\{2,\ldots, K\}$. By symmetry we also have $\theta^+_k=\bar \theta^+_k$ for $k\in\{K+2,\ldots, 2K\}$.

	Thus far we have proved $\theta^-_j=\bar \theta^-_j$ and $\theta^+_j=\bar \theta^+_j$ for all $j\in\{1,\ldots,2K\}$. Based on this, we next show  $\theta^-_j=\bar \theta^-_j$ and $\theta^+_j=\bar \theta^+_j$ for any $j\in\{2K+1,\ldots,J\}$. In particular, for $j\in\{2K+1,\ldots,J\}$ and any $\aaa\in\mca$, define 
	\begin{align*}
     \ttt^\star = 
	 &~\sum_{m\leq K:\,\alpha_{m}=1}\theta_m^-\ee_m + 
	 \sum_{k\leq K:\,\alpha_{m}=0}\theta_k^+\ee_k +\\
	 &~\sum_{m\leq K:\,\alpha_{m}=1}\theta_{K+m}^-\ee_{K+m} + 
	 \sum_{k\leq K:\,\alpha_{m}=0}\theta_{K+k}^+\ee_{K+k}.
	\end{align*}
	Then 
	\begin{align*}
	T_{\rr^\star}\pp=
	&~p_{\aaa}\times\prod_{m\leq K:\,\alpha_{m}=1}(\theta_m^+-\theta_j^-)
	\prod_{k\leq K:\,\alpha_{m}=0}(\theta_k^- - \theta_k^+)\\
	&~\quad \prod_{m\leq K:\,\alpha_{m}=1}(\theta_{K+m}^+ - \theta_{K+m}^-)
	\prod_{k\leq K:\,\alpha_{m}=0}(\theta_{K+k}^- - \theta_{K+k}^+)\neq 0,
	\end{align*}
	so $\bar T_{\rr^\star}\bar\pp = T_{\rr^\star}\pp \neq 0$. 
Therefore we have
	\begin{align*}
\theta_{j,\aaa}=\frac{T_{\rr^\star+\ee_j}\pp}{T_{\rr^\star}\pp}
=\frac{\bar T_{\rr^\star+\ee_j}\bar\pp}{\bar T_{\rr^\star}\bar\pp}=\bar\theta_{j,\aaa}.
	\end{align*}	
	Now that $\aaa\in\mca$ is arbitrary, we have obtained $\bar\theta^+_j=\theta^+_j$ and $\bar\theta^-_j=\theta^-_j$ for this $j$. Since $j\in\{2K+1,\ldots,J\}$ is also arbitrary, we have establishes that $\bar\ttt^+=\ttt^+$ and $\bar\ttt^-=\ttt^-$.

	Next we show $\bar p_{\aaa} = p_{\aaa}$ for all $\aaa\in\mathcal A(\mathcal E)$, which will naturally establish the identifiability of the attribute hierarchy $\mathcal E$. 
	First, define $\ttt^\star = \sum_{k=1}^K \theta^+_k\ee_k$, then  $\bar T_{\rr^\star,\bcolon} \bar \pp = T_{\rr^\star,\bcolon} \pp$ gives 
	$$
	\bar p_{\zero_K}\prod_{k=1}^K (\theta^-_k - \theta^+_k) =  p_{\zero_K}\prod_{k=1}^K (\theta^-_k - \theta^+_k),
	$$
	which implies $\bar p_{\zero_K} = p_{\zero_K}$.
Second, for some $\aaa=(1,\aaa')\in\mca(\mce)$ where $\aaa'\in\{0,1\}^K$, we have $p_{\aaa}>0$; define
$$
\ttt^\star =  \theta^-_1\ee_1 + \sum_{2\leq k\leq K:\, \alpha_k=1} \theta^-_k\ee_k
+\sum_{2\leq k\leq K:\, \alpha_k=0} \theta^+_k\ee_k.
$$
Then $\bar T_{\rr^\star,\bcolon} \bar \pp = T_{\rr^\star,\bcolon} \pp$ gives
\begin{align*}
&~\bar p_{\aaa} (\theta^+_1-\theta^-_1)\prod_{k\geq 2,\, \alpha_k=1}(\theta^+_k-\theta^-_k)\prod_{k\geq 2,\, \alpha_k=0}(\theta^-_k-\theta^+_k)\\
=&~
p_{\aaa} (\theta^+_1-\theta^-_1)\prod_{k\geq 2,\, \alpha_k=1}(\theta^+_k-\theta^-_k)\prod_{k\geq 2,\, \alpha_k=0}(\theta^-_k-\theta^+_k),	
\end{align*}
which gives $\bar p_{\aaa} = p_{\aaa}$ for all $\aaa=(1,\aaa')\in\mca(\mce)$. Thus far we established $\bar p_{\aaa} = p_{\aaa}$ for all $\aaa\in\mca(\mce)$. This implies $\sum_{\aaa\in\mca(\mce)} \bar p_{\aaa} = \sum_{\aaa\in\mca(\mce)} p_{\aaa} = 1$. Now from $\sum_{\aaa\in\{0,1\}^K} \bar p_{\aaa} = 1$ and $\bar p_{\aaa} \geq 0$ for any $\aaa\in\{0,1\}^K$, we obtain that $\bar p_{\aaa} =0$ for any $\aaa\in\{0,1\}^K\setminus \mca(\mce)$.
Now we have proved $\bar\pp = \pp$. This establishes the identifiability of  $\mce$ and $(\ttt^+,\ttt^-,\pp)$.
The proof is complete.
\end{proof}

\bigskip


\section{Proofs of the Individual Necessary Conditions in Propositions 5--8 in Section 4.1}
\label{pf-prop58}

\begin{proof}[Proof of Proposition \ref{prop-nece-sing}]
	We first prove part (a) using proof by contradiction. Suppose attribute $k$ is not connected to any other attribute in the DAG and $\sum_{k=1}^K q_{j,k} = 2$, and there is $T(\bar\ttt^+,\bar\ttt^-)\bar\pp = T(\ttt^+,\ttt^-)\pp$. We next construct $(\bar\ttt^+,\bar\ttt^-,\bar\pp)\neq (\ttt^+,\ttt^-,\pp)$ to show the contradiction. Without loss of generality, suppose $k=1$ is the attribute that is not connected to any others and $\QQ$ takes the following form,
\begin{align*}
	\QQ = \begin{pmatrix}
		1 & \zero^\top\\
		1 & \vv_2^\top\\
		\hline
		\zero & \QQ^\star
	\end{pmatrix}.
\end{align*}
Let $\bar \theta^+_j=\theta^+_j$ and $\bar \theta^-_j=\theta^-_j$ for all $j=3,\ldots, J$.
Given this, it is not hard to see that to guarantee \eqref{eq-def} holds, we only need to ensure the following set of equations hold for certain $\aaa'\in\{0,1\}^{K-1}$ to be specified later,
\begin{align}\label{eq-set4}
	\begin{cases}
		\bar p_{(0,\aaa')} + \bar p_{(1,\aaa')} = p_{(0,\aaa')} + p_{(1,\aaa')}, \\
		\bar \theta^-_1 \bar p_{(0,\aaa')} + \bar \theta^+_1\bar p_{(1,\aaa')} = \theta^-_1 p_{(0,\aaa')} + \theta^+_1 p_{(1,\aaa')},
		 \\
		\bar \theta^-_2 \bar p_{(0,\aaa')} + \bar \theta^+_2\bar p_{(1,\aaa')} \\
 \qquad \qquad = \theta^-_2 p_{(0,\aaa')} + \theta^+_2 p_{(1,\aaa')},&
  (\forall \aaa'\succeq\vv_2), \\
		\bar \theta^-_1 \bar \theta^-_2\bar p_{(0,\aaa')} + \bar \theta^+_1 \bar \theta^+_2 \bar p_{(1,\aaa')} \\
 \qquad \qquad = \theta^-_1 \theta^-_2 p_{(0,\aaa')} + \theta^+_1 \theta^+_2 p_{(1,\aaa')},& 
  (\forall \aaa'\succeq\vv_2).
	\end{cases}
\end{align}
Since attribute $k$ does not have any prerequisite nor serve as the prerequisite for any other attribute, we claim that for any $\aaa'\in\{0,1\}^{K-1}$, there are only the following two cases: Case (1), $p_{(0,\aaa')} = p_{(1,\aaa')} = 0$; or Case (2), $p_{(0,\aaa')}\neq 0$ and  $p_{(1,\aaa')}\neq 0$. This is because $p_{(0,\aaa')} \neq 0$ and $p_{(1,\aaa')}=0$ would indicate $k\to 1$ for some attribute $k\in\{2,\ldots,K\}$; also $p_{(0,\aaa')} = 0$ and $p_{(1,\aaa')}\neq 0$ would indicate $1\to k$ for some attribute $k\in\{2,\ldots,K\}$. These two cases violate the assumption that $k$ does not have any prerequisite nor serve as the prerequisite for any other attribute. So we have proved the claim that either $p_{(0,\aaa')} = p_{(1,\aaa')} = 0$ or $p_{(0,\aaa')}\neq 0$ and  $p_{(1,\aaa')}\neq 0$. 
With this observation, we take a specific type of true proportion parameters $\pp$ such that $p_{(1,\aaa')} / p_{(0,\aaa')} = r$ for all $\aaa'$ satisfying \eqref{eq-set4}. We next construct the alternative parameters $\bar\pp$ with $\bar p_{(0,\aaa')} = p_{(0,\aaa')} / f$ and $\bar p_{(1,\aaa')} =\bar p_{(0,\aaa')} \cdot \rho$. Then it is straightforward to see that \eqref{eq-set4} can be transformed into the following equations without losing any constraints,
\begin{align*}
	\begin{cases}
		(1 + \rho)f = (1+r)p_{(0,\aaa')},\\
		(\bar \theta^-_1 + \rho \bar \theta^+_1)f =  (\theta^-_1+r \theta^+_1)p_{(0,\aaa')},\\
		(\bar \theta^-_1 + \rho \bar \theta^+_1)f =  (\theta^-_2+r \theta^+_2)p_{(0,\aaa')},\\
		(\bar \theta^-_1 \bar \theta^-_2 + \rho \bar \theta^+_1 \bar \theta^+_1)f =  (\theta^-_1 \theta^-_2 +r \theta^+_1 \theta^+_2)p_{(0,\aaa')}.\\
	\end{cases}
\end{align*}
Then the above set of equations have four constraints for six free variables $(\rho,f, \bar \theta^-_1,\bar \theta^-_2,\bar \theta^+_1,\bar \theta^+_2)$, so there are infinitely many different sets of solutions to it. This shows the non-identifiability and proves $\sum_{k=1}^K q_{j,k}\geq 3$ is necessary requirement for identifiability.

We next prove part (b)   by construction. Consider the case where attribute 1 is a single attribute and attribute 2 is the common ancestor for all the remaining attributes $3,\ldots,K$. That is, $\mce=\{2\to 3,~2\to 4,~\ldots,~2\to K\}$. Under this hierarchy, consider the following $\QQ$,
\begin{align*}
	\QQ=\begin{pmatrix}
	    1 & \zero^\top \\
		1 & \vv_1^\top \\
		1 & \vv_2^\top \\
	    \zero & \EE^\star \\
	    \zero & \EE^\star
	\end{pmatrix},
\end{align*}
where $\EE^\star$ is a $(K-1)\times(K-1)$ reachability matrix among attributes $2,3,\ldots,K$.
Note that the bottom right $2(K-1)\times(K-1)$ submatrix of $\QQ$ contains two copies of the reachability matrix among the last $K-1$ attributes.
Therefore, using a similar argument as that in the proof of Proposition  \ref{prop-two-e} can establish $\theta^+_j = \bar \theta^+_j$ and $\theta^-_j=\bar \theta^-_j$ for $j=\{4,\ldots,J\}$. Now define
$$
\ttt^\star = \theta^+_1\ee_1 + \bar \theta^-_2\ee_2 + \bar \theta^+_3\ee_3+ \sum_{j=4}^J \theta^-_j\ee_j,
$$
then 
\begin{align*}
\bar T_{\rr^\star,\bcolon}\bar\pp = 0 &  = T_{\rr^\star,\bcolon}\pp  \\
&= \left(\sum_{
(0,\aaa')\in\mca} p_{(0,\aaa)}\right)(\theta^-_1-\theta^+_1)(\theta^-_2-\bar \theta^-_2)(\theta^-_3-\bar \theta^+_3)\prod_{j=4}^J (\theta^+_j-\theta^-_j).
\end{align*}
Since the factor $\sum_{(0,\aaa')\in\mca} p_{(0,\aaa)}$ in the above display is nonzero due to the assumption that attribute 1 is a singleton attribute, we have $\theta^-_2=\bar \theta^-_2$. By symmetry we also obtain $\theta^-_3=\bar \theta^-_3$. Then define
$$
\ttt^\star = \sum_{j=3}^J \theta^-_j\ee_j,
$$
then $T_{\rr^\star} \pp = \prod_{j=3}^J(\theta^+_j-\theta^-_j) p_{\one_K} \neq 0$, so there is $T_{\rr^\star} \pp =\bar T_{\rr^\star}\bar \pp \neq 0$.
Therefore we have 
\begin{align*}
	\frac{\bar T_{\rr^\star+\ee_2} \bar\pp}{\bar T_{\rr^\star} \bar\pp} =&~\frac{\bar \theta^+_2(\bar \theta^+_3-\theta^-_3)\prod_{j=4}^J( \theta^+_j-\theta^-_j) \bar p_{\one_K}}{(\bar \theta^+_3-\theta^-_3)\prod_{j=4}^J( \theta^+_j-\theta^-_j) \bar p_{\one_K}}\\
	= \frac{ T_{\rr^\star+\ee_2} \pp}{ T_{\rr^\star} \pp}
	=&~
	\frac{ \theta^+_2\prod_{j=3}^J(\theta^+_j-\theta^-_j)  p_{\one_K}}{\prod_{j=3}^J(\theta^+_j-\theta^-_j) p_{\one_K}},
\end{align*}
which gives $\bar \theta^+_2=\theta^+_2$.
Similarly we can obtain $\theta^+_1=\bar \theta^+_1$ and $\theta^+_3=\bar \theta^+_3$.  Define 
$$\ttt^\star = \theta^+_2\ee_2 + \sum_{j=4}^J \theta^-_j\ee_j,$$ 
then
\begin{align*}
	\frac{\bar T_{\rr^\star+\ee_1} \bar\pp}{\bar T_{\rr^\star} \bar\pp} 
	&=~\frac{\bar \theta^-_1(\theta^-_2-\theta^+_2)\prod_{j=4}^J( \theta^+_j-\theta^-_j) (\sum_{(0,\aaa')\in\mca}\bar p_{(0,\aaa)})}{\prod_{j=4}^J( \theta^+_j-\theta^-_j) (\sum_{(0,\aaa')\in\mca}\bar p_{(0,\aaa)})}\\
	=  \frac{ T_{\rr^\star+\ee_1} \pp}{ T_{\rr^\star} \pp}
	&=~\frac{ \theta^-_1(\theta^-_2-\theta^+_2)\prod_{j=4}^J(\theta^+_j-\theta^-_j) (\sum_{(0,\aaa')\in\mca} p_{(0,\aaa)})}{\prod_{j=4}^J(\theta^+_j-\theta^-_j) (\sum_{(0,\aaa')\in\mca} p_{(0,\aaa)})},
\end{align*}
which implies $\bar \theta^-_1 = \theta^-_1$.
Thus far we have shown $\bar\ttt^+=\ttt^+$ and $\bar\ttt^-=\ttt^-$. Following a similar argument as that in the end of the proof of Proposition \ref{prop-two-e} establishes $\bar\pp=\pp$ and $\bar{\mce}=\mce$. This completes the proof of part (b).
\end{proof}

\begin{proof}[Proof of Proposition \ref{prop-nece-anc}] 
	We first prove part (a) for an ancestor attribute $k$, then prove part (a) for a leaf attribute $k$, and finally prove part (b).
	
\bigskip
\noindent\textbf{Part (a) for an ancestor attribute.}	Suppose attribute 1 is an ancestor for which there is some attribute $h$ such that $1\to h$ but there does not exist any attribute $\ell$ such that $\ell\to 1$. Then clearly a valid topological order of the attributes can start with this attribute 1, and hence the corresponding reachability matrix $\EE$ is lower-triangular.
 Assume Condition A is satisfied, that is, $\QQ_{1:K,\bcolon}\stackrel{\mce}{\sim}I_K$.
Then if $\sum_{j=1}^J q^{\sparse}_{j,1}=1$ holds, the $\QQ$ and the corresponding $\mathcal S^{\mce}(\QQ)$ can be written in the following forms,
	\begin{align}\label{eq-only1}
		%
		\QQ
        =\begin{pmatrix}
			\QQ_{1:K,\,\bcolon}\\
			\hline
			\QQ^{\star}
		\end{pmatrix}
		=
		\begin{pmatrix}
			1 & * & \cdots & *\\
			* & 1 & \cdots & *\\
			\vdots & \vdots & \ddots & \vdots \\
			* & * & \cdots & 1\\
			\hline
			\zero & \vdots & \vdots & \vdots \\
		\end{pmatrix}
		\quad \stackrel{\mce}{\Longrightarrow}\quad
		\mathcal S^{\mce}(\QQ)=
		\begin{pmatrix}
			1 & 0 & \cdots & 0\\
			0 & 1 & \cdots & 0\\
			\vdots & \vdots & \ddots & \vdots \\
			0 & 0 & \cdots & 1\\
			\hline
			\zero & \vdots & \vdots & \vdots \\
		\end{pmatrix}.
	\end{align}
We next construct $(\bar\ttt^+,\bar\ttt^-,\bar\pp)\neq (\ttt^+,\ttt^-,\pp)$ to show the nonidentifiability.
Given a set of valid model parameters $(\ttt^+,\ttt^-,\pp)$, we first take $\bar \theta^+_j=\theta^+_j$ and $\bar \theta^-_j=\theta^-_j$ for $j=2,\ldots,J$. 
Based on this, it is not hard to see that to ensure \eqref{eq-def} holds, we only need to ensure the following equations hold for any $\aaa'\in\{0,1\}^{K-1}$,
\begin{align*}
    \begin{cases}
		\bar p_{(0,\aaa')} + 
		\bar p_{(1,\aaa')} = p_{(0,\aaa')} + p_{(1,\aaa')},\\
		\bar \theta^-_1\bar p_{(0,\aaa')} + \bar \theta^+_1\bar p_{(1,\aaa')} = \theta^-_1 p_{(0,\aaa')} + \theta^+_1 p_{(1,\aaa')}.\\
	\end{cases}
\end{align*} 
Now note that attribute 1 is an ancestor attribute for some attribute $h$, so $\aaa=\ee_h\neq\mca(\mce)$ and $p_{\ee_h}=0$.
Also by Definition \ref{def-anc}, attribute 1 is an ancestor attribute also implies that no attribute is a prerequisite for attribute 1, therefore 
$p_{(1,\zero_{K-1})}>0$; also $p_{(0,\zero_{K-1})}>0$ under any hierarchy.
Therefore the above set of equations equivalently become 
\begin{align}\notag
  & \text{(i) for}~\aaa'=\zero_{K-1},~\\
  \label{eq-3case1}
&  \qquad  \begin{cases}
    	\bar p_{(0,\zero_{K-1})} + \bar p_{(1,\zero_{K-1})} = p_{(0,\zero_{K-1})} + p_{(1,\zero_{K-1})}, \\
    	\bar \theta^-_1 \bar p_{(0,\zero_{K-1})} + \bar \theta^+_1 p_{(1,\zero_{K-1})} =  \theta^-_1 p_{(0,\zero_{K-1})} + \theta^+_1 p_{(1,\zero_{K-1})}; 
    \end{cases}
    \\ \notag 
	&\text{(ii) for}~\aaa'\neq\zero_{K-1}~\text{and}~(0,\aaa')\in\mca(\mce),~\\ 
	\label{eq-3case2}
	& \qquad\begin{cases}
		\bar p_{(0,\aaa')} + \bar p_{(1,\aaa')} = p_{(0,\aaa')}+p_{(1,\aaa')},\\
		\bar \theta^-_1\bar p_{(0,\aaa')} + \bar \theta^+_1 \bar p_{(1,\aaa')} = \theta^-_1 p_{(0,\aaa')} + \theta^+_1 p_{(1,\aaa')}; \\
	\end{cases}
    \\ \notag 
	&\text{(iii) for}~\aaa'\neq\zero_{K-1}~\text{and}~(0,\aaa')\not\in\mca(\mce),~\\
 \label{eq-3case3}
 &\qquad	\begin{cases}
		\bar p_{(1,\aaa')} = p_{(1,\aaa')},\\
		\bar \theta^+_1\bar p_{(1,\aaa')} = \theta^+_1 p_{(1,\aaa')}. \\
	\end{cases}
\end{align}
In the above system of equations, we first point out that  Eq. \eqref{eq-3case3} are not empty constraints due to the assumption that attribute 1 is an ancestor attribute. This is because for some attribute $h$ such that $1\to h$, as stated earlier there is $\aaa=\ee_h=(0,\aaa')\not\in\mca(\mce)$ belongs to case (iii) in Eq. \eqref{eq-3case3}. 
Therefore the constraints in Eq. \eqref{eq-3case3} gives $\bar \theta^+_1 = \theta^+_1$. Given this, the set of equations in Eq. \eqref{eq-3case1}--\eqref{eq-3case3} can be further equivalently written as
\begin{align}\notag
	&\text{for}~\aaa=(0,\aaa')\in\mca(\mce),\\
	\label{eq-ansfi}
	&	\qquad \begin{cases}
		\bar p_{(0,\aaa')} + \bar p_{(1,\aaa')} = p_{(0,\aaa')}+p_{(1,\aaa')},\\
		\bar \theta^-_1\bar p_{(0,\aaa')} + \theta^+_1 \bar p_{(1,\aaa')} = \theta^-_1 p_{(0,\aaa')} + \theta^+_1 p_{(1,\aaa')}. \\
	\end{cases}
\end{align}
Recall that \eqref{eq-def} hold if the above equations \eqref{eq-ansfi} hold. Define the set $$\mca_{\text{sub}}=\{\aaa\in\{0,1\}^K:\,\aaa=(0,\aaa')\in\mca(\mce)\}.$$
Then the above Eq. \eqref{eq-ansfi} involve $2|\mca_{\text{sub}}|$ number of constraints for $2|\mca_{\text{sub}}|+1$ number of free variables in $$\{\bar \theta^-_1\}\cup\{\bar p_{\aaa}:\,\aaa\in\mca_{\text{sub}}\}\cup\{\bar p_{\aaa+\ee_1}:\,\aaa\in\mca_{\text{sub}}\}.$$ Therefore there are infinitely many different solutions to Eq. \eqref{eq-ansfi}. This proves the nonidentifiability under $\mce$ and the $\QQ$ in \eqref{eq-only1} and concludes the proof of part (a) for an ancestor attribute.

\bigskip
\noindent\textbf{Part (a) for a leaf attribute.} 
Suppose attribute $K$ is a leaf attribute for which there is some attribute $\ell$ such that $\ell\to K$ but there does not exist any attribute $h$ such that $K\to h$. Then clearly a valid topological order of the attributes can end with this attribute $K$, and hence the corresponding reachability matrix $\EE$ is lower-triangular.
 Assume Condition A is satisfied, that is, $\QQ_{1:K,\bcolon}\stackrel{\mce}{\sim}I_K$.
Then if $\sum_{j=1}^J q^{\sparse}_{j,K}=1$ holds, the $\QQ$ and the corresponding $\mathcal S^{\mce}(\QQ)$ can be written in the following forms,
	\begin{align}\label{eq-only1}
		\QQ
	    =\begin{pmatrix}
			\QQ_{1:K,\,\bcolon}\\
			\hline
			\QQ^{\star}
		\end{pmatrix}
		=
		\begin{pmatrix}
			1 & \cdots & 0 & 0\\
			\vdots & \ddots & \vdots  & \vdots \\
			* & \cdots & 1 & 0\\
			* & \cdots & * &  1\\
			\hline
			\vdots & \vdots & \vdots & \vdots \\
		\end{pmatrix}
		\quad \stackrel{\mce}{\Longrightarrow}\quad
		\mathcal S^{\mce}(\QQ)=
		\begin{pmatrix}
			1 & \cdots & 0 & 0\\
			\vdots & \ddots & \vdots  & \vdots \\
			0 & \cdots & 1 & 0\\
			0 & \cdots & 0 &  1\\
			\hline
			\vdots & \vdots & \vdots & \zero \\
		\end{pmatrix}.
	\end{align}
We next construct $(\bar\ttt^+,\bar\ttt^-,\bar\pp)\neq (\ttt^+,\ttt^-,\pp)$ to prove the nonidentifiability.
Given a set of valid parameters $(\ttt^+,\ttt^-,\pp)$, we first take $\bar \theta^+_j=\theta^+_j$ and $\bar \theta^-_j=\theta^-_j$ for $j\in[J]\setminus\{K\}$. 
Proceeding in a similar spirit to the earlier proof for an ancestor attribute, we can get the following: in order to ensure \eqref{eq-def} holds, it suffices to ensure the following equations hold for any $\aaa'\in\{0,1\}^{K-1}$,
\begin{align}\label{eq-leaf1}
    \begin{cases}
		\bar p_{(\aaa',0)} + 
		\bar p_{(\aaa',1)} = p_{(\aaa',0)} + p_{(\aaa',1)},\\
		\bar \theta^-_K\bar p_{(\aaa',0)} + \bar \theta^+_K\bar p_{(\aaa',1)} = \theta^-_K p_{(\aaa',0)} + \theta^+_K p_{(\aaa',1)}.\\
	\end{cases}
\end{align} 
Because attribute $K$ is a leaf attribute, there exists some attribute $\ell$ such that $\ell\to K$ and hence $\aaa=\ee_\ell\not\in\mca(\mce)$ and $p_{\ee_\ell}=0$.
Also by Definition \ref{def-leaf}, attribute K is a leaf attribute also implies that it does not serve as a prerequisite for any other attribute, therefore 
$\aaa=(\one_{K-1},0)\in\mca(\mce)$ and
$p_{(\one_{K-1},0)}>0$; also $p_{(\one_{K-1},1)}>0$ under any hierarchy.
Therefore the above set of equations \eqref{eq-leaf1} equivalently become 
\begin{align}\notag
   &\text{(i) for}~\aaa'=\one_{K-1},
   \\
   \label{eq-leafc1}
   &\qquad
    \begin{cases}
    	\bar p_{(\one_{K-1},0)} + \bar p_{(\one_{K-1},1)} =  p_{(\one_{K-1},0)} + p_{(\one_{K-1},1)},  \\
    	\bar \theta^-_K\bar p_{(\one_{K-1},0)} + \bar \theta^+_K\bar p_{(\one_{K-1},1)} = \theta^-_K p_{(\one_{K-1},0)} + \theta^+_K p_{(\one_{K-1},1)};
    \end{cases}
    \\ \notag
	&\text{(ii) for}~\aaa'\neq\one_{K-1}~\text{and}~(\aaa',0)\in\mca(\mce),\\
	\label{eq-leafc2}
	&\qquad
	\begin{cases}
		&\bar p_{(\aaa',0)} + \bar p_{(\aaa',1)} =  p_{(\aaa',0)} + p_{(\aaa',1)},  \\
    	&\bar \theta^-_K\bar p_{(\aaa',0)} + \bar \theta^+_K\bar p_{(\aaa',1)}   = \theta^-_K p_{(\aaa',0)} + \theta^+_K p_{(\aaa',1)};
	\end{cases}
    \\ \notag
	&\text{(iii) for}~\aaa'\neq\one_{K-1}~\text{and}~(\aaa',0)\not\in\mca(\mce),\\
	\label{eq-leafc3}
	&\qquad
	\begin{cases}
		\bar p_{(\aaa',1)} = p_{(\aaa',1)},\\
		\bar \theta^+_K\bar p_{(\aaa',1)} = \theta^+_K p_{(\aaa',1)}.\\
	\end{cases}
\end{align}
We point out that the above Eq. \eqref{eq-leafc3} are not empty constraints. This is because as stated earlier, there is $\aaa=\ee_\ell=(\aaa',0)\not\in\mca(\mce)$ with $\aaa'\neq\one_{K-1}$. Therefore this particular $\aaa'$ falls into the case (iii) in Eq. \eqref{eq-leafc3}. Based on this, Eq. \eqref{eq-leafc3} gives $\bar \theta^+_K=\theta^+_K$ and the set of equations \eqref{eq-leaf1} equivalently becomes
\begin{align}\notag
		& \text{for}~\aaa=(\aaa',0)\in\mca(\mce),\\
		\label{eq-leafi}
	&\qquad
	\begin{cases}
		\bar p_{(\aaa',0)} + \bar p_{(\aaa',1)} =  p_{(\aaa',0)} + p_{(\aaa',1)}, \\
    	\bar \theta^-_K\bar p_{(\aaa',0)} + \theta^+_K\bar p_{(\aaa',1)} = \theta^-_K p_{(\aaa',0)} + \theta^+_K p_{(\aaa',1)}.
	\end{cases}
\end{align}
Recall that \eqref{eq-def} hold if the above equations \eqref{eq-leafi} hold. Define the set $$\mca'_{\text{sub}}=\{\aaa\in\{0,1\}^K:\,\aaa=(\aaa',0)\in\mca(\mce)\}.$$
Then the above Eq.~\eqref{eq-leafi} involve $2|\mca'_{\text{sub}}|$ number of constraints for $2|\mca'_{\text{sub}}|+1$ number of free variables in 
$$\{\bar \theta^-_K\}\cup\{\bar p_{\aaa}:\,\aaa\in\mca'_{\text{sub}}\}\cup\{\bar p_{\aaa+\ee_K}:\,\aaa\in\mca'_{\text{sub}}\}.$$ 
So there are infinitely many different solutions to Eq.~\eqref{eq-ansfi}. This proves the nonidentifiability under $\mce$ and the $\QQ$ in \eqref{eq-only1} and completes the proof of part (a) for a leaf attribute.

\bigskip
\noindent\textbf{Part (b) for either an ancestor attribute or a leaf attribute.} To prove the conclusion in part (b) for an ancestor attribute, we only need to note that the previous Proposition  \ref{prop-two-e} established identifiability when some attribute is a common ancestor for all other attributes and the $\mathcal S^{\mce}(\QQ)$ there includes only two ``1'' in the column corresponding to this ancestor attribute.

We next prove the conclusion in part (b) for a leaf attribute.
To this end, we prove the following $(K+2)\times K$ matrix $\QQ$ under the $\mce=\{1\to 2\to \cdots \to K\}$ makes the model identifiable.
\begin{align}\label{eq-linearq}
	\QQ=\begin{pmatrix}
        & &\EE & & \\
		\hline
		1 & 0 & \cdots & 0 & 0\\
		1 & 1 & \cdots & 1 & 1
	\end{pmatrix}
	\quad\Longrightarrow\quad
	\mathcal S^{\mce}(\QQ) =\begin{pmatrix}
        & & I_K & & \\
		\hline
		1 & 0 & \cdots & 0 & 0\\
		0 & 0 & \cdots & 0 & 1
	\end{pmatrix}
\end{align}
where the first $K$ rows of $\QQ$ form a lower-triangular matrix with all the lower triangular entries equal to one, the $(K+1)$th row equals $\ee_1$ and the $(K+2)$th row equals $\one_K$. Then its corresponding $\mathcal S^{\mce}(\QQ)$ also shown in \eqref{eq-linearq} satisfies Condition D in the proposition. We next prove $\bar\ttt^+=\ttt^+$, $\bar\ttt^-=\ttt^-$, and $\bar\pp=\pp$ from $T(\bar\ttt^+,\bar\ttt^-)\bar\pp=T(\ttt^+,\ttt^-)\pp$ to establish identifiability.

First, define
$$
\ttt^\star=\bar \theta^-_1\ee_1 + \bar \theta^+_{K+1}\ee_{K+1} + \sum_{k=2}^K \theta^-_k\ee_k,
$$
then the form of $\QQ$ in \eqref{eq-linearq} ensures $\bar T_{\rr^\star,\aaa}=0$ for all $\aaa\in\{0,1\}^K$.
On the other hand, since $1\to k$ for all $k=2,\ldots,K$, we have that if $\aaa\succeq \bq_k$ for all  $k=2,\ldots,K$, there must be $\aaa=\one_K$.
Therefore
$$
\bar T_{\rr^\star,\bcolon} \bar\pp=0= T_{\rr^\star,\bcolon} \pp
=p_{\one_K}(\theta^+_1-\bar \theta^-_1)(\theta^+_{K+1}-\bar \theta^+_{K+1})\prod_{k=2}^K(\theta^+_k-\theta^-_k),
$$
which gives $\theta^+_{K+1}=\bar \theta^+_{K+1}$. By symmetry we also obtain $\theta^+_1=\bar \theta^+_1$. 

Second, based on this, we define $$\ttt^\star=\bar \theta^-_1\ee_1 +  \theta^+_{K+1}\ee_{K+1}.$$ 
We still have $\bar T_{\rr^\star,\aaa}=0$ for all $\aaa\in\{0,1\}^K$. Again since $1\to k$ for all $k=2,\ldots,K$, we have that if $\aaa\nsucceq\bq_1$, there must be $\aaa=\zero_K$. Therefore
$$
\bar T_{\rr^\star,\bcolon} \bar\pp=0= T_{\rr^\star,\bcolon} \pp=
p_{\zero_K}(\theta^-_1-\bar \theta^-_1)(\theta^-_{K+1}-\theta^+_{K+1}),
$$ 
which gives $\theta^-_1=\bar \theta^-_1$. By symmetry we also get $\theta^-_{K+1}=\bar \theta^-_{K+1}$.

Third, for $j\in\{2,\ldots,K\}\cup\{K+2\}$ we define
$$
\ttt^\star = \theta^+_1\ee_1+\bar \theta^-_j\ee_j,
$$
then based on the form of $\QQ$ and $\bar \theta^+_1=\theta^+_1$ we have $\bar T_{\rr^\star,\aaa}=0$ for all $\aaa\in\{0,1\}^K$. So
$$
\bar T_{\rr^\star,\bcolon} \bar\pp=0= T_{\rr^\star,\bcolon} \pp=
p_{\zero_K}(\theta^-_1-\theta^+_1)(\theta^-_j-\bar \theta^-_j),
$$
which implies $\theta^-_j=\bar \theta^-_j$ for $j\in\{2,\ldots,K\}\cup\{K+2\}$. Note that thus far we have shown $\bar\ttt^-=\ttt^-$.

Forth, for $k\in\{2,\ldots,K-1\}$ we define
$$
\ttt^\star = \bar \theta^+_k\ee_k + \theta^-_K\ee_K.
$$
Note that we have already established $\theta^-_K=\bar \theta^-_K$, so $\bar T_{\rr^\star,\aaa}=0$ for all $\aaa$ and $T_{\rr^\star,\aaa}\neq 0$ only potentially for $\aaa=\one_K$ under the hierarchy. Therefore
$$
\bar T_{\rr^\star,\bcolon} \bar\pp=0= T_{\rr^\star,\bcolon} \pp=p_{\one_K}(\theta^+_k-\bar \theta^+_k)(\theta^+_K-\theta^-_K),
$$
which gives $\theta^+_k=\bar \theta^+_k$ for $k\in\{2,\ldots,K-1\}$.

Now it remains to show $\theta^+_{K}=\bar \theta^+_{K}$, $\theta^+_{K+2}=\bar \theta^+_{K+2}$ and $\bar\pp=\pp$. We define
$$
\ttt^\star=\theta^-_K\ee_K.
$$
Note that $\bar \theta^-_K=\theta^-_K$, so for this $\ttt^\star$, the $\bar T_{\ee_K,\bcolon}\bar\pp=T_{\ee_K,\bcolon}\pp$ gives
\begin{equation}\label{eq-ns1}
\bar p_{\one_K} (\bar \theta^+_K-\theta^-_K)
=
p_{\one_K} (\theta^+_K-\theta^-_K)\neq 0.
\end{equation}
We further define 
$$
\ttt^\star=\theta^-_K\ee_K+\theta^-_{K+2}\ee_{K+2},
$$
then $\bar T_{\ee_K+\ee_{K+2},\bcolon}\bar\pp=T_{\ee_K+\ee_{K+2},\bcolon}\pp$ gives
\begin{equation}\label{eq-ns2}
\bar p_{\one_K} (\bar \theta^+_K-\theta^-_K)(\bar \theta^+_{K+2}-\theta^-_{K+2})
=
p_{\one_K} (\theta^+_K-\theta^-_K)( \theta^+_{K+2}-\theta^-_{K+2}).
\end{equation}
Taking the ration of \eqref{eq-ns2} and \eqref{eq-ns1} gives $\bar \theta^+_{K+2}=\theta^+_{K+2}$. By symmetry we also have $\bar \theta^+_K=\theta^+_K$. 
Thus far we have shown $\bar\ttt^-=\ttt^-$ and $\bar\ttt^+=\ttt^+$. Then following a similar argument as that in the end of the proof of Theorem \ref{prop-two-e} gives $\bar\pp=\pp$, which establishes the identifiability of the hierarchy $\mce$ and all the model parameters under the linear hierarchy. 

The above proof establishes identifiability when some attribute is a common leaf for all other attributes and $\mathcal S^{\mce}(\QQ)$ includes only two ``1'' in the column corresponding to this leaf attribute.
This proves the conclusion of part (b).
\end{proof}

\begin{proof}[Proof of Proposition \ref{prop-nece-int}]
For part (a), the necessity of $\sum_{j=1}^J q^{\sparse}_{j,k}\geq 1$ is guaranteed by the statement in Theorem \ref{thm-main} that Condition A is necessary for identifiability. This is because if $\sum_{j=1}^J q^{\sparse}_{j,k} = 0$, then Condition A is violated. This proves part (a).

For part (b), we next construct a case where the model is proved to be identifiable but $\sum_{j=1}^J q^{\sparse}_{j,k}= 1$ for an intermediate attribute $k$.
In particular, still consider the case that attribute 1 is a common ancestor for all other attributes 2 through $K$. 
	Suppose attribute 2 is an intermediate attribute such that $2\to k$ for any $k\in\{3,\ldots,K\}$.
	That is, 
	$$\mce=\{1\to 2\}\cup\{2\to k:\,k=3,\ldots,K\}.$$
	Consider the following $\QQ$ and the corresponding $\mathcal S^{\mce}(\QQ)$,
\begin{align}\label{eq-int}
		\QQ=
		\begin{pmatrix}
			1 & 0 & 0 & \cdots & 0\\
			1 & 0 & 0 & \cdots & 0\\
			0 & 1 & 0 & \cdots & 0\\
			\one & \one &   & \EE^{\star} & \\
			\one & \one &   & \EE^{\star} & \\
		\end{pmatrix}
		\quad \stackrel{\mce}{\Longrightarrow}\quad
		\mathcal S^{\mce}(\QQ)=
		\begin{pmatrix}
			1 & 0 & 0 & \cdots & 0\\
			1 & 0 & 0 & \cdots & 0\\
			0 & 1 & 0 & \cdots & 0\\
			\zero & \zero &   & \EE^{\star} & \\
			\zero & \zero &   & \EE^{\star} & \\
		\end{pmatrix}
\end{align}
We next establish identifiability of $(\mce,\ttt^+,\ttt^-,\pp)$ under the $\QQ$ in \eqref{eq-int}.
First note that if deleting the third row and the second column in $\QQ$, the remaining $2(K-1)\times (K-1)$ submatrix can be viewed as containing two copies of the reachability matrix under the subgraph $\mce^\star=\{1\to k:\, k=3,4,\ldots,K\}$ among the subset of attributes $\{1,3,4,\ldots,K\}$. Therefore, following a similar argument as the proof of Proposition  \ref{prop-two-e}, we obtain $\bar \theta^+_j=\theta^+_j$ and $\bar \theta^-_j = \theta^-_j$ for $j\in\{1,2\}\cup\{4,\ldots,J\}$. It remains to show $\bar \theta^+_3=\theta^+_3$, $\bar \theta^-_3=\theta^-_3$ and $\bar\pp=\pp$.
First, define
$$
\ttt^\star= \bar \theta^+_1\ee_1+\bar \theta^-_3\ee_3,
$$
then $\bar T_{\rr^\star,\bcolon}=\zero$ under $\QQ$. So 
$$
\bar T_{\rr^\star,\bcolon} \bar\pp = 0 = T_{\rr^\star,\bcolon} \pp
= p_{\zero_K}(\theta^-_1-\bar \theta^+_1)(\theta^-_3-\bar \theta^-_3),
$$
which implies $\theta^-_3=\bar \theta^-_3$. Second, define
$$
\ttt^\star=\theta^-_1\ee_1+ \bar \theta^+_3\ee_3+\sum_{j=4}^J\bar \theta^-_j\ee_j,
$$
then we have $\bar T_{\rr^\star,\bcolon}=\zero$. This is because there is no such $\aaa$ for which $\aaa\succeq\bq_1$ and $\aaa\nsucceq \bq_3$, so $\bar T_{\rr^\star,\aaa}$ can not contain a nonzero factor $(\theta^+_1-\theta^-_1)(\bar \theta^-_3-\bar \theta^+_3)$ and must be always zero. Given this, we have
$$
\bar T_{\rr^\star,\bcolon} \bar\pp = 0 = T_{\rr^\star,\bcolon} \pp
= p_{\one_k}(\theta^+_1-\theta^-_1)(\theta^+_3-\bar \theta^+_3)\prod_{j=4}^J(\theta^+_j-\theta^-_j),
$$
which gives $\theta^+_3=\bar \theta^+_3$. Thus far we have shown $\bar\ttt^+=\ttt^+$ and $\bar\ttt^-=\ttt^-$. Now following a similar proof as those in the end of the proof of Proposition  \ref{prop-two-e} we obtain $\bar\pp=\pp$. This also establishes that $\mce$ is identifiable. The proof is complete.
\end{proof}

\begin{proof}[Proof of Proposition \ref{thm-nec-dist}]
Suppose the first $K$ rows of $\QQ$ is equivalent to $I_K$ under the hierarchy $\mce$, that is, Condition A is satisfied. 
Without loss of generality, suppose attributes 1 and 2 are singleton attributes, so $1\not\to 2$ and $2\not\to 1$. To prove by contradiction, consider the following $\QQ$ with the first $K$ rows forming the reachability matrix $\EE$,
\begin{align*}
	\QQ=\begin{pmatrix}
 	   & \EE &     \\
  \hline
 \vv & \vv \quad \vdots \quad  \vdots & \vdots 
	\end{pmatrix}.
\end{align*}
Since $1\not\to 2$ and $2\not\to 1$, the first two rows of the above $\QQ$ takes the following form,
$$
\QQ_{1:2,\,\bcolon}=\EE_{1:2,\,\bcolon}=\begin{pmatrix}
	1 & 0 & * & \cdots & * \\
	0 & 1 & * & \cdots & * \\
\end{pmatrix}.
$$
We next construct $(\ttt^+,\ttt^-,\pp)\neq(\bar\ttt^+,\bar\ttt^-,\bar\pp)$ such that \eqref{eq-def} holds. For any valid set of model parameters $(\ttt^+,\ttt^-,\pp)$, we first set $\bar \ttt^+=\ttt^+$ and $\bar \theta^-_j=\theta^-_j$ for all $j\geq 3$.
Then following a similar argument as that in the proof of Theorem 1 in \cite{id-dina} we obtain that, $T(\bar\ttt^+,\bar\ttt^-)\bar \pp=T(\ttt^+,\ttt^-)\pp$ hold as long as the following set of equations hold for any $\aaa'$
\begin{align}\label{eq-dist}
	\begin{cases}
& p_{(0,0,\aaa')} + p_{(1,0,\aaa')} + p_{(0,1,\aaa')}\\
&\qquad \qquad 
  = \bar p_{(0,0,\aaa')} +\bar p_{(1,0,\aaa')} +\bar p_{(0,1,\aaa')}, \\ 
& \theta^-_1   [p_{(0,0,\aaa')} + p_{(0,1,\aaa')}] + \theta^+_1  p_{(1,0,\aaa')} \\
&\qquad \qquad 
 = \bar \theta^-_1   [\bar p_{(0,0,\aaa')} + \bar p_{(0,1,\aaa')}] + \theta^+_1 \bar p_{(1,0,\aaa')} ,\\ 
& \theta^-_2   [p_{(0,0,\aaa')} + p_{(1,0,\aaa')}] + \theta^+_2  p_{(0,1,\aaa')}\\
&\qquad\qquad 
  = \bar \theta^-_2   [\bar p_{(0,0,\aaa')} + \bar p_{(1,0,\aaa')}] + \theta^+_2 \bar p_{(0,1,\aaa')} ,\\ 
& \theta^-_1  \theta^-_2   p_{(0,0,\aaa')} + \theta^+_1  \theta^-_2  p_{(1,0,\aaa')} + \theta^-_1  \theta^+_2  p_{(0,1,\aaa')}  
\\
&\qquad \qquad 
=\bar \theta^-_1 \bar \theta^-_2  \bar  p_{(0,0,\aaa')} + \theta^+_1  \bar \theta^-_2  \bar p_{(1,0,\aaa')} +\bar  \theta^-_1 \theta^+_2  \bar p_{(0,1,\aaa')}.
	\end{cases}
\end{align}
An important observation before proceeding with the proof is the following. The assumption $1\not\to 2$ and $2\not\to 1$ ensures that for any $\aaa'\in\{0,1\}^{K-2}$, we must have either (1) $p_{(0,0,\aaa')}>0$, $p_{(1,0,\aaa')}>0$, $p_{(0,1,\aaa')}>0$ holds simultaneously; or (2) $p_{(0,0,\aaa')}=p_{(1,0,\aaa')}=p_{(0,1,\aaa')}=0$.
To show nonidentifiability, we next focus on those $(\ttt^+,\ttt^-,\pp)$ under which for  any $\aaa^*\in\{0,1\}^{K-2}$, there is 
${p_{(0,1,\aaa^*)}}/{p_{(0,0,\aaa^*)}} = u$ and
${p_{(1,0,\aaa^*)}}/{p_{(0,0,\aaa^*)}} = v$,
where $u,v>0$ are some constants. 
Then we take $ \bar\pp$ such that for any $\aaa^*\in\{0,1\}^{K-2}$
$$
p_{(1,1,\aaa^*)} =   \bar p_{(1,1,\aaa^*)},~~
\bar p_{(0,0,\aaa^*)} = \bar\rho \cdot p_{(0,0,\aaa^*)},~~
\frac{\bar p_{(0,1,\aaa^*)}}{\bar p_{(0,0,\aaa^*)}} = \bar u, ~~
\frac{\bar p_{(1,0,\aaa^*)}}{\bar p_{(0,0,\aaa^*)}} = \bar v,
$$
for some constants $\bar\rho,\bar u,\bar v>0$.
In particular, we take $\bar\rho$ close to 1, then \eqref{eq-dist} equivalently becomes
\begin{align}\label{eq-t3}
\begin{cases}
& (1+u+v) = \bar\rho  (1+\bar u+\bar v), \\ 
& \theta^-_1  (1+u) + \theta^+_1 v = \bar\rho ~[~\bar \theta^-_1  (1+\bar u) + \theta^+_1  \bar v~],\\ 
& \theta^-_2 (1+v) + \theta^+_2  u = \bar \rho~ [~\bar \theta^-_2 (1+\bar v) + \theta^+_2  \bar u~],\\ 
& \theta^-_1  \theta^-_2+\theta^-_1  \theta^+_2  u+\theta^+_1  \theta^-_2  v  =\bar\rho ~ [~\bar \theta^-_1  \bar \theta^-_2+\bar \theta^-_1  \theta^+_2  \bar u+ \theta^+_1  \bar \theta^-_2  \bar v~].
\end{cases}
\end{align}
The above system of equations involve 4 constraints for 5 free variables $\bar\rho$, $\bar u$, $\bar v$, $\bar \theta^-_1$ and $\bar \theta^-_2$, so there are infinitely many sets of solutions of $(\bar\rho, \bar u, \bar v, \bar \theta^-_1,  \bar \theta^-_2)$. 
This shows the non-identifiability and concludes the proof of the proposition.

\end{proof}


\section{Proofs of Results in Section 4.2 Bridging the Necessary and Sufficient Identifiability Conditions}\label{pf-bridge}

In the following, we first prove Corollary \ref{thm-connected}, and then prove Theorem \ref{thm-general} building upon the some intermediate steps in the proof of Corollary \ref{thm-connected}.

\begin{proof}[Proof of Corollary \ref{thm-connected}]
We only need to prove the sufficiency of the conditions for identifiability, because the necessity of Condition A was shown in the proof of Proposition \ref{prop-Q} and the necessity of Condition D was established in Propositions \ref{prop-nece-anc}--\ref{prop-nece-int}.

Under Condition A, without loss of generality, suppose the first $K$ rows of the $\QQ$-matrix equals the reachability matrix $\EE$ with $\QQ_{1:K,\bcolon} = \EE$; in other words, $\bq_k = \mathcal D^{\mce}(\ee_k)$ for all $k=1,\ldots,K$.

\medskip
\noindent\textbf{Step 1.~}
First consider an ancestor attribute $k$. Under Condition D, suppose there are two row vectors in $\mathcal S^{\mce}(\QQ)$ indexed by $k$ and $j_k$ (where $j_k >K$) that both measure the attribute $k$; that is, $\bq_k = \bq_{j_k} = \mathcal D^{\mce}(\ee_k) = \ee_k$ since $k$ is an ancestor.
Define 
$$\ttt^\star = \bar\theta_k^- \ee_k + \bar\theta_{j_k}^+ \ee_{j_k} + \sum_{1\leq m\leq K,\, m\neq k} \theta_m^- \ee_m,
\quad
\rr^\star = \text{support}(\ttt^\star),
$$
then $\bar {\mathbf T}_{\rr^\star,\bcolon}(\bar\ttt^+ - \ttt^\star, ~\bar\ttt^- - \ttt^\star) \bar \pp = 0 = \mathbf T_{\rr^\star,\bcolon}(\ttt^+ - \ttt^\star, ~\ttt^- - \ttt^\star) \pp$ for all $\aaa$ due to the first two terms in the above $\ttt^\star$ and that $\bq_k = \bq_{j_k} = \ee_k$.
For any $\aaa\in\mca(\mce)$, if $\aaa\succeq \bq_m$ for all $m\in[K]\setminus\{k\}$, then there must be $\aaa\succeq \bq_k(=\ee_k)$ due to the fact that attribute $k$ is an ancestor attribute. This is because there must exist some $m\in[K]\setminus\{k\}$ such that $k\to m$, which further implies $\bq_m \succeq \bq_k$ since $\QQ_{1:K,\bcolon} = \EE$.
So
\begin{align*}
	  0 = {\mathbf T}_{\rr^\star,\bcolon}(\ttt^+ - \ttt^\star, ~\ttt^- - \ttt^\star)\pp
	= p_{\one_K} (\theta_k^+ - \bar\theta_k^-) (\theta_{j_k}^+ - \bar\theta_{j_k}^+) 
	\prod_{1\leq m\leq K,\, m\neq k} (\theta_m^+ - \theta_m^-),
\end{align*}
so we have $\theta_{j_k}^+ = \bar\theta_{j_k}^+$. Similarly we have $\theta_{k}^+ = \bar\theta_{k}^+$.
Still consider an ancestor attribute $k$ and define 
$$\ttt^\star = \bar\theta_k^- \ee_k + \theta_{j_k}^+ \ee_{j_K}.$$
Since we have shown $\theta_{j_k}^+ = \bar\theta_{j_k}^+$, there is $\bar {\mathbf T}_{\rr^\star,\aaa} (\bar\ttt^+ - \ttt^\star, ~\bar\ttt^- - \ttt^\star) =\zero$ for all $\aaa$ and hence $\bar {\mathbf T}_{\rr^\star,\bcolon}(\bar\ttt^+ - \ttt^\star, ~\bar\ttt^- - \ttt^\star) \bar \pp = 0 = \mathbf T_{\rr^\star,\bcolon}(\ttt^+ - \ttt^\star, ~\ttt^- - \ttt^\star) \pp$. For any $\aaa \in \mca(\mce)$, if $\aaa \nsucceq \bq_{j_k}$ there must be $\aaa \nsucceq \bq_{k}$, so
$$
 0 = \mathbf T _{\rr^\star,\bcolon}(\ttt^+ - \ttt^\star, ~\ttt^- - \ttt^\star)\pp
   = \Big(\sum_{\aaa:\, \aaa\nsucceq \bq_k} p_{\aaa} \Big) (\theta_{k}^- - \bar\theta_k^-) (\theta_{j_k}^- - \theta_{j_k}^+).
$$
This implies $\theta_{k}^- = \bar\theta_k^-$. By symmetry we can similarly obtain $\theta_{j_k}^- = \bar\theta_{j_k}^-$.
Now we have proved that for any ancestor attribute $k$, the item parameters associated with items $k$ and $j_k$ are identifiable.

\medskip
\noindent\textbf{Step 2.~}
Consider a leaf or an intermediate attribute $k$ and we next show $\theta_k^- = \bar\theta_k^-$.
Define
\begin{align}\label{eq-con-an}
	\ttt^\star=
	\bar\theta_k^-\ee_k + \sum_{m\text{ is ancestor}} \bar\theta_m^+ \ee_m,\quad
	\rr^\star = \text{support}(\ttt^\star).
\end{align}
Since attribute $k$ is a leaf or an intermediate attribute, enumerating all the ancestor attributes as in the above definition \eqref{eq-con-an} of $\ttt^\star$ must include a prerequisite for $k$.
In other words, there exists an ancestor attribute $m$ such that $m\to k$. Therefore for any $\aaa\in\{0,1\}^K$, if $\aaa\succeq \bar\bq_k$ (so $\bar\theta_{k,\aaa} = \bar \theta^+_k$) there must be $\aaa\succeq \bar\bq_m$ (so $\bar\theta_{m,\aaa} = \bar\theta^+_m$) for some ancestor attribute $m$. 
Since $\bar {T}_{\rr^\star,\aaa}(\bar\ttt^+ - \ttt^\star, ~\bar\ttt^- - \ttt^\star)$ contains a factor of $(\bar\theta_{k,\aaa} - \bar \theta^+_k) (\bar\theta_{m,\aaa} - \bar\theta^+_m)$, the above argument indeed proves that $\bar {T}_{\rr^\star,\aaa}(\bar\ttt^+ - \ttt^\star, ~\bar\ttt^- - \ttt^\star) = 0$ for any $\aaa\in\{0,1\}^K$ and hence $\bar {\mathbf T}(\bar\ttt^+ - \ttt^\star, ~\bar\ttt^- - \ttt^\star) \bar \pp = 0 = \mathbf T(\ttt^+ - \ttt^\star, ~\ttt^- - \ttt^\star) \pp$. 
Now consider $\aaa\in\mca(\mce)$ and examine which  $T_{\rr^\star, \aaa}$ is nonzero. For the $\bar\theta_m^+$ in \eqref{eq-con-an}, in Step 1 we have already shown $\theta_m^+ = \bar\theta_m^+$ if $m$ is an ancestor. 
For an allowable attribute pattern $\aaa\in\mca(\mce)$, in order to have 
$$T_{\rr^\star, \aaa}(\ttt^+ - \ttt^\star, ~\ttt^- - \ttt^\star) =  (\theta_{k,\aaa} - \bar\theta_k^-) \prod_{m\text{ is ancestor}} (\theta_{m,\aaa} - \bar\theta_m^+) \neq 0,$$ 
there must be $\theta_{k,\aaa} \neq \theta_m^+$ (i.e., $\theta_{k,\aaa} = \theta_m^-$) for every ancestor attribute $m$. {Since all the attributes are in a connected graph}, such argument implies  $T_{\rr^\star, \aaa} \neq 0$ happens only if $\aaa = \zero_K$. 
For $\aaa = \zero_K$, there is $\theta_{k,\aaa} = \theta_k^-$, so 
$$
0 = \mathbf T_{\rr^\star,\bcolon}(\ttt^+ - \ttt^\star, ~\ttt^- - \ttt^\star) \pp
= p_{\zero_K} (\theta_k^- - \bar \theta_k^-) \prod_{m\text{ is ancestor}} (\theta_m^- - \bar\theta_m^+),
$$
which gives that $\theta_k^- = \bar\theta_k^-$.
Note that thus far we have already shown $\theta_k^- = \bar\theta_k^-$ for all $k=1,\ldots,K$.

\medskip
\noindent\textbf{Step 3.~}
We next consider an intermediate attribute $k$ and prove $\theta_k^+ = \bar \theta_k^+$. Suppose there exists leaf attribute $\alpha_\ell$ such that $k \to \ell$.
Define 
$$
\ttt^\star = \theta_\ell^- \ee_\ell + \bar\theta_k^+ \ee_k,
\quad
\rr^\star = \text{support}(\ttt^\star).
$$
Note that in Step 2 we have shown $\theta_\ell^- = \bar \theta_\ell^-$ for a leaf attribute $\ell$.
So for any $\aaa\in\{0,1\}^K$ if $\aaa\succeq\bq_\ell$ then there must be $\aaa\succeq\bq_k$ so $\bar {\mathbf T}_{\rr^\star, \aaa}(\bar\ttt^+ - \ttt^\star, ~\bar\ttt^- - \ttt^\star) = 0$ for all $\aaa$. 
Therefore 
\begin{align*}
\bar {\mathbf T}_{\rr^\star,\bcolon}(\bar\ttt^+ - \ttt^\star, ~\bar\ttt^- - \ttt^\star) \bar \pp = 0 
= &~ {\mathbf T}(\ttt^+ - \ttt^\star, ~\ttt^- - \ttt^\star)_{\rr^\star,\bcolon} \pp
	\\
= &~ (\theta_\ell^+ - \theta_\ell^-)(\theta_k^+ - \bar\theta_k^+) \Big(\sum_{\aaa:\, \aaa\succeq \bq_\ell} p_{\aaa}\Big),
\end{align*}
which implies $\theta_k^+ = \bar\theta_k^+$.

\medskip
\noindent\textbf{Step 4.~} In this step we consider a leaf attribute and proceed in three separate steps, Step 4.1, 4.2, 4.3, as follows.

\vspace{1mm}
\noindent\textbf{Step 4.1.~}
If the attribute $k$ is a leaf attribute, then there exists $j_k > K$ such that $q^{\text{sparse}}_{j_k,k} = 1$ under Condition D, so $q_{j_k, \ell} = 1$. In this step we first prove $\theta_{j_k}^- = \bar\theta_{j_k}^-$. 
{Note that the sparsified row vector $\bo q^{\text{sparse}}_{j_k}$ can potentially contain multiple entries of ``1'', but $q^{\text{sparse}}_{j_k, \ell} = 1$ happens only if attribute $\ell$ is a leaf attribute (or a singleton attribute, which does not exist under the considered connected-graph hierarchy).} 
The above claim can be deducted from the definitions of attribute types and the sparsifying operation.
In this case we can replace the definition of $\ttt^\star$ in \eqref{eq-con-an} by 
$$
\ttt^\star=
	\bar\theta_{j_k}^-\ee_{j_k} + \sum_{m\text{ is ancestor}} \bar\theta_m^+ \ee_m,
\quad
\rr^\star = \text{support}(\ttt^\star).
$$
Since attribute $k$ is a leaf, enumerating all the ancestor attributes in the above definition of $\ttt^\star$ must include prerequisites for attribute $k$ and all the other leaf attributes measured by item $j_k$ (as indicated in  $\bo q^{\text{sparse}}_{j_k}$). Mathematically, for any $\ell\in[K]$ such that $q^{\text{sparse}}_{j_k, \ell} = 1$, there exists some ancestor attribute $m$ such that $m\to \ell$. 
Similarly as the argument in Step 2 after \eqref{eq-con-an}, this implies that $\bar {T}_{\rr^\star,\aaa}(\bar\ttt^+ - \ttt^\star, ~\bar\ttt^- - \ttt^\star) = 0$ for any $\aaa\in\{0,1\}^K$ and hence $\bar {\mathbf T}(\bar\ttt^+ - \ttt^\star, ~\bar\ttt^- - \ttt^\star) \bar \pp = 0 = \mathbf T(\ttt^+ - \ttt^\star, ~\ttt^- - \ttt^\star) \pp$. 
Now consider $\aaa\in\mca(\mce)$ and examine which  $T_{\rr^\star, \aaa}(\ttt^+ - \ttt^\star, ~\ttt^- - \ttt^\star)$ can potentially be nonzero.
In order to have $T_{\rr^\star, \aaa}(\ttt^+ - \ttt^\star, ~\ttt^- - \ttt^\star) \neq 0$, there must be $\theta_{j_k, \aaa} \neq \bar \theta_m^+ = \theta_m^+$ for every ancestor attribute $m$. This implies $T_{\rr^\star, \aaa}(\ttt^+ - \ttt^\star, ~\ttt^- - \ttt^\star) \neq 0$ only if $\aaa=\zero_K$ since we are considering a connected-graph hierarchy.
For  $\aaa = \zero_K$, there is $\theta_{j_k,\aaa} = \theta_{j_k}^-$, so 
$$
0 = \mathbf T_{\rr^\star,\bcolon}(\ttt^+ - \ttt^\star, ~\ttt^- - \ttt^\star) \pp
= p_{\zero_K} (\theta_{j_k}^- - \bar \theta_{j_k}^-) \prod_{m\text{ is ancestor}} (\theta_m^- - \bar\theta_m^+),
$$
which gives that $\theta_{j_k}^- = \bar\theta_{j_k}^-$.

\medskip
\noindent\textbf{Step 4.2.~}
Still consider a leaf attribute $k$. For any $j_k > K$ such that $q^{\text{sparse}}_{j_k,k} = 1$, we next show $\theta_{j_k}^+ = \bar\theta_{j_k}^+$.
Reasoning similarly as in the above Step 4.1, the $\bo q^{\text{sparse}}_{j_k}$ can potentially contain multiple entries of ``1'', but $q^{\text{sparse}}_{j_k, \ell} = 1$ only if attribute $\ell$ is a leaf under the considered connected-graph hierarchy.
Define
$$
\ttt^\star = \sum_{\ell\in[K]:\, q^{\text{sparse}}_{j_k, \ell} = 1} \theta_\ell^- \ee_\ell,
\quad
\rr^\star = \text{support}(\ttt^\star),
$$
and recall that we have shown $\theta_\ell^- = \bar\theta_\ell^-$ for any leaf attribute $\ell$ in Step 2.
Introduce a notation $\mc K = \{\ell\in[K]:\, q^{\text{sparse}}_{j_k, \ell} = 1\}$, then $k\in \mc K$ and $\mc K$ potentially contain some additional leaf attributes. Recall that $\bq_{\mc K} = \vee_{k\in\mc K}\, \bq_k$, then the element ${T}_{\rr^\star, \aaa} (\ttt^+ - \ttt^\star, ~\ttt^- - \ttt^\star) \neq 0$ only if $\aaa \succeq \bq_{\mc K}$.
Then there are 
\begin{align*}
	\bar{\mathbf T}_{\rr^\star,\bcolon}(\bar\ttt^+ - \ttt^\star, ~\bar\ttt^- - \ttt^\star)\bar\pp 
	= &~ {\mathbf T}_{\rr^\star,\bcolon}(\ttt^+ - \ttt^\star, ~ \ttt^- - \ttt^\star) \pp 
	\\
	= &~ \prod_{\ell \in \mc K} (\theta_\ell^+ - \theta_\ell^-)\Big(\sum_{\aaa:\,\aaa\succeq\bq_{\mc K}} p_{\aaa}\Big) \neq 0
\end{align*}  
therefore
\begin{align}\label{eq-ejk}
\frac{\bar  {\mathbf T}_{\rr^\star +\ee_{j_k},\bcolon}(\bar\ttt^+ - \ttt^{\star},~ \bar\ttt^- - \ttt^{\star}) \bar\pp}
{\bar{\mathbf T}_{\rr^\star,\bcolon}(\bar\ttt^+ - \ttt^\star, ~\bar\ttt^- - \ttt^\star) \bar\pp}
=
\frac{{\mathbf T}_{\rr^\star + \ee_{j_k},\bcolon} (\ttt^+ - \ttt^{\star},~ \ttt^- - \ttt^{\star}) \pp}
{{\mathbf T}_{\rr^\star, \bcolon} (\ttt^+ - \ttt^\star, ~\ttt^- - \ttt^\star)\pp}.
\end{align}
Since $\theta_\ell^- = \bar\theta_\ell^-$ for all $\ell\in\mc K$, there also is ${\bar T}_{\rr^\star, \aaa} (\bar\ttt^+ - \ttt^\star, ~\bar\ttt^- - \ttt^\star) \neq 0$ only if $\aaa \succeq \bq_{\mc K}$.
Further, for any $\aaa \succeq \bq_{\mc K}$, there is $\theta_{j_k,\aaa} = \theta_{j_k}^+$ and $\bar\theta_{j_k,\aaa} = \bar\theta_{j_k}^+$.
Now we can write the equality in \eqref{eq-ejk} as
\begin{align*}
&~\frac{\bar\theta_{j_k}^+ 
\prod_{\ell \in \mc K}
(\bar\theta_\ell^+ - \theta_\ell^-)\left(\sum_{\aaa: \,\aaa\succeq\bq_{\mc K}}\bar p_{\aaa}\right)}
     {\prod_{\ell \in \mc K}  (\bar\theta_\ell^+ -  \theta_\ell^-)\left(\sum_{\aaa:\,\aaa\succeq\bq_{\mc K}}\bar p_{\aaa}\right)}
\\
=
&~
\frac{\theta_{j_k}^+
\prod_{\ell\in \mc K}
(\theta_\ell^+ - \theta_\ell^-)\left(\sum_{\aaa:\,\aaa\succeq\bq_{\mc K}} p_{\aaa}\right)}{\prod_{\ell \in \mc K} (\theta_\ell^+ - \theta_\ell^-)\left(\sum_{\aaa:\,\aaa\succeq\bq_{\mc K}} p_{\aaa}\right)},
\end{align*}
which implies $\bar\theta_{j_k}^+ = \theta_{j_k}^+$.

\medskip
\noindent\textbf{Step 4.3.~}
Still consider a leaf attribute $k$ and we next prove $\bar\theta^+_{k} = \theta^+_{k}$.
Define
$$
\ttt^\star = 
    \theta^-_{j_k}\ee_{j_k},
$$
Recall that we have shown $\theta^-_{j_k} = \bar \theta^-_{j_k}$ in Step 4.1.
Therefore, 
\begin{equation*}
    \frac{ \bar  T_{\ee_{j_k}+\ee_k,\bcolon}(\bar\ttt^+ - \ttt^\star, ~\bar\ttt^- - \ttt^\star)  \bar\pp}{ \bar  T_{\ee_{j_k},\bcolon}(\bar\ttt^+ - \ttt^\star, ~\bar\ttt^- - \ttt^\star)  \bar\pp}=
    \frac{T_{\ee_{j_k}+\ee_k,\bcolon}(\ttt^+ - \ttt^\star, ~\ttt^- - \ttt^\star) \pp}{T_{\ee_{j_k},\bcolon} (\ttt^+ - \ttt^\star, ~\ttt^- - \ttt^\star) \pp},
\end{equation*}
which further gives
\begin{equation*}
    \frac{\bar\theta_{k}^+ \left(\bar\theta^+_{j_k} - \theta^-_{j_k}\right) \left(\sum_{\aaa:\, \aaa\succeq\bq_{j_k}} \bar p_{\aaa}\right)}
    {\left(\bar\theta^+_{j_k} - \theta^-_{j_k}\right) \left(\sum_{\aaa:\, \aaa\succeq\bq_{j_k}}\bar p_{\aaa}\right)}
    =
    \frac{\theta_{k}^+  \left(\bar\theta^+_{j_k} - \theta^-_{j_k}\right)  \left(\sum_{\aaa:\, \aaa\succeq\bq_{j_k}} p_{\aaa}\right)}
    { \left(\bar\theta^+_{j_k} - \theta^-_{j_k}\right) \left(\sum_{\aaa:\, \aaa\succeq\bq_{j_k}} p_{\aaa}\right)}.
\end{equation*}
Since in Step 4.2 we have shown $\bar\theta^+_{j_k} = \theta^+_{j_k}$, the above display implies $\bar\theta_{k}^+ = \theta_{k}^+$.

\medskip
\noindent\textbf{Step 5.~}
Our proof has already shown that if assuming $\QQ_{1:K,\bcolon} = \EE$, then all the item parameters associated with the first $K$ items are identifiable.
Now consider an arbitrary item $j > K$.
For notational simplicity, we next write the row vector $\bar {\mathbf T}_{\rr^\star,\bcolon}(\bar\ttt^+ - \ttt^\star, ~\bar\ttt^- - \ttt^\star)$ simply as $\bar {\mathbf T}_{\rr^\star,\bcolon}$, and ${\mathbf T}_{\rr^\star,\bcolon}(\ttt^+ - \ttt^\star, ~\ttt^- - \ttt^\star)$ simply as ${\mathbf T}_{\rr^\star,\bcolon}$.
Define $\ttt^\star = \sum_{k=1}^K \theta_k^+ \ee_k$ and $\rr^\star = \sum_{k=1}^K \ee_k$.
Note that $\bar{\mathbf T}_{\rr^\star,\bcolon}\bar \pp = {\mathbf T}_{\rr^\star,\bcolon} \pp = \prod_{k=1}^K (\theta_k^- - \theta_k^+) p_{\zero_K} \neq 0$, so there is 
\begin{align*}
	\frac{ \bar\theta_j^- \prod_{k=1}^K (\theta_k^- - \theta_k^+) \bar p_{\zero_K}}
	     {\prod_{k=1}^K (\theta_k^- - \theta_k^+) \bar p_{\zero_K}}
	     =
	 \frac{\bar{\mathbf T}_{\rr^\star + \ee_j,\bcolon}\bar \pp}{\bar{\mathbf T}_{\rr^\star,\bcolon}\bar \pp}
     =
     \frac{{\mathbf T}_{\rr^\star + \ee_j,\bcolon}\bar \pp}{{\mathbf T}_{\rr^\star,\bcolon}\bar \pp}
     =
     \frac{ \theta_j^- \prod_{k=1}^K (\theta_k^- - \theta_k^+) p_{\zero_K}}
	     {\prod_{k=1}^K (\theta_k^- - \theta_k^+) p_{\zero_K}},
\end{align*}
which gives $\bar\theta_j^- = \theta_j^-$.  
Similarly define $\ttt^{'} = \sum_{k=1}^K \theta_k^- \ee_k$ and $\rr^{'} = \sum_{k=1}^K \ee_k$.
Since $\bar{\mathbf T}_{\rr',\bcolon}\bar \pp = {\mathbf T}_{\rr',\bcolon} \pp = \prod_{k=1}^K (\theta_k^+ - \theta_k^-) p_{\one_K} \neq 0$, there is
\begin{align*}
	\frac{ \bar\theta_j^+ \prod_{k=1}^K (\theta_k^+ - \theta_k^-) \bar p_{\one_K}}
	     {\prod_{k=1}^K (\theta_k^+ - \theta_k^-) \bar p_{\one_K}}
	     =
	 \frac{\bar{\mathbf T}_{\rr^{'} + \ee_j,\bcolon}\bar \pp}{\bar{\mathbf T}_{\rr^{'},\bcolon}\bar \pp}
     =
     \frac{{\mathbf T}_{\rr^{'} + \ee_j,\bcolon}\bar \pp}{{\mathbf T}_{\rr^{'},\bcolon}\bar \pp}
     =
     \frac{ \theta_j^+ \prod_{k=1}^K (\theta_k^+ - \theta_k^-) p_{\one_K}}
	     {\prod_{k=1}^K (\theta_k^+ - \theta_k^-) p_{\one_K}},
\end{align*}
which gives $\bar\theta_j^+ = \theta_j^+$. Thus far we have shown that all the item parameters are identifiable.

\medskip
\noindent\textbf{Step 6.~}
We introduce the following useful lemma.

\begin{lemma}\label{lem-prop}
	Consider a two-parameter HLAM under a fixed $\QQ$-matrix with $\QQ_{1:K,\bcolon} =\EE$.
	If item parameters $\ttt^+$ and $\ttt^-$ are fixed and known, then the proportion parameters $\pp$ and the attribute hierarchy $\mce$ are identifiable.
\end{lemma}

\begin{proof}[Proof of Lemma \ref{lem-prop}]

Next we show $\bar p_{\aaa} = p_{\aaa}$ for all $\aaa\in\mathcal A(\mathcal E)$. 
First, define $\ttt^\star = \sum_{k=1}^K \theta^+_k\ee_k$ and $\rr^\star = \sum_{k=1}^K \ee_k$, then  $\bar {\mathbf T}_{\rr^\star,\bcolon}(\bar\ttt^+ - \ttt^\star, ~\bar\ttt^- - \ttt^\star) \bar \pp = \mathbf {\mathbf T}_{\rr^\star,\bcolon}(\ttt^+ - \ttt^\star, ~\ttt^- - \ttt^\star) \pp$ gives 
	$$
	\bar p_{\zero_K}\prod_{k=1}^K (\theta^-_k - \theta^+_k) =  p_{\zero_K}\prod_{k=1}^K (\theta^-_k - \theta^+_k),
	$$
	which implies $\bar p_{\zero_K} = p_{\zero_K}$.
Second, for any $\aaa\in\mca(\mce)$ we have $p_{\aaa}>0$. Define
$$
\ttt^\star = \sum_{1\leq k\leq K:\, \alpha_k=1} \theta^-_k\ee_k
+\sum_{1\leq k\leq K:\, \alpha_k=0} \theta^+_k\ee_k,
$$
and let $\rr^\star$ denote the support vector of $\ttt^\star$.
Then $\bar {\mathbf T}_{\rr^\star,\bcolon}(\bar\ttt^+ - \ttt^\star, ~\bar\ttt^- - \ttt^\star) \bar \pp = \mathbf {\mathbf T}_{\rr^\star,\bcolon}(\ttt^+ - \ttt^\star, ~\ttt^- - \ttt^\star) \pp$ gives
\begin{align*}
&~\bar p_{\aaa} \prod_{k\geq 1,\, \alpha_k=1}(\theta^+_k-\theta^-_k)\prod_{k\geq 1,\, \alpha_k=0}(\theta^-_k-\theta^+_k)\\
=&~
p_{\aaa} \prod_{k\geq 1,\, \alpha_k=1}(\theta^+_k-\theta^-_k)\prod_{k\geq 1,\, \alpha_k=0}(\theta^-_k-\theta^+_k),	
\end{align*}
which gives $\bar p_{\aaa} = p_{\aaa}$ for all $\aaa\in\mca(\mce)$. Thus far we established $\bar p_{\aaa} = p_{\aaa}$ for all $\aaa\in\mca(\mce)$. This implies $\sum_{\aaa\in\mca(\mce)} \bar p_{\aaa} = \sum_{\aaa\in\mca(\mce)} p_{\aaa} = 1$. Now from $\sum_{\aaa\in\{0,1\}^K} \bar p_{\aaa} = 1$ and $\bar p_{\aaa} \geq 0$ for any $\aaa\in\{0,1\}^K$, we obtain that $\bar p_{\aaa} =0$ for any $\aaa\in\{0,1\}^K\setminus \mca(\mce)$.
Now we have proved $\bar\pp = \pp$. This establishes the identifiability of  $\mce$ and $\pp$ and completes the proof of the lemma.
\end{proof}

Since the previous steps have already shown the identifiability of the item parameters $\ttt^+$ and $\ttt^-$, the conclusion of Lemma \ref{lem-prop} directly applies and we obtain the identifiability of $\pp$ and $\mce$. The proof of Corollary \ref{thm-connected} is now complete.
\end{proof}

\begin{proof}[Proof of Theorem \ref{thm-general}]
It suffices to prove the sufficiency of the conditions for identifiability, because the necessity of the conditions has already established in the previous propositions; in particular, the necessity of Condition A was proved in   Proposition \ref{prop-Q}, the necessity of B$^{\,\star}$ was proved in Propositions \ref{prop-nece-sing}-\ref{prop-nece-int}, and the necessity of  C$^{\,\star}$ was proved in Proposition \ref{thm-nec-dist}.

We prove the sufficiency of Conditions A, B$^\star$, and C$^\star$ in several steps. 
First denote the set of ancestor, intermediate, leaf, and singleton attributes by 
\begin{align}\label{eq-ancestor}
	\mc K^{\ancestor} &= \{k\in[K]:\, \alpha_k\text{ is an ancestor attribute}\};
	\\ \notag
	\mc K^{\interm} &= \{k\in[K]:\, \alpha_k\text{ is an intermediate attribute}\};
	\\ \notag
	\mc K^{\leaf} &= \{k\in[K]:\, \alpha_k\text{ is a leaf attribute}\};
	\\ \notag
	\mc K^{\single} &= \{k\in[K]:\, \alpha_k\text{ is a singleton attribute}\}.
\end{align}
Recall that we assume without loss of generality that $\QQ_{1:K,\bcolon} = \EE$ under Condition A.
In the following, we abuse notation a little by referring to each set above as both a set of latent attributes and also a set of integers potentially indexing the items. 
In the following, we prove the theorem in nine  steps, with the roadmap given in Table \ref{tab-proof}.

\begin{table}[h!]
\caption{{Roadmap of the proof of Theorem \ref{thm-general}, establishing the identifiability of item parameters $\ttt^+$ and $\ttt^-$. The identifiability of the proportion parameters $\pp$ and the attribute hierarchy graph $\mce$ are proved in the final Step 9.}}
   \label{tab-proof}
   \centering
  \begin{tabular}{cccccccc}
    \toprule
    \multirow{2}{*}{Item parameters} & \multicolumn{4}{c}{For $1\leq j\leq K$ and the $j$th attribute is:} & &  \multirow{2}{*}{For $K+1 \leq j \leq J$:} & 
    \\
    \cmidrule(lr){2-5}
     & ancestor & intermediate & leaf & singleton &  & &
    \\
    \midrule
    $\theta_{j}^+$  & Step 1  & Step 3 & Step 5 & Step 7 & &  Step 4 &  \\[1mm]
    $\theta_{j}^-$  & Step 1  & Step 2 & Step 2 & Step 6 & &  Step 8 &  \\
    \bottomrule
   \end{tabular}
   
\end{table}

\medskip
\noindent\textbf{Step 1.~}
First consider an ancestor attribute $k$.
This step proceeds in the same way as Step 1 in the proof of Corollary \ref{thm-connected} under Condition B$^\star$, so we omit the details. Just recall that Condition B$^\star$ implies there are two row vectors in $\mathcal S^{\mce}(\QQ)$ indexed by $k$ and $j_k$ (where $j_k >K$) that both measure the attribute $k$; that is, $\bq_k = \bq_{j_k} = \mathcal D^{\mce}(\ee_k) = \ee_k$ since $k$ is an ancestor.
Then the same argument as Step 1 in the proof of Corollary \ref{thm-connected} gives that for any ancestor attribute $k$, the item parameters associated with items $k$ and $j_k$ are identifiable.

\medskip
\noindent\textbf{Step 2.~}
Consider a leaf or an intermediate attribute $k$ and we next show $\theta_k^- = \bar\theta_k^-$.
This step is a modification of Step 2 in the proof of Corollary \ref{thm-connected}.
Define
\begin{align}\label{eq-con-an2}
	\ttt^\star=
	\bar\theta_k^-\ee_k + \sum_{m\in \mc K^{\ancestor}} \bar\theta_m^+ \ee_m,\quad
	\rr^\star = \text{support}(\ttt^\star).
\end{align}
Since attribute $k$ is a leaf or an intermediate, 
there must exist an ancestor attribute $m$ such that $m\to k$. 
Therefore for any $\aaa\in\{0,1\}^K$, if $\aaa\succeq \bar\bq_k$ (so $\bar\theta_{k,\aaa} = \bar \theta^+_k$) there must be $\aaa\succeq \bar\bq_m$ (so $\bar\theta_{m,\aaa} = \bar\theta^+_m$) for some ancestor attribute $m$. 
Since ${T}_{\rr^\star,\aaa}(\bar\ttt^+ - \ttt^\star, ~\bar\ttt^- - \ttt^\star)$ contains a factor of $(\bar\theta_{k,\aaa} - \bar \theta^+_k) (\bar\theta_{m,\aaa} - \bar\theta^+_m)$, the above argument indeed proves that ${T}_{\rr^\star,\aaa}(\bar\ttt^+ - \ttt^\star, ~\bar\ttt^- - \ttt^\star) = 0$ for any $\aaa\in\{0,1\}^K$ and hence ${\mathbf T}_{\rr^\star, \bcolon}(\bar\ttt^+ - \ttt^\star, ~\bar\ttt^- - \ttt^\star) \bar \pp = 0 = \mathbf T_{\rr^\star, \bcolon}(\ttt^+ - \ttt^\star, ~\ttt^- - \ttt^\star) \pp$. 
Now consider $\aaa\in\mca(\mce)$ and examine which  $T_{\rr^\star, \aaa}$ is nonzero. For the $\bar\theta_m^+$ in \eqref{eq-con-an}, in Step 1 we have already shown $\theta_m^+ = \bar\theta_m^+$ if $m$ is an ancestor. 
For an allowable attribute pattern $\aaa\in\mca(\mce)$, in order to have 
\begin{align}\label{eq-sinleaf}
	T_{\rr^\star, \aaa}:=
	T_{\rr^\star, \aaa}(\ttt^+ - \ttt^\star, ~\ttt^- - \ttt^\star) =  (\theta_{k,\aaa} - \bar\theta_k^-) \prod_{m\in \mc K^{\ancestor}} (\theta_{m,\aaa} - \bar\theta_m^+) \neq 0,
\end{align}
there need to be $\theta_{k,\aaa} \neq \theta_m^+$ (that is, $\theta_{k,\aaa} = \theta_m^-$) for every ancestor attribute $m$;
for such attribute pattern $\aaa$ with $\aaa\nsucceq\bq_m$ for all $m\in \mc K^{\ancestor}$, there must also be $\aaa\nsucceq\bq_k$ and hence $\theta_{k,\aaa} = \theta_k^-$. Therefore for such $\aaa$,  the $T_{\rr^\star, \aaa}$ in \eqref{eq-sinleaf} equals $(\theta_k^- - \bar \theta_k^-) \prod_{m \in \mc K^{\ancestor}} (\theta_m^- - \bar\theta_m^+)$.
Therefore we have
\begin{align*}
0 = &~ \mathbf T_{\rr^\star,\bcolon}(\ttt^+ - \ttt^\star, ~\ttt^- - \ttt^\star) \pp\\
= &~ \Big(\sum_{\aaa\in\mca^\star} p_{\aaa}\Big) (\theta_k^- - \bar \theta_k^-) \prod_{m \in \mc K^{\ancestor}} (\theta_m^- - \bar\theta_m^+),
\end{align*}
where
$
\mca^\star=\{\aaa\in\mca(\mce):\,
\aaa\nsucceq\bq_m~\forall~m\in \mc K^{\ancestor}\}.
$
The above equality gives that $\theta_k^- = \bar\theta_k^-$ for any $k\in \mc K^{\interm} \cup \mc K^{\leaf}$.

\medskip
\noindent\textbf{Step 3.~}
We next consider an intermediate attribute $k$ and prove $\theta_k^+ = \bar \theta_k^+$.
To show this we can use a similar argument as Step 3 in the proof of Corollary \ref{thm-connected}.
In short, consider a leaf attribute $\alpha_\ell$ such that $k \to \ell$, and define 
$\ttt^\star = \theta_\ell^- \ee_\ell + \bar\theta_k^+ \ee_k$ and
$\rr^\star = \text{support}(\ttt^\star).$
Since in Step 2 we have shown $\theta_\ell^- = \bar \theta_\ell^-$ for any leaf attribute $\ell$,
then we can obtain $\theta_k^+ = \bar\theta_k^+$ for any $k\in \mc K^{\interm}$.

\medskip
\noindent\textbf{Step 4.~}
In this step we prove $\theta_{j}^+ = \bar\theta_{j}^+$ for any $j\in\{K+1,\ldots, J\}$. First note that $q^{\sparse}_{j,k} = 1$ only if $k$ is a leaf attribute or a single attribute. 
%
Define 
\begin{align}\label{eq-s42}
	\ttt^\star = \sum_{k=1}^K \bar\theta^-_k \ee_k + \sum_{h>K:\, h\neq j} \theta^-_h \ee_h,\quad
	\rr^\star = \support(\ttt^\star).
\end{align}
Under $\QQ_{1:K,\bcolon} = \EE$, the element $ T_{\rr^\star,\aaa}(\bar\ttt^+ - \ttt^\star, \bar\ttt^- - \ttt^\star)$ is potentially nonzero only if $\aaa = \one_K$ because of the first summation term in \eqref{eq-s42}.
Next consider $T_{\rr^\star,\aaa}(\ttt^+ - \ttt^\star, \ttt^- - \ttt^\star)$.
We now claim that the following quantity is potentially nonzero also only if $\aaa = \one_K$,
\begin{equation}
	\label{eq-s42t}
	T_{\rr^\star,\aaa} := T_{\rr^\star,\aaa}(\ttt^+ - \ttt^\star, \ttt^- - \ttt^\star) = 
	\prod_{k=1}^K (\theta_{k, \aaa} - \bar\theta^-_k) \prod_{h>K:\, h\neq j} (\theta_{h, \aaa} - \theta^-_h);
\end{equation}
 we next prove this claim.
\textit{First}, if an attribute pattern $\aaa$ lacks any leaf attribute $\ell \in \mc K^{\leaf}$, then $\theta_{\ell, \aaa} = \theta^-_\ell = \bar \theta^-_\ell$ (thanks to the conclusion obtained in Step 2). This results in $T_{\rr^\star,\aaa}$ defined in \eqref{eq-s42t} containing a factor of $(\theta^-_\ell - \theta^-_\ell)$ and hence it must be zero.
\textit{Second}, consider an attribute pattern $\aaa$ that lacks any singleton attribute $k \in \mc K^{\single}$.
Since in the current Step 4.2 we consider the case $q^{\sparse}_{j, k} = 1$ only for those $k \in \mc K^{\single}$. 
For any $k \in \mc K^{\single}$, Condition B$^\star$ states there are $\geq 3$ items measuring $k$ in the sparsified matrix $\mc S^{\mce}(\QQ)$. 
Since we assume $\QQ_{1:K, \bcolon} = \EE$ under Condition A, Condition B$^\star$ essentially means there are $\geq 2$ items measuring attribute $k$ in the sparsified submatrix $\mc S^{\mce}(\QQ_{(K+1):J, \bcolon})$.
Therefore other than the currently considered item $j$, there must exist some other item $j_k > K$, $j_k \neq j$ such that $q^{\sparse}_{j_k, k} = 1$.
For this $j_k$, since we are considering an attribute pattern $\aaa$ that lacks the attribute $k$, there must be $\aaa\nsucceq \bq_{j_k}$ and hence $\theta_{j_k, \aaa} = \theta^-_{j_k}$.
The above reasoning results in $T_{\rr^\star,\aaa}$ in \eqref{eq-s42t} containing a factor of $(\theta^-_{j_k} - \theta^-_{j_k})$ and hence it must be zero.
To summarize, we have shown that the $T_{\rr^\star,\aaa}$ defined in \eqref{eq-s42t} equals zero unless the attribute pattern $\aaa$ possesses all the leaf attributes and all the singleton attributes. Since the $T_{\rr^\star,\aaa}$ is meaningful only for attribute patterns respecting the attribute hierarchy $\aaa\in\mca(\mce)$, any such $\aaa$ possessing all the leaf and singleton attributes must be the all-one latent pattern $\aaa=\one_K$.
Thus far we have proved the earlier claim that the $T_{\rr^\star,\aaa}$ defined in \eqref{eq-s42t} is potentially nonzero only if $\aaa = \one_K$.

Further, for $\aaa = \one_K$ the element $T_{\rr^\star,\one_K} = \prod_{k=1}^K (\theta^+_k - \bar\theta^-_k) \prod_{h>K:\, h\neq j} (\theta^+_h - \theta^-_h)$ is indeed nonzero because each factor of it is nonzero.
Combined with the observation in the beginning of this Step 4.2, we have the following equality for the $\ttt^\star$ defined in \eqref{eq-s42}, 
\begin{align}\label{eq-s421}
    &~\prod_{k=1}^K (\bar\theta^+_k - \bar\theta^-_k) \prod_{h>K:\, h\neq j} (\bar\theta^+_h - \theta^-_h) \bar p_{\one_K}
    \\ \notag
	&~ 
	T_{\rr^\star,\one_K}(\bar\ttt^+ - \ttt^\star, \bar\ttt^- - \ttt^\star) \bar p_{\one_K}
	=T_{\rr^\star,\bcolon}(\bar\ttt^+ - \ttt^\star, \bar\ttt^- - \ttt^\star) \bar \pp 
	\\ \notag
	=&~ T_{\rr^\star,\bcolon}(\ttt^+ - \ttt^\star, \ttt^- - \ttt^\star)  \pp
	=
	T_{\rr^\star,\one_K}(\ttt^+ - \ttt^\star, \ttt^- - \ttt^\star) p_{\one_K} 
	\\ \label{eq-s422}
	=&~ \prod_{k=1}^K (\theta^+_k - \bar\theta^-_k) \prod_{h>K:\, h\neq j} (\theta^+_h - \theta^-_h) p_{\one_K}
	\neq 0.
\end{align}
Based on the above equality, we further consider the item $j$, then there is $\theta_{j, \one_K} = \theta^+_j$ and $\bar\theta_{j, \one_K} = \bar\theta^+_j$.
Therefore we can obtain
\begin{align*}
	\frac{T_{\rr^\star + \ee_j, \bcolon}(\bar\ttt^+ - \ttt^\star, \bar\ttt^- - \ttt^\star) \bar \pp}
	{T_{\rr^\star, \bcolon}(\bar\ttt^+ - \ttt^\star, \bar\ttt^- - \ttt^\star) \bar \pp}
	&=
	\frac{T_{\rr^\star + \ee_j, \bcolon}(\ttt^+ - \ttt^\star, \ttt^- - \ttt^\star) \bar \pp}
	{T_{\rr^\star, \bcolon}(\ttt^+ - \ttt^\star, \ttt^- - \ttt^\star) \bar \pp},
\end{align*}
which can be further written as
\begin{align*}
	\frac{\bar\theta^+_{j} \cdot \text{Eq.~}\eqref{eq-s421}}
	{\text{Eq.~}\eqref{eq-s421}}
	=
	\frac{\theta^+_{j} \cdot \text{Eq.~}\eqref{eq-s422}}
	{\text{Eq.~}\eqref{eq-s422}}.
\end{align*}
This gives $\bar\theta^+_{j} = \theta^+_{j}$ and completes the proof of Step 4.

\medskip
\noindent\textbf{Step 5.~}
In this step we aim to show $\theta_k^+ = \bar\theta_k^+$ for any leaf attribute $k \in \mc K^{\leaf}$.
To prove this conclusion, we first prove an intermediate result in Step 5.1 and then prove $\theta_k^+ = \bar\theta_k^+$ for   $k\in \mc K^{\leaf}$ in Step 5.2.

\vspace{1mm}
\noindent\textbf{Step 5.1.~}
Consider any item $j\in\{K+1,\ldots,J\}$ for which there exists some leaf attribute $k$ such that $q^{\sparse}_{j, k} = 1$. In this step we prove $\theta_j^- = \bar \theta_j^-$.
Define 
\begin{align*}
	\ttt^\star = 
	\bar\theta_{j}^- \ee_{j} + \sum_{m\in \mc K^{\ancestor}} \theta^+_m \ee_m,
	\quad \rr^\star = \support(\ttt^\star).
\end{align*}
Recall that we have shown $\theta^+_m = \bar\theta^+_m$ for any $m\in \mc K^{\ancestor}$ in Step 1.
For any $\aaa\in\{0,1\}^K$, the element $T_{\rr^\star,\aaa}(\bar\ttt^+ - \ttt^\star, \bar\ttt^- - \ttt^\star) = (\bar\theta_{j,\aaa} - \bar\theta^-_j) \prod_{m\in \mc K^{\ancestor}} (\bar\theta_{m,\aaa} - \theta^+_m)$ is potentially nonzero only if $\bar\theta_{m,\aaa} = \theta^-_m$ for all $m \in \mc K^{\ancestor}$, which holds only if $\aaa \nsucceq \bq_m$ for all $m \in \mc K^{\ancestor}$. Such an attribute pattern $\aaa$ must satisfy $\aaa \nsucceq \bq_j$ because of the following facts: $q^{\sparse}_{j,k} = 1$ holds for some leaf attribute $k$ so there is $m \to k$ for some $m\in\mc K^{\ancestor}$ and $\bq_k \succeq \bq_m$.
The above reasoning implies $\bar\theta_{j,\aaa} = \bar\theta_j^-$, therefore $T_{\rr^\star,\aaa}(\bar\ttt^+ - \ttt^\star, \bar\ttt^- - \ttt^\star) = 0$ always holds.
This means $T_{\rr^\star, \bcolon}(\bar\ttt^+ - \ttt^\star, \bar\ttt^- - \ttt^\star) \bar\pp  = 0 = T_{\rr^\star, \bcolon}(\ttt^+ - \ttt^\star, \ttt^- - \ttt^\star) \pp$.
Now consider the element $T_{\rr^\star, \aaa} := T_{\rr^\star, \aaa}(\ttt^+ - \ttt^\star, \ttt^- - \ttt^\star)$, by a similar argument there is $T_{\rr^\star, \aaa}$ is potentially nonzero only if $\aaa \nsucceq \bq_m$ for all $m \in \mc K^{\ancestor}$ and for such $\aaa$ there is also $\theta_{j,\aaa} = \theta_j^-$. Therefore
\begin{align*}
	0 = 
	T_{\rr^\star, \bcolon}(\ttt^+ - \ttt^\star, \ttt^- - \ttt^\star) \pp
	 = \Big( \sum_{\aaa\in\mca^\star} p_{\aaa} \Big)
	   (\theta^-_{j} - \bar\theta^-_j) \prod_{m\in \mc K^{\ancestor}} (\theta^-_{m} - \theta^+_m),
\end{align*}
where $\mca^\star = \{\aaa\in\mca(\mce):\, \aaa\nsucceq \bq_m$ for all $m \in \mc K^{\ancestor}\}$.
The above display gives $\theta^-_{j} = \bar\theta^-_j$, and this holds for any item $j > K$ that measures a leaf attribute in the sparsified matrix $\mc S^{\mce}(\QQ)$.

\medskip
\noindent\textbf{Step 5.2.~} We next show $\theta_k^+ = \bar\theta_k^+$ for   $k\in \mc K^{\leaf}$. For a leaf attribute $k$, by Condition B$^\star$ and with $\QQ_{1:K,\bcolon} = \EE$, there exists some item $j_k > K$ such that $q^{\sparse}_{j_k, k} = 1$. 
For this $j_k$ we define
$
\ttt^\star = 
    \theta^-_{j_k}\ee_{j_k},
$
then there is $T_{\ee_{j_k},\bcolon} (\ttt^+ - \ttt^\star, ~\ttt^- - \ttt^\star) \pp = (\theta_{j_k}^+ - \theta_{j_k}^-)\left(\sum_{\aaa:\, \aaa\succeq\bq_{j_k}} p_{\aaa}\right) \neq 0$. 
So there is also $T_{\ee_{j_k},\bcolon} (\bar\ttt^+ - \ttt^\star, ~\bar\ttt^- - \ttt^\star) \bar\pp = T_{\ee_{j_k},\bcolon} (\ttt^+ - \ttt^\star, ~\ttt^- - \ttt^\star) \pp \neq 0$.
Therefore, 
\begin{equation*}
    \frac{ \bar  T_{\ee_{j_k}+\ee_k,\bcolon}(\bar\ttt^+ - \ttt^\star, ~\bar\ttt^- - \ttt^\star)  \bar\pp}{ \bar  T_{\ee_{j_k},\bcolon}(\bar\ttt^+ - \ttt^\star, ~\bar\ttt^- - \ttt^\star)  \bar\pp}=
    \frac{T_{\ee_{j_k}+\ee_k,\bcolon}(\ttt^+ - \ttt^\star, ~\ttt^- - \ttt^\star) \pp}{T_{\ee_{j_k},\bcolon} (\ttt^+ - \ttt^\star, ~\ttt^- - \ttt^\star) \pp}.
\end{equation*}
Now note that the conclusion of Step 5.1 applies to this item $j_k$ and hence $\theta^-_{j_k} = \bar \theta^-_{j_k}$, so the above equality can be written as
\begin{equation*}
    \frac{\bar\theta_{k}^+ \left(\bar\theta^+_{j_k} - \theta^-_{j_k}\right) \left(\sum_{\aaa:\, \aaa\succeq\bq_{j_k}} \bar p_{\aaa}\right)}
    {\left(\bar\theta^+_{j_k} - \theta^-_{j_k}\right) \left(\sum_{\aaa:\, \aaa\succeq\bq_{j_k}}\bar p_{\aaa}\right)}
    =
    \frac{\theta_{k}^+  \left(\bar\theta^+_{j_k} - \theta^-_{j_k}\right)  \left(\sum_{\aaa:\, \aaa\succeq\bq_{j_k}} p_{\aaa}\right)}
    { \left(\bar\theta^+_{j_k} - \theta^-_{j_k}\right) \left(\sum_{\aaa:\, \aaa\succeq\bq_{j_k}} p_{\aaa}\right)}.
\end{equation*}
Further, recall that in Step 4 we have shown $\bar\theta^+_{j} = \theta^+_{j}$ for any $j>K$, so $\bar\theta^+_{j_k} = \theta^+_{j_k}$ and the above display gives $\bar\theta_{k}^+ = \theta_{k}^+$.
This completes the proof of Step 5.

\medskip
\noindent\textbf{Step 6.~}
In this step we prove $\theta^-_k = \bar \theta^-_k$ for any singleton attribute $k \in \mc K^{\single}$.
Denote the number of singleton attributes by $K_s := |\mc K^{\single}|$.
Without loss of generality, we can assume that the first $K_s$ attributes $1,\ldots,K_s$ are singletons and the remaining attributes $K_s+1, \ldots, K$ are not singletons.
Under Condition C$^\star$, the following $K_s$ column vectors corresponding to those singleton attributes are distinct binary vectors: $\QQ_{(K+1):J, \, 1}, \QQ_{(K+1):J, \, 2}, \ldots, \QQ_{(K+1):J, \, K_s}$. 
Recall the definition of the \textit{lexicographic order} between two binary vectors of the same length introduced in the proof of Theorem \ref{thm-main}.
It is not hard to see that under Condition C$^\star$, the $K_s$ distinct column vectors $\QQ_{(K+1):J, \, 1}$, $\QQ_{(K+1):J, \, 2}$, $\ldots$, $\QQ_{(K+1):J, \, K_s}$ can be arranged in an increasing lexicographic order.
Namely, there exists a permutation map $\sigma(\cdot):[K_s]\to[K_s]$ such that 
\begin{equation}\label{eq-qlex-gen}
\QQ_{(K+1):J, \,\sigma(1)}
\prec_{\text{lex}}
\QQ_{(K+1):J, \,\sigma(2)}
\prec_{\text{lex}}
\cdots
\prec_{\text{lex}}
\QQ_{(K+1):J, \,\sigma(K_s)}.
\end{equation}

In the following, we use an induction method to prove $\theta^-_{\sigma(k)} = \bar \theta^-_{\sigma(k)}$ for all $k=1,2,\ldots,K_s$ (equivalently, $\theta^-_k = \bar \theta^-_k$ for all $k=1,2,\ldots,K_s$).
First, for attribute $\sigma(1)$, define
\begin{align}\label{eq-s6}
	\ttt^\star = 
	\sum_{h=1}^K \bar \theta^-_h \ee_h 
	+ \sum_{j>K:\, q_{j, \sigma(1)} = 0}  \theta^-_j \ee_j
	+ \sum_{j>K:\, q_{j, \sigma(1)} = 1}  \theta^+_j \ee_j,
\end{align}
and $\rr^\star = \support(\ttt^\star) = \sum_{j=1}^J \ee_j$.
Then
\begin{align}\label{eq-s6tra}
	T_{\rr^\star, \aaa}(\ttt^+ - \ttt^\star,~ \ttt^- - \ttt^\star) 
	&= 
	\prod_{h=1}^K (\theta_{h,\aaa} - \bar \theta^-_h)
	\prod_{j>K:\atop q_{j, \sigma(1)} = 0}  (\theta_{j,\aaa} -  \theta^-_j)
	\prod_{j>K:\atop q_{j, \sigma(1)} = 1}  (\theta_{j,\aaa} -  \theta^+_j),
	\\ \notag
	T_{\rr^\star, \aaa}(\bar\ttt^+ - \ttt^\star,~ \bar\ttt^- - \ttt^\star) 
	&= 
	\prod_{h=1}^K (\bar\theta_{h,\aaa} - \bar \theta^-_h)
	\prod_{j>K:\atop q_{j, \sigma(1)} = 0}  (\bar\theta_{j,\aaa} - \theta^-_j)
	\prod_{j>K:\atop q_{j, \sigma(1)} = 1}  (\bar\theta_{j,\aaa} - \theta^+_j).
\end{align}
First, we claim that the row vector $T_{\rr^\star, \bcolon}(\bar\ttt^+ - \ttt^\star,~ \bar\ttt^- - \ttt^\star)$ equals the all-zero vector. 
This is because for any $\aaa\neq\one_K$, due to the first summation term in the definition of $\ttt^\star$ in \eqref{eq-s6}, there is $\bar\theta_{h,\aaa} = \bar \theta^-_h$ for some $h\in\{1,\ldots,K\}$ and hence $T_{\rr^\star, \aaa}(\bar\ttt^+ - \ttt^\star,~ \bar\ttt^- - \ttt^\star)$ contains a zero factor $(\bar \theta^-_h - \bar \theta^-_h)$.
Moreover, for $\aaa = \one_K$ the entry $\bar T_{\rr^\star, \one_K} := T_{\rr^\star, \one_K}(\bar\ttt^+ - \ttt^\star,~ \bar\ttt^- - \ttt^\star)$ also equals zero due to the following reason.
The $\bar T_{\rr^\star, \one_K}$ will equal zero if $\bar\theta_{j,\one_K} = \theta^+_j$ for some item $j > K$ with $q_{j, \sigma(1)} = 1$.
Now recall in Step 4 we proved that $\theta^+_j = \bar\theta^+_j$ for all $j > K$. 
So to show $\bar T_{\rr^\star, \one_K} = 0$, we only need that there exists some item $j > K$ that has $q_{j, \sigma(1)} = 1$. 
This indeed is true because the attribute $\sigma(1)$ is a singleton attribute and there exists some $j > K$ that has $q_{j, \sigma(1)} = 1$ under Condition B$^\star$.
Now we have shown that $T_{\rr^\star, \bcolon} (\bar\ttt^+ - \ttt^\star,~ \bar\ttt^- - \ttt^\star) = \zero$ and hence $T_{\rr^\star, \bcolon} (\bar\ttt^+ - \ttt^\star,~ \bar\ttt^- - \ttt^\star) \bar\pp = T_{\rr^\star, \bcolon} (\ttt^+ - \ttt^\star,~ \ttt^- - \ttt^\star) \pp = 0$.
Next consider the element $T_{\rr^\star, \aaa} (\ttt^+ - \ttt^\star,~ \ttt^- - \ttt^\star)$.
\textit{First}, note that for $\aaa = \one_K$ the entry $T_{\rr^\star, \one_K}(\ttt^+ - \ttt^\star,~ \ttt^- - \ttt^\star)$ also equals zero due to the similar reason as stated above for $T_{\rr^\star, \one_K}(\bar\ttt^+ - \ttt^\star,~ \bar\ttt^- - \ttt^\star)$.
\textit{Second}, as assumed in \eqref{eq-qlex-gen}, for the attribute $\sigma(1)$, the vector $\QQ_{(K+1):J, \, \sigma(1)}$ has the smallest lexicographic order among the $K_s$ columns of $\QQ_{(K+1):J, \; 1:K_s}$.
As a result,
\begin{equation}\label{eq-column}
\bo f^{\sigma(1)} := 
	\bigvee_{j>K:\atop q_{j, \sigma(1)} = 0}\, \bq_j = 
(\underbrace{1,\ldots,1, 
\overbrace{0,}^{\text{column } \sigma(1)} 
1,\, \ldots, \,1}_{\text{columns } 1,\, \ldots, \,K_s},
~
\underbrace{* ,~\ldots,~ *}_{\text{columns }K_s + 1,\, \ldots,\, K}),
\end{equation}
where the ``$*$'' above denotes unspecified values each of which can be either one or zero.
The above observation ensures the entry $T_{\rr^\star, \aaa}(\ttt^+ - \ttt^\star,~ \ttt^- - \ttt^\star)$ would equal zero if $\aaa$ lacks any singleton attribute other than attribute $\sigma(1)$. 
This is because $T_{\rr^\star, \aaa}(\ttt^+ - \ttt^\star,~ \ttt^- - \ttt^\star) \neq 0$ only if $\theta_{j, \aaa} \neq \theta^-_j$ and $\theta_{j, \aaa} = \theta^+_j$ for all $j>K$ with $q_{j, \sigma(1)}=0$, which holds only if $\aaa\succeq \bo f^{\sigma(1)}$ defined in \eqref{eq-column}.
Such $\aaa$ hence must possess all the singleton attributes other than the $\sigma(1)$th one.
Namely, we have shown that $T_{\rr^\star, \aaa}(\ttt^+ - \ttt^\star,~ \ttt^- - \ttt^\star) \neq 0$ only if $\aaa$ coincides with $\bo f^{\sigma(1)}$ in \eqref{eq-column} in the first $K_s$ entries.
We continue to show that $T_{\rr^\star, \aaa}(\ttt^+ - \ttt^\star,~ \ttt^- - \ttt^\star)$ will also equal zero if $\aaa$ lacks any non-singleton attribute.
The reason for this is that in Step 1 and Step 2 we have already shown $\theta^-_h = \bar\theta^-_h$ for item $h$ if attribute $h$ is an ancestor, intermediate, or leaf attribute; namely $\theta^-_h = \bar\theta^-_h$ already holds for any non-singleton  $h\in\{K_s+1,\ldots,K\}$.
Therefore, if $\aaa$ lacks any non-singleton attribute $h\in\{K_s+1,\ldots,K\}$, then $\theta_{h,\aaa} = \theta^-_h = \bar\theta^-_h$. As a result, the $T_{\rr^\star, \aaa}(\ttt^+ - \ttt^\star,~ \ttt^- - \ttt^\star)$  in \eqref{eq-s6tra} will contain a zero factor $(\bar\theta^-_h - \bar\theta^-_h)$ and hence must be zero.
Thus far we have shown that $T_{\rr^\star, \aaa}(\ttt^+ - \ttt^\star,~ \ttt^- - \ttt^\star)$ can potentially be nonzero only if $\aaa = \one_K - \ee_{\sigma(1)}$. Note that this particular $\aaa = \one_K - \ee_{\sigma(1)}$ indeed respects the attribute hierarchy $\mce$ and hence belongs to $\mca(\mce)$, because attribute $\sigma(1)$ is a singleton attribute. This means $p_{\one_K - \ee_{\sigma(1)}} > 0$.
Now $T_{\rr^\star, \bcolon} (\bar\ttt^+ - \ttt^\star,~ \bar\ttt^- - \ttt^\star) \bar\pp = T_{\rr^\star, \bcolon} (\ttt^+ - \ttt^\star,~ \ttt^- - \ttt^\star) \pp = 0$ indicates the following equality
\begin{align*}
0 = (\theta^-_{\sigma(1)} - \bar \theta^-_{\sigma(1)})
	\prod_{h\leq K:\atop h\neq \sigma(1)} (\theta^+_h - \bar \theta^-_h)
	\prod_{j>K:\atop q_{j, \sigma(1)} = 0}  (\theta^+_j -  \theta^-_j)
	\prod_{j>K:\atop q_{j, \sigma(1)} = 1}  (\theta^-_j -  \theta^+_j) p_{\one_K - \ee_{\sigma(1)}}.
\end{align*}
Since all the other factors above other than the first one are nonzero, we obtain $\theta^-_{\sigma(1)} = \bar \theta^-_{\sigma(1)}$.
This completes the first step of the induction method of proof.

Now as the inductive hypothesis, suppose $\theta_{\sigma(\ell)} = \bar\theta_{\sigma(\ell)}$ holds for $\ell = 1,\ldots, m-1$, where $m \leq K_s$.
We next show $\theta_{\sigma(m)} = \bar\theta_{\sigma(m)}$. Define
\begin{align}\label{eq-s62}
\ttt^\star = 
	\sum_{h=1}^K \bar \theta^-_h \ee_h 
	+ \sum_{j>K:\, q_{j, \sigma(m)} = 0}  \theta^-_j \ee_j
	+ \sum_{j>K:\, q_{j, \sigma(m)} = 1}  \theta^+_j \ee_m,
\end{align}
and $\rr^\star = \support(\ttt^\star) = \sum_{j=1}^J \ee_j$.
Then similarly as before,
\begin{align}\label{eq-s6tra21}
	T_{\rr^\star, \aaa}(\ttt^+ - \ttt^\star,~ \ttt^- - \ttt^\star) 
	&= 
	\prod_{h=1}^K (\theta_{h,\aaa} - \bar \theta^-_h)
	\prod_{j>K:\atop q_{j, \sigma(m)} = 0}  (\theta_{j,\aaa} -  \theta^-_j)
	\prod_{j>K:\atop q_{j, \sigma(m)} = 1}  (\theta_{j,\aaa} -  \theta^+_j),
	\\ \label{eq-s6tra22}
	T_{\rr^\star, \aaa}(\bar\ttt^+ - \ttt^\star,~ \bar\ttt^- - \ttt^\star) 
	&= 
	\prod_{h=1}^K (\bar\theta_{h,\aaa} - \bar \theta^-_h)
	\prod_{j>K:\atop q_{j, \sigma(m)} = 0}  (\bar\theta_{j,\aaa} - \theta^-_j)
	\prod_{j>K:\atop q_{j, \sigma(m)} = 1}  (\bar\theta_{j,\aaa} - \theta^+_j).
\end{align}
Still, we claim that the row vector $T_{\rr^\star, \bcolon}(\bar\ttt^+ - \ttt^\star,~ \bar\ttt^- - \ttt^\star)$ equals the all-zero vector.  
This is because for any $\aaa\neq\one_K$, due to the first summation term in the definition of $\ttt^\star$ in \eqref{eq-s62}, $T_{\rr^\star, \aaa}(\bar\ttt^+ - \ttt^\star,~ \bar\ttt^- - \ttt^\star)$ contains a zero factor $(\bar \theta^-_h - \bar \theta^-_h)$.
Moreover, for $\aaa = \one_K$ the entry $T_{\rr^\star, \one_K}(\bar\ttt^+ - \ttt^\star,~ \bar\ttt^- - \ttt^\star)$ in \eqref{eq-s6tra22} also equals zero due to the following reason.
The entry in \eqref{eq-s6tra22} will equal zero if $\bar\theta_{j,\one_K} = \theta^+_j$ for some item $j > K$ with $q_{j, \sigma(m)} = 1$.
In Step 4 we proved that $\theta^+_j = \bar\theta^+_j$ for all $j > K$. 
To show $T_{\rr^\star, \one_K}(\bar\ttt^+ - \ttt^\star,~ \bar\ttt^- - \ttt^\star)=0$, we only need that there exists some item $j > K$ that has $q_{j, \sigma(m)} = 1$; for such $j$ there would be $\bar\theta_{j,\one_K} = \bar\theta^+_j = \theta^+_j$. 
This indeed is true because the attribute $\sigma(m)$ is a singleton attribute and there exists some $j > K$ that has $q_{j, \sigma(m)} = 1$ under Condition B$^\star$.
Now we have shown that $T_{\rr^\star, \bcolon} (\bar\ttt^+ - \ttt^\star,~ \bar\ttt^- - \ttt^\star) = \zero$ and hence $T_{\rr^\star, \bcolon} (\bar\ttt^+ - \ttt^\star,~ \bar\ttt^- - \ttt^\star) \bar\pp = T_{\rr^\star, \bcolon} (\ttt^+ - \ttt^\star,~ \ttt^- - \ttt^\star) \pp = 0$.
Next consider the element $T_{\rr^\star, \aaa} (\ttt^+ - \ttt^\star,~ \ttt^- - \ttt^\star)$ in \eqref{eq-s6tra21}.
\textit{First}, for $\aaa = \one_K$ the entry $T_{\rr^\star, \one_K}(\ttt^+ - \ttt^\star,~ \ttt^- - \ttt^\star)$ also equals zero due to the similar reason as stated above for $T_{\rr^\star, \one_K}(\bar\ttt^+ - \ttt^\star,~ \bar\ttt^- - \ttt^\star)=0$.
\textit{Second}, as assumed in \eqref{eq-qlex-gen}, for the attribute $\sigma(m)$, the vector $\QQ_{(K+1):J, \, \sigma(m)}$ has the smallest lexicographic order among the following $K_s - (m-1)$ column vectors: $\QQ_{(K+1):J, \, \sigma(m)}$, $\QQ_{(K+1):J, \, \sigma(m+1)}$, $\ldots$, $\QQ_{(K+1):J, \, \sigma(K_s)}$.
Next we consider the following binary vector $\bo f^{\sigma(m)}$ defined similarly as the previous $\bo f^{\sigma(1)}$,
\begin{equation}\label{eq-column-m}
\bo f^{\sigma(m)} := 
	\bigvee_{j>K:\atop q_{j, \sigma(m)} = 0}\, \bq_j = 
(\underbrace{*,\ldots,*, 
\overbrace{0,}^{\text{column } \sigma(m)} 
*,\, \ldots, \,*}_{\text{columns } 1,\, \ldots, \,K_s},
~
\underbrace{* ,~\ldots,~ *}_{\text{columns }K_s + 1,\, \ldots,\, K}).
\end{equation}
An important observation is that for any $\ell=m+1,\ldots,K_s$, the $\sigma(\ell)$th entry of the vector $\bo f^{\sigma(m)} = (f^{\sigma(m)}_1, \ldots, f^{\sigma(m)}_K)$ equals one; namely $f^{\sigma(m)}_{\sigma(\ell)} = 1$ for $m+1 \leq \ell < K_s$.
This is due to the assumption of the increasing lexicographic order among the vectors $\QQ_{(K+1):J, \, \sigma(m)}$, $\QQ_{(K+1):J, \, \sigma(m+1)}$, $\ldots$, $\QQ_{(K+1):J, \, \sigma(K_s)}$ previously specified in \eqref{eq-qlex-gen}.
Based on this observation, the entry $T_{\rr^\star, \aaa}(\ttt^+ - \ttt^\star,~ \ttt^- - \ttt^\star)$ will equal zero if $\aaa$ lacks any attribute $\sigma(\ell)$ for $\ell=m+1,\ldots,K_s$; because in that case, $\aaa\nsucceq \bo f^{\sigma(m)}$ so $\theta_{j,\aaa} = \theta^-_j$ would hold for some item $j$ with $q_{j,\sigma(m)} = 0$ and $T_{\rr^\star, \aaa}(\ttt^+ - \ttt^\star,~ \ttt^- - \ttt^\star)$ would contain a zero factor $(\theta^-_j - \theta^-_j)$.
Next consider any attribute pattern $\aaa$ lacks some attribute $\sigma(\ell)$ for $\ell=1,\ldots,m-1$. For such $\aaa$, according to our inductive hypothesis, for the corresponding item $\sigma(\ell)$ there is $\theta_{\sigma(\ell), \aaa} = \theta_{\sigma(\ell), \aaa} = \bar \theta^-_{\sigma(\ell), \aaa}$. 
This means for such $\aaa$ the entry $T_{\rr^\star, \aaa}(\ttt^+ - \ttt^\star,~ \ttt^- - \ttt^\star)$ contains a zero factor $(\bar \theta^-_{\sigma(\ell), \aaa} - \bar \theta^-_{\sigma(\ell), \aaa})$ and hence must be zero.
To summarize, now we have shown that $T_{\rr^\star, \aaa}(\ttt^+ - \ttt^\star,~ \ttt^- - \ttt^\star)$ is potentially nonzero only if $\aaa\neq\one_K$ and $\aaa$ possess all the singleton attributes other than attribute $\sigma(\ell)$.

We continue to show that $T(\ttt^+ - \ttt^\star,~ \ttt^- - \ttt^\star)$ will also equal zero if $\aaa$ lacks any non-singleton attribute.
The reason is that Step 1 and Step 2 have already shown $\theta^-_h = \bar\theta^-_h$ for item $h$ if attribute $h$ is an ancestor, intermediate, or leaf attribute; namely $\theta^-_h = \bar\theta^-_h$ already holds for any non-singleton  $h\in\{K_s+1,\ldots,K\}$.
Therefore, if $\aaa$ lacks any non-singleton attribute $h\in\{K_s+1,\ldots,K\}$, then $\theta_{h,\aaa} = \theta^-_h = \bar\theta^-_h$. As a result, the $T_{\rr^\star, \aaa}(\ttt^+ - \ttt^\star,~ \ttt^- - \ttt^\star)$  in \eqref{eq-s6tra22} will contain a zero factor $(\bar\theta^-_h - \bar\theta^-_h)$ and hence must be zero.
Thus far we have shown that $T_{\rr^\star, \aaa}(\ttt^+ - \ttt^\star,~ \ttt^- - \ttt^\star)$ can potentially be nonzero only if $\aaa = \one_K - \ee_{\sigma(m)}$. This particular $\aaa = \one_K - \ee_{\sigma(m)}$ indeed respects the attribute hierarchy $\mce$ and hence belongs to $\mca(\mce)$, because attribute $\sigma(m)$ is a singleton attribute. This means $p_{\one_K - \ee_{\sigma(m)}} > 0$.
Now $T_{\rr^\star, \bcolon} (\bar\ttt^+ - \ttt^\star,~ \bar\ttt^- - \ttt^\star) \bar\pp = T_{\rr^\star, \bcolon} (\ttt^+ - \ttt^\star,~ \ttt^- - \ttt^\star) \pp = 0$ indicates the following equality
\begin{align*}
0 = (\theta^-_{\sigma(m)} - \bar \theta^-_{\sigma(m)})
	\prod_{h\leq K:\atop h\neq \sigma(m)} (\theta^+_h - \bar \theta^-_h)
	\prod_{j>K:\atop q_{j, \sigma(m)} = 0}  (\theta^+_j -  \theta^-_j)
	\prod_{j>K:\atop q_{j, \sigma(m)} = 1}  (\theta^-_j -  \theta^+_j) p_{\one_K - \ee_{\sigma(m)}}.
\end{align*}
Since all the other factors above other than the first one are nonzero, we obtain $\theta^-_{\sigma(m)} = \bar \theta^-_{\sigma(m)}$ for the currently considered attribute $\sigma(m)$.
Therefore, using the induction method we have shown that $\theta^-_{\sigma(k)} = \bar \theta^-_{\sigma(k)}$ holds for every $k\in\{1,\ldots,K_s\}$; equivalently, $\theta^-_{k} = \bar \theta^-_{k}$ holds for every $k\in\{1,\ldots,K_s\}$. This completes the proof of Step 6.

\medskip
\noindent\textbf{Step 7.~}
In this step we prove $\theta^+_k = \bar \theta^+_k$ if the $k$th attribute is a singleton attribute.
By Condition B$^\star$, there exist two different items $j_1, j_2>K$ such that $q^{\sparse}_{j_1,k} = q^{\sparse}_{j_2,k} =1$ (so $q_{j_1,k} = q_{j_2,k} =1$). Define
\begin{equation}
    \label{eq-step7}
    \ttt^\star = \sum_{h\leq K:\, h\neq k}\bar \theta^-_h\ee_h
    + \bar \theta^-_{j_1}\ee_{j_1}
    + \theta^-_{j_2}\ee_{j_2},
\end{equation}
then $T_{\rr^\star,\aaa}(\bar\ttt^+ - \ttt^\star, ~ \bar\ttt^- - \ttt^\star) \neq 0$ if and only if $\aaa=\one_K$ due to the first two terms in the above definition of $\ttt^\star$. This is because considering the second term $\bar\theta^-_{j_1}\ee_{j_1}$, there is $T_{\rr^\star,\aaa}(\bar\ttt^+ - \ttt^\star, ~ \bar\ttt^- - \ttt^\star) \neq 0$ only if $\aaa \succeq \bq_{j_1}$ and $q_{j_1, k} = 1$.
Since in the previous steps we have already shown $\bar \theta^-_h = \theta^-_h$ for any $h=1,\ldots,K$,
there also is $T_{\rr^\star,\aaa}(\ttt^+ - \ttt^\star, ~\ttt^- - \ttt^\star) \neq 0$ if and only if $\aaa=\one_K$.
This is because considering the term $\theta^-_{j_2}\ee_{j_2}$ in the definition of $\ttt^\star$ in \eqref{eq-step7}, we have $T_{\rr^\star,\aaa}(\ttt^+ - \ttt^\star, ~ \ttt^- - \ttt^\star) \neq 0$ only if $\aaa \succeq \bq_{j_2}$ and $q_{j_2, k} = 1$.
Specifically, 
$$
T_{\rr^\star,\one_K}(\ttt^+ - \ttt^\star, ~ \ttt^- - \ttt^\star) = \prod_{h\leq K:\, h\neq k}(\theta^+_h-\bar \theta^-_h)
(\theta^+_{j_1} - \bar \theta^-_{j_1})
(\theta^+_{j_2} - \theta^-_{j_2}) \neq 0,
$$
and there is $T_{\rr^\star,\bcolon}(\bar\ttt^+ - \ttt^\star, ~ \bar\ttt^- - \ttt^\star) \bar\pp  = T_{\rr^\star,\bcolon}(\ttt^+ - \ttt^\star, ~ \ttt^- - \ttt^\star)\pp = T_{\rr^\star,\one_K}p_{\one_K}\neq 0$.
Therefore,
\begin{equation*}
    \bar \theta^+_k = 
    \frac{ T_{\rr^\star + \ee_k,\bcolon}(\bar\ttt^+ - \ttt^\star, ~ \bar\ttt^- - \ttt^\star) \bar\pp}{ T_{\rr^\star,\bcolon}(\bar\ttt^+ - \ttt^\star, ~ \bar\ttt^- - \ttt^\star) \bar\pp}
    =
    \frac{ T_{\rr^\star + \ee_k,\bcolon}(\ttt^+ - \ttt^\star, ~ \ttt^- - \ttt^\star) \pp}{ T_{\rr^\star,\bcolon}(\ttt^+ - \ttt^\star, ~ \ttt^- - \ttt^\star) \pp}
    =\theta^+_k.
\end{equation*}

\medskip
\noindent\textbf{Step 8.~}
In this step we prove $\theta^-_j = \bar\theta^-_j$ for all $j\in\{K+1,\ldots,J\}$.
This step proceeds similarly as Step 5 in the proof of Corollary \ref{thm-connected}.
For notational simplicity, we next write the row vector $\bar {\mathbf T}_{\rr^\star,\bcolon}(\bar\ttt^+ - \ttt^\star, ~\bar\ttt^- - \ttt^\star)$  as $\bar {\mathbf T}_{\rr^\star,\bcolon}$, and ${\mathbf T}_{\rr^\star,\bcolon}(\ttt^+ - \ttt^\star, ~\ttt^- - \ttt^\star)$  as ${\mathbf T}_{\rr^\star,\bcolon}$.
Define $\ttt^\star = \sum_{k=1}^K \theta_k^+ \ee_k$ and $\rr^\star = \sum_{k=1}^K \ee_k$.
Note that $\bar{\mathbf T}_{\rr^\star,\bcolon}\bar \pp = {\mathbf T}_{\rr^\star,\bcolon} \pp = \prod_{k=1}^K (\theta_k^- - \theta_k^+) p_{\zero_K} \neq 0$, so there is
\begin{align*}
	\frac{ \bar\theta_j^- \prod_{k=1}^K (\theta_k^- - \theta_k^+) \bar p_{\zero_K}}
	     {\prod_{k=1}^K (\theta_k^- - \theta_k^+) \bar p_{\zero_K}}
	     =
	 \frac{\bar{\mathbf T}_{\rr^\star + \ee_j,\bcolon}\bar \pp}{\bar{\mathbf T}_{\rr^\star,\bcolon}\bar \pp}
     =
     \frac{{\mathbf T}_{\rr^\star + \ee_j,\bcolon}\bar \pp}{{\mathbf T}_{\rr^\star,\bcolon}\bar \pp}
     =
     \frac{ \theta_j^- \prod_{k=1}^K (\theta_k^- - \theta_k^+) p_{\zero_K}}
	     {\prod_{k=1}^K (\theta_k^- - \theta_k^+) p_{\zero_K}},
\end{align*}
which gives $\bar\theta_j^- = \theta_j^-$.

\medskip
\noindent\textbf{Step 9.~}
Recalling the roadmap of the proof presented in Table \ref{tab-proof}, the previous eight steps have already shown the identifiability of the item parameters $\ttt^+$ and $\ttt^-$. 
Now the conclusion of Lemma \ref{lem-prop} directly applies and we also obtain the identifiability of the proportion parameters $\pp$ and the attribute hierarchy $\mce$.
This completes the proof of Theorem \ref{thm-general}.
\end{proof}

\color{black}


\section{Proofs of Results in Section 5}\label{pf-multiple}

\begin{proof}[Proof of Proposition \ref{prop-dino}]
We first prove part (a) of the proposition.
Under an arbitrary attribute hierarchy $\mce$, by the definition of the set of allowable attribute patterns $\mca(\mce)$, for any $\aaa\in\{0,1\}^K$, $\aaa\in\mca(\mce)$ if and only if the following statement S1 holds, 
\begin{itemize}
	\item[S1.] \textit{if $k\to\ell$ under $\mce$, then $\alpha_k = 0$ implies $\alpha_\ell = 0$ for $\aaa$.}
\end{itemize}
Next consider another attribute pattern $\aaa' = \one_K - \aaa$, we have $\aaa' \in \mca(\mce^{\rev})$ holds if and only if the following statement S2 holds,
\begin{itemize}
	\item[S2.] \textit{if $k\to\ell$ under $\mce^{\rev}$, then $\alpha'_k = 0$ implies $\alpha'_\ell = 0$ for $\aaa'$,}
\end{itemize}
which is equivalent to S3 below because of $\aaa' = \one_K - \aaa$,
\begin{itemize}
	\item[S3.] \textit{if $k\to\ell$ under $\mce^{\rev}$, then $\alpha_k = 1$ implies $\alpha_\ell = 1$ for $\aaa$.}
\end{itemize}
Next, since each $\alpha_k$ can only take two possible values, zero or one, the statement S3 is equivalent to S4 below,
\begin{itemize}
	\item[S4.] \textit{if $k\to\ell$ under $\mce^{\rev}$, then $\alpha_\ell = 0$ implies $\alpha_k = 0$ for $\aaa$.}
\end{itemize}
Finally, according to the definition of the reversed hierarchy $\mce^{\rev}$ in \eqref{eq-reverse} in the proposition, there is $k\to\ell$ under $\mce^{\rev}$ if and only if $\ell\to k$ under $\mce$. So statement S4 is further equivalent to S5 below,
\begin{itemize}
	\item[S5.] \textit{if $\ell\to k$ under $\mce$, then $\alpha_\ell = 0$ implies $\alpha_k = 0$ for $\aaa$.}
\end{itemize}
This statement S5 holds if and only if $\aaa \in\mca(\mce)$. 
Now that all the above four statements S2, S3, S4, and S5 are equivalent, we obtain that $\aaa' = \one_K - \aaa \in \mca(\mce^{\rev})$ holds if and only if $\aaa \in\mca(\mce)$. 
This proves part (a) of the proposition.

\medskip
We next prove part (b) of the proposition.
Based on the relationship that $\Gamma^{\myor}_{\bq_j, \aaa} = 1 - \Gamma^{\myand}_{\bq_j, \one_K - \aaa}$, for the same set of per-item Bernoulli parameters $\ttt^+$ and $\ttt^-$, there is
	\begin{align}\notag
		&~ \mathbb P(\RR = \rr \mid \text{DINO}, \QQ,\mce,\ttt^+,\ttt^-, \pp)
		\\ \notag
		= &~
		\sum_{\aaa\in\mca(\mce)} p_{\aaa} 
		 \prod_{j=1}^J 
	     [\Gamma^{\myor}_{\bq_j,\aaa}\theta_{j}^+ + 
	     (1-\Gamma^{\myor}_{\bq_j,\aaa})\theta_{j}^-]^{r_j}
	    \\ \notag
	    & \qquad \qquad \qquad  
	    \times 
	     [\Gamma^{\myor}_{\bq_j,\aaa}(1-\theta_{j}^+) +
	     (1-\Gamma^{\myor}_{\bq_j,\aaa})(1-\theta_{j}^-)]^{1-r_j}
	    \\ \notag
		= &~
		\sum_{\aaa\in\mca(\mce)} p_{\aaa} 
		 \prod_{j=1}^J 
	     [(1-\Gamma^{\myand}_{\bq_j,\one-\aaa})\theta_{j}^+ + 
	     \Gamma^{\myand}_{\bq_j,\one-\aaa}\theta_{j}^-]^{r_j} 
	    \\ \notag
	    &\qquad\qquad\qquad  \times 
	     [(1-\Gamma^{\myand}_{\bq_j,\one-\aaa})(1-\theta_{j}^+) +
	     \Gamma^{\myand}_{\bq_j,\one-\aaa}(1-\theta_{j}^-)]^{1-r_j}
	    \\ \notag
	    &~ (\text{let }\aaa' = \one-\aaa)
	    \\ \notag
		= &~
		\sum_{\one-\aaa' \in \mca(\mce)} p_{\one-\aaa'} 
		 \prod_{j=1}^J 
	     [(1-\Gamma^{\myand}_{\bq_j,\aaa'})\theta_{j}^+ + 
	     \Gamma^{\myand}_{\bq_j,\aaa'}\theta_{j}^-]^{r_j} 
	    \\ \notag
	    &\qquad\qquad\qquad 
	    \times 
	     [(1-\Gamma^{\myand}_{\bq_j,\aaa'})(1-\theta_{j}^+) +
	     \Gamma^{\myand}_{\bq_j,\aaa'}(1-\theta_{j}^-)]^{1-r_j} 
	    \\ \label{eq-dinatodino}
		\stackrel{(\star)}{=} &~
		\sum_{\aaa' \in \mca(\mce^{\rev})} p_{\one - \aaa'} 
		 \prod_{j=1}^J 
	     [(1-\Gamma^{\myand}_{\bq_j,\aaa'})\theta_{j}^+ + 
	     \Gamma^{\myand}_{\bq_j,\aaa'}\theta_{j}^-]^{r_j} 
	    \\ \notag
	    &\qquad\qquad\qquad  \times 
	     [(1-\Gamma^{\myand}_{\bq_j,\aaa'})(1-\theta_{j}^+) +
	     \Gamma^{\myand}_{\bq_j,\aaa'}(1-\theta_{j}^-)]^{1-r_j}.
	\end{align}
The last equality ``$(\star)$'' above follows from part (a) of the proposition.
We introduce a new notation $\widetilde{\pp} = (\widetilde p_{\aaa'}:\, \aaa'\in\mca(\mce^{\rev}))$ with the following definition,
\begin{equation}\label{eq-ptilde}
\widetilde p_{\aaa'} = p_{\one - \aaa'} ~\text{ for any }~ \aaa' \in \mca(\mce^{\rev}).
\end{equation}
Then the above display in \eqref{eq-dinatodino} can be further written as
\begin{align*}
    &~\mathbb P(\RR = \rr \mid \text{DINO},~ \QQ,~ \mce,~ \ttt^+,~ \ttt^-,~ \pp)
    \\ \notag
	= &~
	\sum_{\aaa' \in \mca(\mce^{\rev})} \widetilde p_{\aaa'} 
		 \prod_{j=1}^J 
		 [ \Gamma^{\myand}_{\bq_j,\aaa'}(1-\theta_{j}^-)
		 + (1-\Gamma^{\myand}_{\bq_j,\aaa'})(1-\theta_{j}^+)
	      ]^{1-r_j}
	\\ \notag
	& \qquad\qquad\qquad\quad \times 
	     [\Gamma^{\myand}_{\bq_j,\aaa'}\theta_{j}^- 
	     + 
	     (1-\Gamma^{\myand}_{\bq_j,\aaa'})\theta_{j}^+ ]^{r_j}
	     \\ \notag
	= &~
    \mathbb P(\RR = \one - \rr \mid \text{DINA},~ \QQ,~ \mce^{\text{new}} = \mce^{\rev},~ 
        \ttt^{+,\text{new}} = \one - \ttt^-,~
        \ttt^{-,\text{new}} = \one - \ttt^+,~ \pp^{\text{new}} = \widetilde{\pp}),	     
\end{align*}
where the equality on the last line above just follows from the definition of the DINA-based HLAM.
Based on the above equality, given an arbitrary $\QQ$-matrix, the quantities $(\mce,~ \ttt^+,~ \ttt^-,~ \pp)$ under a DINO-based HLAM are identifiable if and only if the quantities $(\mce^{\text{new}} = \mce^{\rev},~ 
        \ttt^{+,\text{new}} = \one - \ttt^-,~
        \ttt^{-,\text{new}} = \one - \ttt^+,~ \pp^{\text{new}} = \widetilde{\pp})$ under a DINA-based HLAM are identifiable.
Note that the continuous parameters $\ttt^{+,\text{new}}$, $\ttt^{-,\text{new}}$, and $\pp^{\text{new}}$ are just explicit transformations of the original parameters. 
Therefore, the original parameters $(\ttt^+,~ \ttt^-,~ \pp)$ and the original attribute hierarchy $\mce$ under a DINO-based HLAM are identifiable if and only if the new parameters and the reversed hierarchy $\mce^{\rev}$ under a DINA-based HLAM are identifiable.
This completes the proof of the proposition.
\end{proof}

\begin{proof}[Proof of Corollary \ref{cor-dino}]
Proposition \ref{prop-dino} implies that $\mce^{\rev}$ and $(\ttt^+, \ttt^-, \pp)$ are identifiable if the following three conditions hold: 
\begin{itemize}\setlength\itemsep{0.5em}
	\item[(a)] the $\mce^{\rev}$-densified matrix $\mc D^{\mce^{\rev}}(\QQ)$ contains a submatrix which is the reachability matrix under the reversed hierarchy $\mce^{\rev}$ (denoted by $\EE(\mce^{\rev})$); 
	\item[(b)] in the $\mce^{\rev}$-sparsified matrix $\mc S^{\mce^{\rev}}(\QQ)$,  any intermediate attribute is each measured by $\geq 1$ items, any ancestor attribute and any leaf attribute is each measured by $\geq 2$ items, and any singleton attribute is each measured by $\geq 3$ items; 
	\item[(c)] for any two singleton attributes $\alpha_k$ and $\alpha_\ell$ under the reversed hierarchy $\mce^{\rev}$, there is $\QQ_{(K+1):J,\, k} \neq \QQ_{(K+1):J,\, \ell}$ (assuming without loss of generality that $\QQ_{1:K,\, \bcolon}$ equals the $\EE(\mce^{\rev})$).
\end{itemize}
The above (a) is exactly Condition A$^\star$ in the theorem.
Note that for the same set of $K$ attributes, any ancestor attribute in $\mce$ becomes a leaf attribute in $\mce^{\rev}$,  any leaf attribute in $\mce_0$ becomes ancestor attribute in $\mce^{\rev}$, and any intermediate attribute or singleton attribute remain the same type when $\mce$ is reversed to be $\mce^{\rev}$.
It is not hard to see that the $\mce$-sparsified matrix $\mc S^{\mce}(\QQ)$ satisfies the requirement in Condition B$^\star$ if and only if the $\mce^{\rev}$-sparsified matrix $\mc S^{\mce^{\rev}}(\QQ)$ satisfies the requirement in Condition B$^\star$. So the above bullet point (b) on $\mce^{\rev}$ is equivalent to the original Condition B$^\star$ which is about $\mce$.
Finally, the above bullet point (c) is a condition on the singleton attributes under the reversed hierarchy $\mce^{\rev}$, which holds if and only if the same condition holds for all the singleton attributes under the original hierarchy $\mce$, i.e., Condition C$^\star$.
This proves the corollary.
\end{proof}

\begin{proof}[Proof of Theorem \ref{thm-mult}]
Because the main-effect-based HLAMs have a different algebraic structure from the two-parameter HLAMs, we use a different proof technique to establish identifiability.
We need a useful concept, the Kruskal rank of matrix.
A matrix $\mathbf M$'s Kruskal rank is the maximal number $r$ such that every $r$ columns of it are linear independent; denote the Kruskal rank of $\mathbf M$ by $\rank_{K}(\mathbf M)$. 
Denote by ``$\odot$'' the Khatri-Rao product (i.e., the column-wise Kronecker product) of matrices.
That is, for two matrices $\mathbf A = (\bo a_1 \mid \bo a_2 \mid \cdots \mid \bo a_r)$ and $\mathbf B = (\bo b_1 \mid \bo b_2 \mid \cdots \mid \bo b_r)$ that both contain $r$ columns, there is $\mathbf A \odot \mathbf B = (\bo a_1 \otimes \bo b_1 \mid \bo a_2 \otimes \bo b_2 \mid \cdots \mid \bo a_r \otimes \bo b_r)$ which also contains $r$ columns.
The following lemma restates a useful variation of the Kruskal's theorem on three-way tensor decomposition.
More discussion on how this theorem can be invoked to show identifiability for a variety of latent variable models can be found in \cite{allman2009}.

\begin{lemma}[Kruskal's Theorem \cite{kruskal1977}]\label{lem-kruskal}
	Suppose $M_1, M_2, M_3$ are three matrices each of size $a_i\times r$, $N_1, N_2, N_3$ are three matrices each with $r$ columns, and they satisfy $\odot_{i=1}^3 M_i \cdot\one = \odot_{i=1}^3 N_i \cdot\one$. 
	If $\rank_{K}( M_1) + \rank_{K}( M_2) + \rank_{K}( M_3) \geq 2r + 2$, then there exists a $r\times r$ permutation matrix $P$ and three $r\times r$ invertible diagonal matrices $D_i$ such that $D_1 D_2 D_3 =I_r$ and $N_i = M_i D_i P$.
\end{lemma}

We also need a technical lemma established in \cite{partial} for the so-called restricted latent class models with binary responses.
The following result is adapted from Lemma A.1 in \cite{partial} into the current context of HLAMs.
Generally speaking, the following lemma is useful because an HLAM can be viewed as a restricted latent class model with the matrix $\Gamma(\QQ, \mce)$ imposing the following equality and inequality constraints
\begin{align}\label{eq-rlcm}
	\theta_{j, \aaa} = \theta_{j, \aaa'}~~\text{if}~~ \Gamma_{\bq_j, \aaa} = \Gamma_{\bq_j, \aaa'} = 1;
	\quad
	\text{and}~~ \theta_{j, \aaa} \neq \theta_{j, \aaa'} ~~\text{if}~~ \Gamma_{\bq_j, \aaa} \neq \Gamma_{\bq_j, \aaa'}.
\end{align}
For any subset of items $S \subseteq [J]$, define a $2^{|S|} \times |\mca(\mce)|$ matrix $T(\QQ, \bo\Theta_{S})$ with rows indexed by response pattern $\bo r\in\{0,1\}^{|S|}$ and columns by allowable attribute patterns $\aaa\in\mca(\mce)$.
Similar to the $T$-matrix defined in \eqref{eq-def-T} in the proof of Theorem \ref{thm-main} for the two-parameter HLAMs, the $(\rr, \aaa)$th entry of this $T(\QQ, \bo\Theta_{S})$ is defined as
$T_{\rr, \aaa}(\QQ, \bo\Theta_{S}) = \prod_{j\in S}\theta_{j,\aaa}^{r_j}$.

\begin{lemma}[Adapted from Lemma A.1 in \citep{partial}]\label{lem-rank}
Consider a main-effect-based HLAM with structural matrix $\QQ$, attribute hierarchy $\mce$, and $J$ items with binary responses.
Under the equality constraint and inequality constraint in \eqref{eq-rlcm}, 
if the matrix $\Gamma(\QQ_{S,\bcolon}\, , \, \mce)$ contains distinct column vectors, then $T(\QQ, \bo\Theta_{S})$ has full column rank.
\end{lemma}

Under Condition E in the theorem, we apply Lemma \ref{lem-rank} to obtain that $\Gamma(\QQ_{S_1,\bcolon}\, , \, \mce)$ and $\Gamma(\QQ_{S_2,\bcolon}\, , \, \mce)$ each has full column rank $|\mca(\mce)|$, so $\Gamma(\QQ_{S_1,\bcolon}\, , \, \mce)$ and $\Gamma(\QQ_{S_2,\bcolon}\, , \, \mce)$ each also has Kruskal rank equal to $|\mca(\mce)|$.
Denote $S_3 = [J] \setminus(S_1\cup S_2)$.
We next show that under Condition F in the theorem, $T(\QQ, \bo\Theta_{S_3})$ has Kruskal rank at least two.
Consider arbitrary two different allowable attribute patterns $\aaa \neq \aaa' \in\mca(\mce)$, Condition F states that there exists $j\in S_3$ such that $\Gamma_{\bq_j,\aaa} \neq \Gamma_{\bq_j,\aaa'}$, which further ensures $\theta_{j,\aaa} \neq \theta_{j,\aaa'}$ under the natural inequality constraint in \eqref{eq-rlcm}.
We claim that columns $T_{\bcolon, \aaa}(\QQ, \bo\Theta_{S_3})$ and $T_{\bcolon, \aaa'}(\QQ, \bo\Theta_{S_3})$ are not equal, nor is one a scalar multiple of the other.
Such a statement, if true, would prove the earlier claim that $T(\QQ, \bo\Theta_{S_3})$ has Kruskal rank at least two.
Specifically, consider response pattern $\rr^0 = \zero_{|S_3|}$, then $T_{\rr^0, \aaa}(\QQ, \bo\Theta_{S_3}) = T_{\rr^0, \aaa'}(\QQ, \bo\Theta_{S_3}) = 1$; also consider response pattern $\rr^1 = \ee_{j}$ for the item $j$ with $\Gamma_{\bq_j,\aaa} \neq \Gamma_{\bq_j,\aaa'}$, then $T_{\rr^0, \aaa}(\QQ, \bo\Theta_{S_3}) = \theta_{j,\aaa} \neq   \theta_{j,\aaa'} = T_{\rr^0, \aaa'}(\QQ, \bo\Theta_{S_3})$.
This shows there do not exist any nonzero scalars $c_1, c_2 \neq 0$ such that $c_1\cdot T_{\bcolon, \aaa}(\QQ, \bo\Theta_{S_3}) - c_2 \cdot  T_{\bcolon, \aaa'}(\QQ, \bo\Theta_{S_3}) = 0$, therefore $\rank_K(T(\QQ, \bo\Theta_{S_3})) \geq 2$.
Now we can apply the Kruskal's theorem in Lemma \ref{lem-kruskal} to obtain that 
\begin{align*}
	T(\bar\QQ, \bar{\bo\Theta}_{S_i})
	= 
	T(\QQ, \bo\Theta_{S_i}) D_i P,\quad i=1,2,3,
\end{align*}
for diagonal matrices $D_1, D_2, D_3$ and permutation matrix $P$.
Indeed, we next show that each $D_i$ equals the identity matrix. This is because $T_{\zero, \aaa}(\QQ, \bo\Theta_{S_i}) = T_{\zero, \aaa'}(\QQ, \bo\Theta_{S_i}) = 1$ for each $\aaa$ and each $i$ always holds by the definition of the T-matrix.
Therefore, we have $T(\bar\QQ, \bar{\bo\Theta}_{S_i})$ equals $T(\QQ, \bo\Theta_{S_i})$ up to a permutation $P$ of the columns (label swapping of the allowable latent patterns).
This implies  $(\bo\Theta, \pp)$ and $\Gamma(\QQ, \mce)$  are identifiable. 

Next consider the case where in addition to the above three conditions, the $\QQ$ is known in part to contain an identity submatrix $I_K$.
Since the definition of the current $\Gamma(\QQ, \mce)$ is the same as that for the DINA-based HLAM, a similar argument as the proof of Proposition \ref{prop} leads to that $\mce$ is also identifiable.
\end{proof}

\color{black}

\section{Proofs of Statement in Examples 7-8 and  Technical Lemmas}\label{pf-example}

\begin{proof}[Proof of the Identifiability Statement in Examples \ref{exp-num-attr}  and \ref{exp-diff}]
	We prove that under the hierarchy $\mce=\{2\to 3\to 4\}$ among $K=4$ attributes, the following $\QQ$-matrix gives an identifiable model.
	\begin{align}\label{eq-diff2}
		\QQ=\begin{pmatrix}
			1 & 0 & 0 & 0 \\
			0 & 1 & 0 & 0 \\
			0 & 1 & 1 & 0 \\
			0 & 1 & 1 & 1 \\
			\hline
			1 & 0 & 0 & 0 \\
			1 & 0 & 0 & 0 \\
			0 & 1 & 0 & 0 \\
			0 & 1 & 1 & 1 \\
		\end{pmatrix}.
	\end{align}
	We only briefly outline the proof procedures. The concrete steps follow similarly as the proof of Proposition  \ref{prop-two-e}.
	First define 
	$$\ttt^\star=\theta^-_5\ee_5 + \bar \theta^-_6 \ee_6,$$ 
	then 
	$$
	\frac{\bar T_{\rr^\star+\ee_1}(\bar\ttt^+-\ttt^\star,\bar\ttt^--\ttt^\star)\bar\pp}{\bar T_{\rr^\star}(\bar \ttt^+-\ttt^\star,\bar \ttt^--\ttt^\star)\bar\pp} = 
	\frac{T_{\rr^\star+\ee_1}(\ttt^+-\ttt^\star,\ttt^--\ttt^\star)\pp}{ T_{\rr^\star}(\ttt^+-\ttt^\star,\ttt^--\ttt^\star)\pp}
	$$
	and it yields $\theta^+_1=\bar \theta^+_1$. By symmetry, we also obtain $\theta^+_5=\bar \theta^+_5$ and $\theta^+_6=\bar \theta^+_6$.
	
	Next define $$\ttt^\star=\bar\theta_2^- + \theta_7^+ \ee_7,$$ then $\bar T_{\rr^\star,\aaa}=0$ for all $\aaa\in\{0,1\}^K$. Therefore 
	$$\bar T_{\rr^\star,\cdot}\bar\pp =  T_{\rr^\star,\cdot}\pp = 0 = \left(\sum_{\aaa\in\mca(\mce),\, \alpha_2=0} p_{\aaa}\right)(\theta_2^- - \bar \theta_2^-)(\theta_7^- - \theta_7^+),
	$$
	which gives $\theta_2^- = \bar \theta_2^-$. By symmetry we also obtain $\theta_7^- = \bar \theta_7^-$.
	
	Next define $$\ttt^\star=\theta^+_1\ee_1+\theta^-_4\ee_4+\bar \theta^-_8\ee_8,$$ then 
	$$\frac{\bar T_{\rr^\star+\ee_2}(\bar\ttt^+-\ttt^\star,\bar\ttt^--\ttt^\star)\bar\pp}{\bar T_{\rr^\star}(\bar \ttt^+-\ttt^\star,\bar \ttt^--\ttt^\star)\bar\pp} = 
	\frac{T_{\rr^\star+\ee_2}(\ttt^+-\ttt^\star,\ttt^--\ttt^\star)\pp}{ T_{\rr^\star}(\ttt^+-\ttt^\star,\ttt^--\ttt^\star)\pp}$$
	gives $\theta^+_2=\bar \theta^+_2$; similarly $\theta^+_3=\bar \theta^+_3$ and $\theta^+_7=\bar \theta^+_7$.
	
	Next for $j=3$, 4, or 8, define $$\ttt^\star=\bar \theta^-_2\ee_2 + \theta^+_j\ee_7,$$ then Eq.~$\eqref{eq-algebra}=0$ and it yields $\theta^-_3=\bar \theta^-_3$, $\theta^-_4=\bar \theta^-_4$ and $\theta^-_8=\bar \theta^-_8$. 
	Define $\ttt^\star=\theta^-_4\ee_4$ (or $\ttt^\star=\theta^-_8\ee_8$) gives $\theta^+_4=\bar \theta^+_4$ (or $\theta^+_8=\bar \theta^+_8$).
	
	Next define $$\ttt^\star=\theta^+_5\ee_5 + \theta^+_6\ee_6,$$ then 
	$$\frac{\bar T_{\rr^\star+\ee_j}\bar\pp}{\bar T_{\rr^\star}\bar\pp} = \frac{
	T_{\rr^\star+\ee_j}\pp} {T_{\rr^\star}\pp},$$
	 for $j=1,5,6$, gives $\bar \theta^-_1=\theta^-_1$,  $\bar \theta^-_5=\theta^-_5$, and $\bar \theta^-_6=\theta^-_6$. 
	Now that we have obtained $\ttt^+=\bar \ttt^+$ and $\ttt^-=\bar\ttt^-$.
	Finally, proceeding similarly as that in the end of Proposition  \ref{prop-two-e} gives $\bar\pp=\pp$ and $\bar\mce=\mce$. 
	Identifiability is established and the proof is complete.
\end{proof}

\begin{proof}[{\bf Proof of Lemma \ref{lem-basic}}]
	
	For part (a), 
	if there exists some $\bq_k\succeq\bq_h$ for some $k\not\to h$, then the sparsifying operation would not set $q_{k,h}$ to zero, and the first rows of $\mathcal S^{\mce}(\QQ)$ would not be an $I_K$. So  $\bq_k\succeq\bq_h$ happens only if $k\to h$.
	For part (b), 
	    if $k\to h$, then  under the densifying operation there is $\QQ^{\star,C}_{\bcolon,k}\succeq \QQ^{\star,C}_{\bcolon,h}$. Since Condition $C$ states that $\QQ^{\star,C}$ has distinct columns, there must be  $\QQ^{\star,C}_{\bcolon,k}\succ \QQ^{\star,C}_{\bcolon,h}$.
\end{proof}

\begin{proof}[{\bf Proof of Lemma \ref{lem-order}}]
We use proof by  contradiction. Assume there exists attribute $h\in[K]$ and a set of attributes $\mathcal J\subseteq[K]\setminus\{h\}$, such that $\vee_{j\in\mathcal J}\,\bar\bq_j\nsucceq\bar\bq_h$; and that there exists  $S\subseteq\{K+1,\ldots,J\}$ such that $\max_{m\in S}q^{\dense}_{m,h}=0$ and $\max_{m\in S} q^{\dense}_{m,j}=1$. 
Define
$$
\ttt^\star = \bar \theta^+_h\ee_{h} + \sum_{j\in\mt J}\bar \theta^-_j\ee_{j} 
+ \sum_{m=K+1}^J \theta^-_{m}\ee_m,\quad \rr^\star = \ee_{h} + \sum_{j\in\mt J}\ee_{j} + \sum_{m=K+1}^J \ee_m,
$$
and we claim that $T_{\rr^\star,\bcolon}(\bar \QQ,\bar\ttt^+-\ttt^\star,\bar\ttt^--\ttt^\star)$ is an all-zero vector. This is because for any $\aaa\in\{0,1\}^K$, the corresponding element in $T_{\rr^\star,\aaa}(\bar \QQ,\bar\ttt^+-\ttt^\star,\bar\ttt^--\ttt^\star)$ contains a factor $F_{\aaa} = (\bar \theta_{h,\aaa} - \bar \theta^+_{h})\prod_{j\in\mt J}(\bar \theta_{j,\aaa} - \bar \theta^-_{j})$. While this factor $F_{\aaa}\neq 0$ only if $\bar\theta_{h,\aaa}=\bar \theta^-_h$ and $\bar \theta_{j,\aaa} = \bar \theta^+_{j}$ for all $j\in\mt J$, which happens if and only if $\aaa\nsucceq\bar \bq_{h}$ and $\aaa\succeq\bar \bq_{j}$  for all $j\in\mt J$, which is impossible because $\vee_{j\in\mt J}\bar \bq_{j}\succeq\bar  \bq_{h}$ by our assumption. So the claim $T_{\rr^\star,\bcolon}(\bar \QQ,\bar\ttt^+-\ttt^\star,\bar\ttt^--\ttt^\star)=\zero$ is proved, and further $T_{\rr^\star,\bcolon}(\bar \QQ,\bar\ttt^+-\ttt^\star,\bar\ttt^--\ttt^\star)\bar\pp=0$. 
Equality \eqref{eq-tra} becomes
$$
T_{\rr^\star,\bcolon}( Q,\ttt^+-\ttt^\star,\ttt^--\ttt^\star)\bar\pp = T_{\rr^\star,\bcolon}(\bar\QQ,\bar\ttt^+-\ttt^\star,\bar\ttt^--\ttt^\star)\bar\pp=0,$$
which leads to 
\begin{align*}
0=T_{\rr^\star,\bcolon}( \QQ,\ttt^+-\ttt^\star,\ttt^--\ttt^\star)\pp
=
p_{\one}(\theta^+_{h} - \bar \theta^+_{h})\prod_{j\in\mt J} (\theta^+_{j} - \bar \theta^-_{j})\prod_{m>K}(\theta^+_m-\theta^-_m),
\end{align*}
which is because for any $\aaa\neq\one$, we must have $\aaa\nsucceq\bq_m$ for some $m>K$ under Condition $B$, and hence the element $T_{\rr^\star,\aaa}( \QQ,\ttt^+-\ttt^\star,\ttt^--\ttt^\star)$ contains a factor $(\theta^-_m - \theta^-_m)=0$. 
Since $\theta^+_m-\theta^-_m>0$ for $m>K$ and $\theta^+_{j} - \bar \theta^-_{j}\neq 0$, we obtain $\theta^+_{h} = \bar \theta^+_{h}$. 

We remark here that $\theta^+_{h} = \bar \theta^+_{h}$ also implies $\bar\bq_h\neq\zero$, because otherwise we would have $\bar\theta_h=\bar \theta^+_h=\theta^+_h$, which contradicts the $\theta^-_h<\bar\theta_h<\theta^+_h$ proved before the current Step 1. 
This indicates the $\bar\QQ_{1:K,\bcolon}$ can not contain any all-zero row vector, because otherwise $\bar\bq_j\succeq\bar\bq_h$ for the all-zero row vector $\bar\bq_h$, which we showed is impossible.

Consider the item set $S$ in the lemma that satisfies $S\subseteq\{K+1,\ldots,J\}$ such that $\max_{m\in S}q^{\dense}_{m,h}=0$ and $\max_{m\in S}q^{\dense}_{m,j}=1$ for all $j\in\mt J$. Define
$$
\ttt^\star = \bar \theta^+_h\ee_{h} + \sum_{j\in\mt J}\bar \theta^-_j\ee_{j} +\sum_{m\in S} \theta^-_m\ee_{m}.
$$
Note that $\theta^+_h = \bar \theta^+_h$.
The RHS of \eqref{eq-tra} is zero, and so is the LHS of it. The row vector $T_{\rr^\star,\bcolon}(\QQ,\ttt^+-\ttt^\star,\ttt^--\ttt^\star)$ has the following property
\begin{align*}
&T_{\rr^\star,\aaa}(\QQ,\ttt^+-\ttt^\star,\ttt^--\ttt^\star)\\
=
&\begin{cases}
(\theta^-_{h} - \bar \theta^+_{h})
\underset{j\in\mt J} \prod(\theta^+_j-\bar \theta^-_j)
\underset{m\in S} \prod(\theta^+_m-\theta^-_m), 
& \aaa\nsucceq\bq_h,\,\aaa\succeq \bq_{\mt J},\, \aaa\succeq\bq_S;\\
0, & \text{otherwise}.
\end{cases}
\end{align*}
Note that 
\begin{align*}
	&~\{\aaa\in\{0,1\}^K:\,\aaa\nsucceq \bq^{\dense}_h,\,\aaa\succeq \bq^{\dense}_{\mathcal J}, \,\aaa\succeq\bq^{\dense}_S\}\\
	=&~\{\aaa:\,\aaa\nsucceq\bq^{\dense}_h,\,\aaa\succeq\bq^{\dense}_S\}
	=\mt A_1 \neq\varnothing,
\end{align*}
because $q_{S,\ell}=0$ and $q_{S,k}=1$ hold. Furthermore, we claim that $\sum_{\aaa\in\ma_1} p_{\aaa}>0$ under the specified attribute hierarchy. 
This is because Lemma \ref{lem-basic} ensures $\bq^{\dense}_m\in\ma$ for the considered $m>K$, and hence the attribute pattern $\aaa^\star=\bq^{\dense}_m$ belongs to the set $\ma_1$ and also belongs to the set $\ma$. This ensures $p_{\aaa^\star}>0$ and $\sum_{\aaa\in\ma_1} p_{\aaa}\geq p_{\aaa^\star}>0$.
Therefore we have
\begin{align*}
&~T_{\rr^\star,\bcolon}(\QQ,\ttt^+-\ttt^\star,\ttt^--\ttt^\star)\pp\\
=
&~(\theta^-_{\ell} - \bar \theta^+_{\ell})(\theta^+_k-\bar \theta^-_k)(\theta^+_m-\theta^-_m)\left(\sum_{\aaa\in\mt A_1}p_{\aaa} \right)=0,
\end{align*}
which leads to a contradiction since $\theta^-_\ell- \bar \theta^+_\ell\neq 0$, $\theta^+_k-\bar \theta^-_k\neq 0$, $\theta^+_m-\theta^-_m\neq 0$ and $\sum_{\aaa\in\mt A_1}p_{\aaa}>0$, i.e., every factor in the above product is nonzero. This completes the proof of Lemma \ref{lem-order}.
\end{proof} 

\begin{proof}[{\bf Proof of Lemma \ref{lem-ch}}]
Define
$$
\ttt^\star = \sum_{h\in\mathcal K}\bar \theta^-_h\ee_h + \bar \theta^+_m\ee_m + \sum_{l>K:\, l\not\in \mathcal K\cup \{m\}}\theta^-_l\ee_l,
$$
then $ \bar  T_{\rr^\star,\aaa}$ contains a factor $\bar f_{\aaa}:=\prod_{h\in\mathcal K}(\bar\theta_{h,\aaa}-\bar \theta^-_h)(\bar \theta_{m,\aaa} - \bar \theta^+_m)$ because of the first two terms in the above display. The $\bar f_{\aaa}\neq 0$ only if $\aaa\succeq\vee_{h\in\mathcal K}\,\bar \bq_h$ and $\aaa\nsucceq\bar\bq_m$.
However, since  $\vee_{h\in\mathcal K}\,\bar \bq_h\succeq \bar\bq_m$, such $\aaa$ does not exist and $\bar f_{\aaa}=0$ for all $\aaa\in\{0,1\}^K$.
Therefore $ \bar  T_{\rr^\star,\bcolon}=\zero$ and $ \bar  T_{\rr^\star,\bcolon}\bar\pp=0$, so the RHS of \eqref{eq-tra} is zero.
Hence the LHS of \eqref{eq-tra} is also zero.
Condition $B$ implies $\sum_{j=K+1}^J q_{j,k}\geq 2$ for all attribute $k$.
Under Condition $B$ and the condition $\big|(\mathcal K\cup \{m\})\cap \{K+1,\ldots,J\}\big|\leq 1$, the attributes required by the items in the set $\{l>K: l\not\in \mathcal K\cup \{m\}\}$ must cover all the $K$ attributes.
because of the term $\sum_{l>K:\atop l\not\in \mathcal K\cup \{m\}} \theta^-_l\ee_l$ in the defined $\ttt^\star$, we have $T_{\rr^\star,\aaa}\neq 0$ only if $\aaa=\one_K$. So
\begin{align*}
    0=&~\text{RHS of }\eqref{eq-tra}=\text{LHS of }\eqref{eq-tra}\\
     =&~ \prod_{h\in\mathcal K}(\theta^+_h-\bar \theta^-_h)(\theta^+_m-\bar \theta^+_m)\prod_{l>K:\, l\not\in \mathcal K\cup \{m\}} (\theta^+_l-\theta^-_l)p_{\one_K},
\end{align*}
which implies $\theta^+_m-\bar \theta^+_m=0$ since any other factor in the above display is nonzero. This completes the proof of the lemma.
\end{proof}

\end{document}